\theoremstyle{plain}
\newtheorem{thm}{Theorem}[section]
\newtheorem{defn}{Definition}[section]
\newtheorem{lem}{Lemma}[section]
\newtheorem{rem}{Remark}[section]
\newtheorem{assump}{Assumption}[section]
\newtheorem{problem}{Problem}[section]
\newcommand{\R}{\mathbb{R}}
\newcommand{\RR}{\mathbb{R}}
\newcommand{\bX}{\mathds{X}}
\newcommand{\Lip}{\mathrm{Lip}}
\newcommand{\Law}{\mathrm{Law}}
\newcommand{\eps}{\epsilon}
\newcommand{\ud}{\,\mathrm{d}}
\newcommand{\EE}{\mathbb{E}}
\newcommand{\PP}{\mathbb{P}}
\newcommand{\mc}[1]{\mathcal{#1}}
\icmltitlerunning{Directed Chain Generative Adversarial Networks}
\begin{document}

\twocolumn[
\icmltitle{Directed Chain Generative Adversarial Networks}



\icmlsetsymbol{equal}{*}

\begin{icmlauthorlist}
\icmlauthor{Ming Min}{equal,pstat}
\icmlauthor{Ruimeng Hu}{equal,pstat,math}
\icmlauthor{Tomoyuki Ichiba}{pstat}
\end{icmlauthorlist}

\icmlaffiliation{pstat}{Department of Statistics and Applied Probability, University of California, Santa Barbara, CA 93106-3110, USA.}
\icmlaffiliation{math}{Department of Mathematics, University of California, Santa Barbara, CA 93106-3080, USA.}

\icmlcorrespondingauthor{Ming Min}{m\_min@pstat.ucsb.edu}

\icmlkeywords{Generative adversarial networks, Directed chain stochastic differential equations, Multimodal time series}

\vskip 0.3in
]



\printAffiliationsAndNotice{\icmlEqualContribution} 

\begin{abstract}
Real-world data can be multimodal distributed, e.g., data describing the opinion divergence in a community, the interspike interval distribution of neurons, and the oscillators’ natural frequencies. Generating multimodal distributed real-world data has become a challenge to existing generative adversarial networks (GANs). 
For example, it is often observed that Neural SDEs have only demonstrated successful performance mainly in generating unimodal time series datasets.
In this paper, we propose a novel time series generator, named directed chain GANs (DC-GANs), which inserts a time series dataset (called a neighborhood process of the directed chain or input) into the drift and diffusion coefficients of the directed chain SDEs with distributional constraints. DC-GANs can generate new time series of the same distribution as the neighborhood process, and the neighborhood process will provide the key step in learning and generating multimodal distributed time series. The proposed DC-GANs are examined on four datasets, including two stochastic models from social sciences and computational neuroscience, and two real-world datasets on stock prices and energy consumption. To our best knowledge, DC-GANs are the first work that can generate multimodal time series data and consistently outperforms state-of-the-art benchmarks with respect to measures of distribution, data similarity, and predictive ability.
\end{abstract}


\section{INTRODUCTION}

Generative models are important to overcome the limitation of data scarcity, privacy, and costs. In particular, medical data are not easy to get, use or share, due to privacy; and financial time series data are inadequate due to their nonstationarity nature. Times-series generative models, instead of seeking to learn the governing equations
from real data, aim to discover and learn data automatically, and output new data that plausibly can be drawn from the original dataset. Some existing infinite-dimensional generative adversarial networks (GANs) (e.g., \citet{kidger2021neural,li2022tts}) showed successful performance in unimodal time series datasets.
However, many real-world phenomena are multimodal distributed, e.g., data describing the opinion divergence in a community \citep{tsang2014opinion}, the interspike interval distribution \citep{sharma2018suppression}, and the oscillators’ natural frequencies \citep{smith2019chaos}. All these bring the necessity of developing new generative models for multimodal time series data.



In this paper, we develop a novel time-series generator, named \emph{directed chain GANs} (DC-GANs), motivated by the formulation of DC-SDEs \citep{detering2020directed}. The drift and diffusion coefficients in DC-SDEs depend on another stochastic process, which we call the neighborhood process, with distribution required to be the same as the SDEs' distribution. Different from other GANs, which only use real data in discriminators, our proposed algorithm naturally takes the dataset as the neighborhood process, giving generators access to data information. This feature enables our model to outperform the state-of-the-art methods on many datasets, particularly for the situation of multimodal time-series data.

\paragraph{Contribution.} We propose a generator for multimodal distributed time series based on DC-SDEs (cf. Definition~\ref{def:dc_sde}), and prove that our model can handle any distribution that Neural SDEs are capable of generating (see Theorem~\ref{thm:flexibility}). To train the generator, we propose to use a combination of two types of discriminators: Sig-WGAN \citep{ni2021sig} and Neural CDEs \citep{kidger2020neural}.
    
    We notice that data generated immediately from DC-GANs can be correlated, and propose an easy solution by walking along the directed chain in the path space for further steps (see Theorem~\ref{thm:dependence_decay}). Combining branching the chain with different Brownian noises  enables our model to generate unlimited independent fake data.
    
    We test our algorithms in four different experiments and show that DC-GANs provide the best performance compared to existing popular models, including SigWGAN \citep{ni2021sig}, CTFP \citep{deng2020modeling}, Neural SDEs \citep{kidger2021neural}, TimeGAN \citep{yoon2019time} and Transformer-based generator TTS-GAN \citep{li2022tts}.

\paragraph{Related Literature.}
Neural ordinary differential equations (Neural ODEs), introduced by \citet{chen2018neural}, use neural networks to parameterize the vector fields of ODEs and bring a powerful tool for learning time series data. Later, significant effort has been put into improving Neural ODEs, e.g., \citet{quaglino2019snode,zhang2019anodev2,massaroli2020dissecting,hanshu2019robustness}. In fact, incorporating mathematical concepts into the Neural ODEs framework can provide the capability of analyzing and justifying its validity, leading to a deeper understanding of the framework itself. For example, \citet{li2020scalable} and \citet{tzen2019neural} generalized the idea to neural stochastic differential equations (Neural SDEs), providing adjoint equations for efficient training. By integrating rough path theory \citep{lyons2007differential}, \citet{kidger2020neural} proposed neural controlled differential equations (Neural CDEs) {and \citet{morrill2021neural} proposed neural rough differential equations} for modeling time series. Other examples integrating profound mathematical concepts include using higher order kernel mean embeddings to capture information filtration \citep{salvi2021higher}, and solving high dimensional partial differential equations through backward stochastic differential equations \citep{han2018solving}, to name a few.

The closely related model to ours is the Neural SDEs by \citet{kidger2021neural}, which uses the Wasserstein GAN method to train stochastic diffusion evolving in a hidden space and gains great success in simulating time series data. Other successful GANs models for time-series data include \citet{cuchiero2020generative,tzen2019theoretical,deng2020modeling,kidger2021neural,li2022tts}; see \citet{brophy2022generative} for a recent review. Note that we find in the numerical experiments that the performances of Neural SDEs are limited in simulating multimodal distributed time series, e.g., as shown in Figure~\ref{fig:opinion} from the stochastic opinion dynamics (Example 1 in Section~\ref{sec:OpinionDyn}).


The directed chain is one of the simplest structures in random graph theory, where each node on the graph represents a stochastic process and has interactions only with its neighbor nodes (Figure~\ref{fig:chain}). To our best knowledge, \citet{detering2020directed} initiated the study of the SDE system on the directed chains, followed by \citet{feng2020linear, feng2020linear0} for the analysis of stochastic differential games on such chains with (deterministic and random) interactions. 
Later on, more complicated graph structures are studied beyond directed chains. For example, \citet{lacker2021locally} analyzed particle behaviors where the interaction only happens between neighborhoods in an undirected graph, and proved Markov random fields property and constructed Gibbs measure on path space when interactions appear only in drift; \citet{lacker2022case} considered stochastic differential games on transitive graphs; \citet{carmona2022stochastic} studied games on a graphon which has infinitely many nodes. Despite numerous extensions, {we find that the directed chain structure, although simple but rich enough for generating multimodal time series.} 

From another viewpoint, DC-SDEs can be understood as the reverse direction of mimicking theorems \citep{gyongy1986mimicking}. The idea of ``mimicking" is that for a general SDE (even with path-dependence features), one can construct a Markovian one to mimic its marginal distribution; see \citet{brunick2013mimicking} for details on mimicking aspects of It\^{o} processes including the distributions of running maxima and running integrals. DC-SDEs work in the reverse direction: they can produce marginal distributions that are generated by Markovian SDEs (see Theorem~\ref{thm:flexibility} for a detailed statement). The benefit of using DC-SDEs, in particular in machine learning, is to have a more vital fitting ability by embedding data into a slightly more complicated system.


\begin{figure*}[ht]
\centering
\subfloat[$t = 0.1$]{\includegraphics[width=0.15\textwidth]{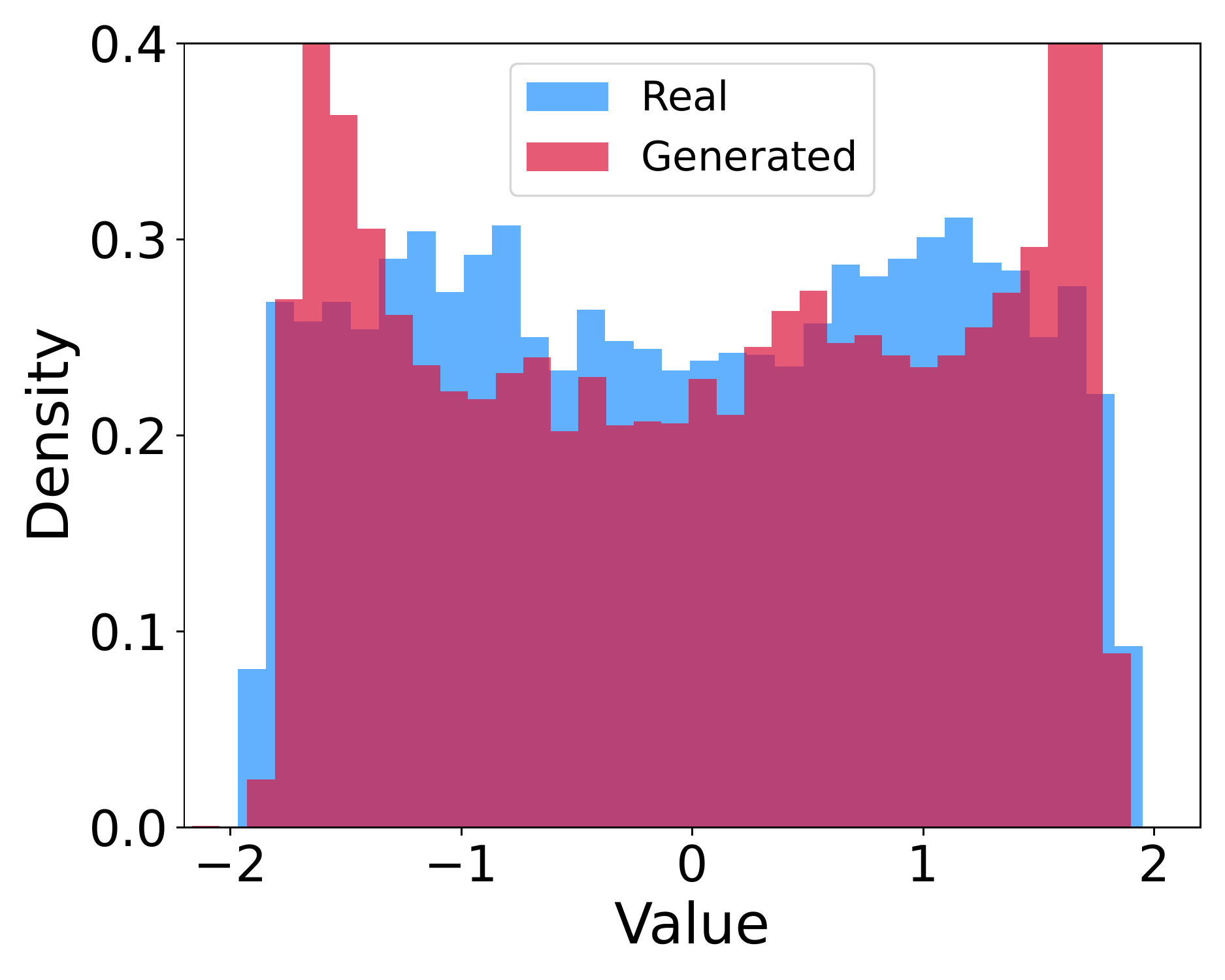}} 
\subfloat[$t = 0.3$]{\includegraphics[width=0.15\textwidth]{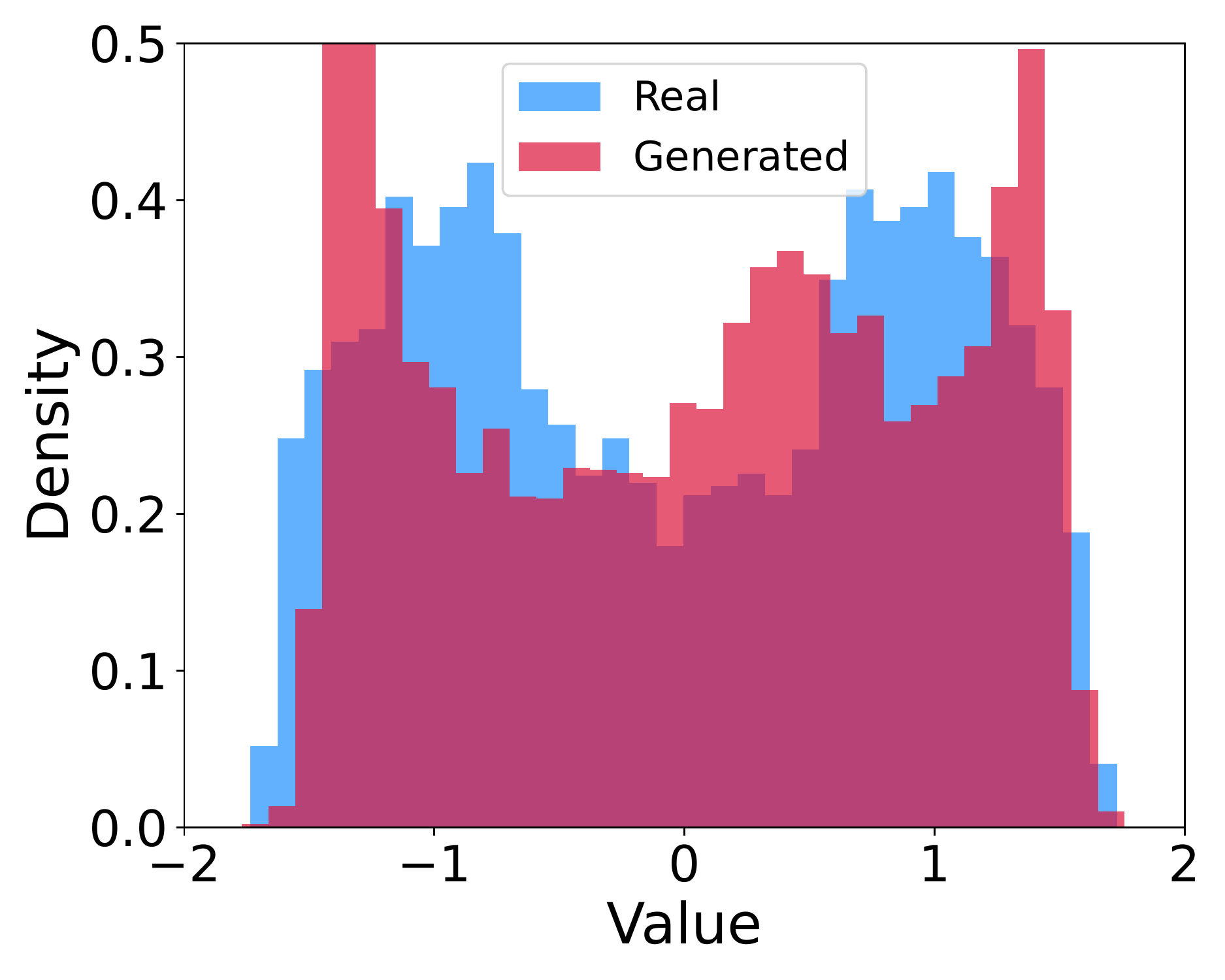}}
\subfloat[$t = 0.5$]{\includegraphics[width=0.15\textwidth]{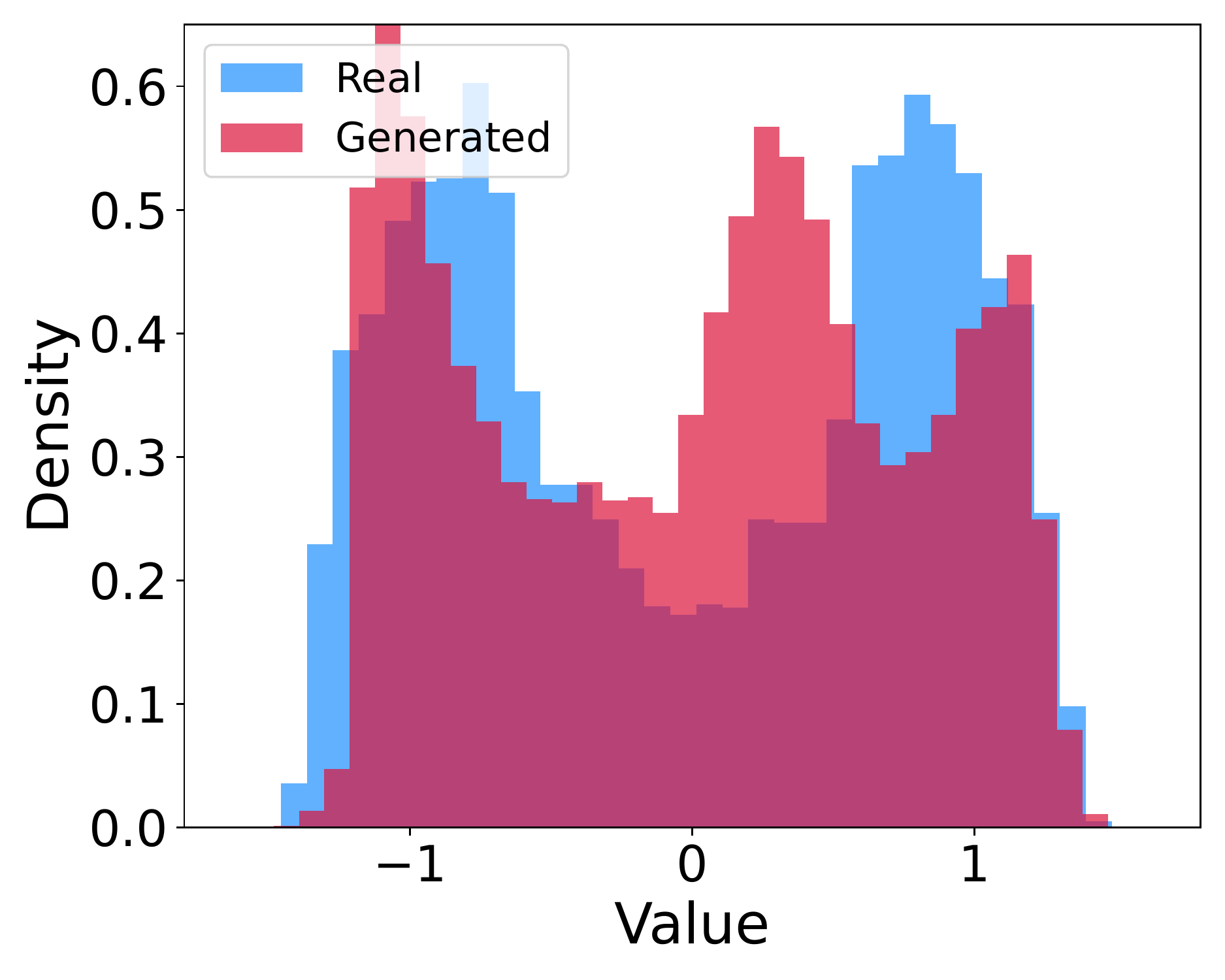}} 
\subfloat[$t=0.7$]{\includegraphics[width=0.15\textwidth]{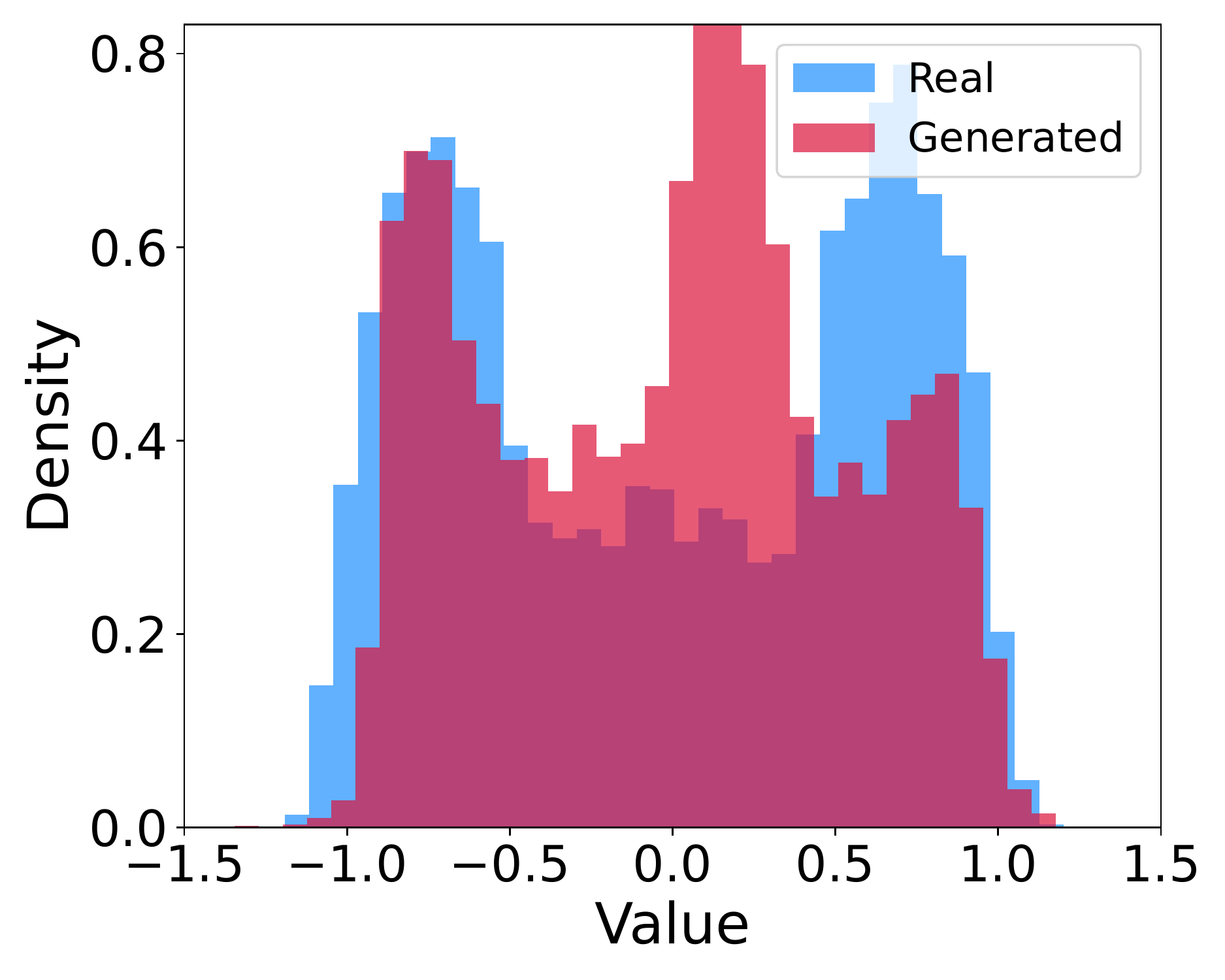}}
\subfloat[$t=0.9$]{\includegraphics[width=0.15\textwidth]{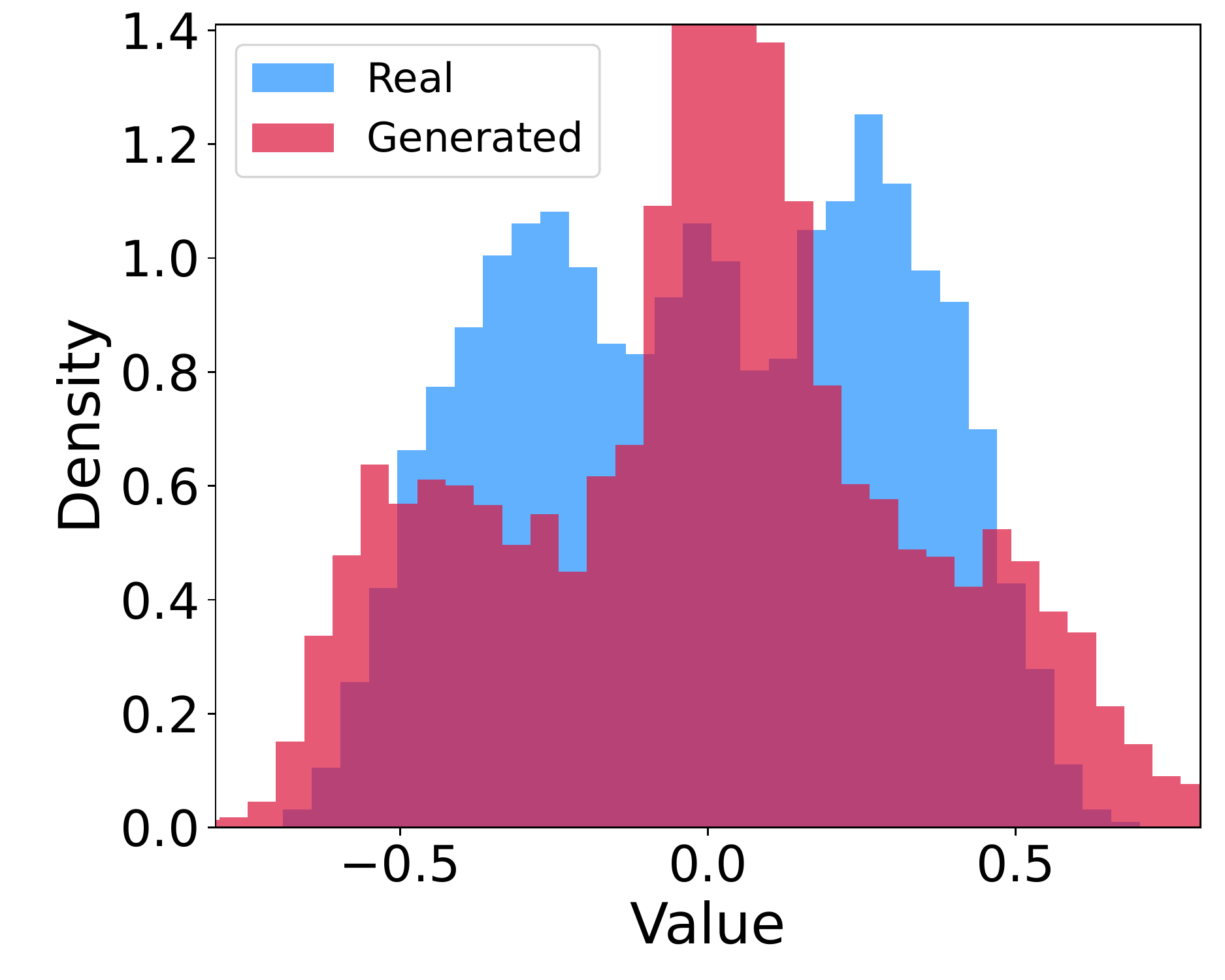}} 
\subfloat[$t=1.0$]{\includegraphics[width=0.15\textwidth]{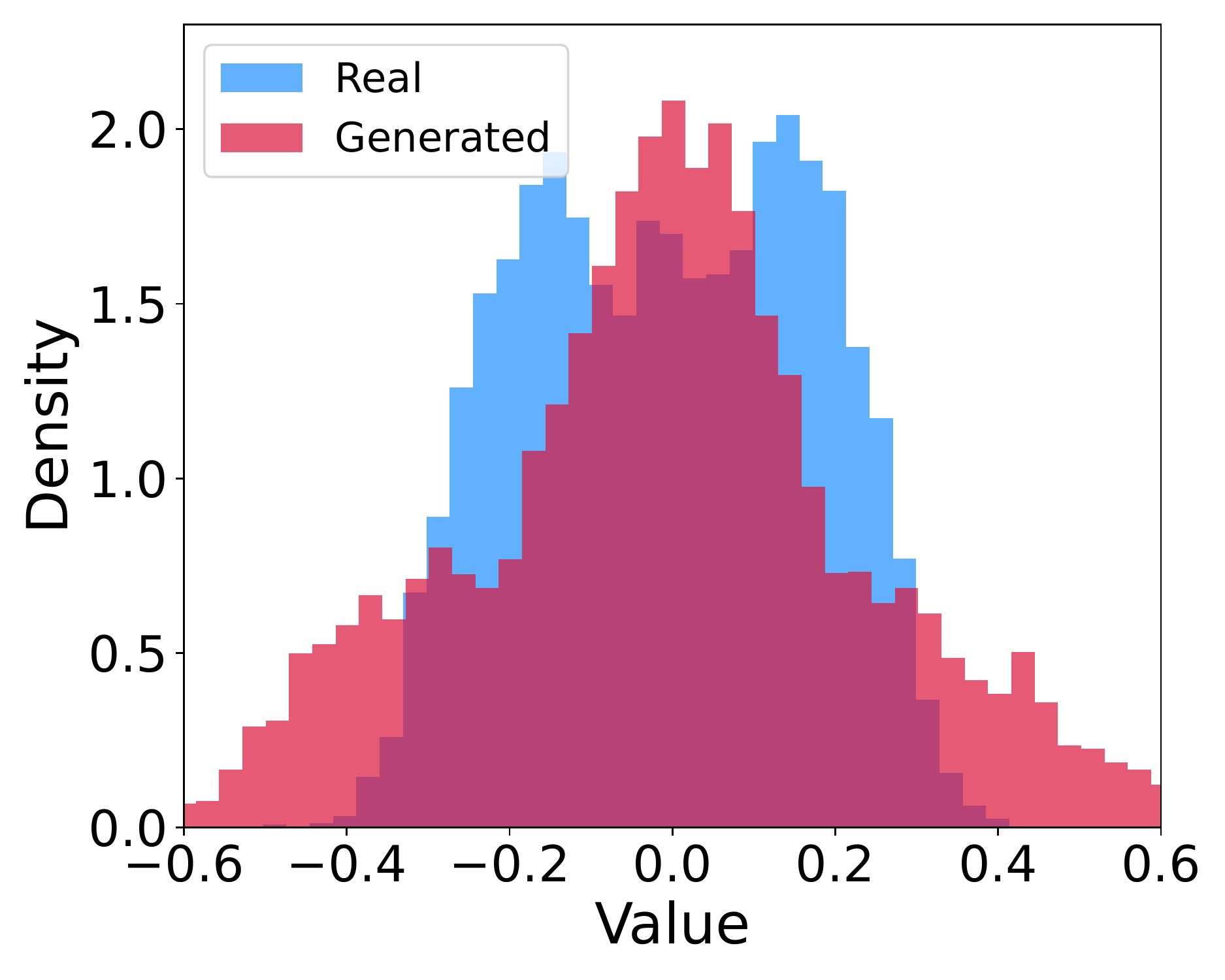}} \\
\subfloat[$t=0.1$]{\includegraphics[width=0.15\textwidth]{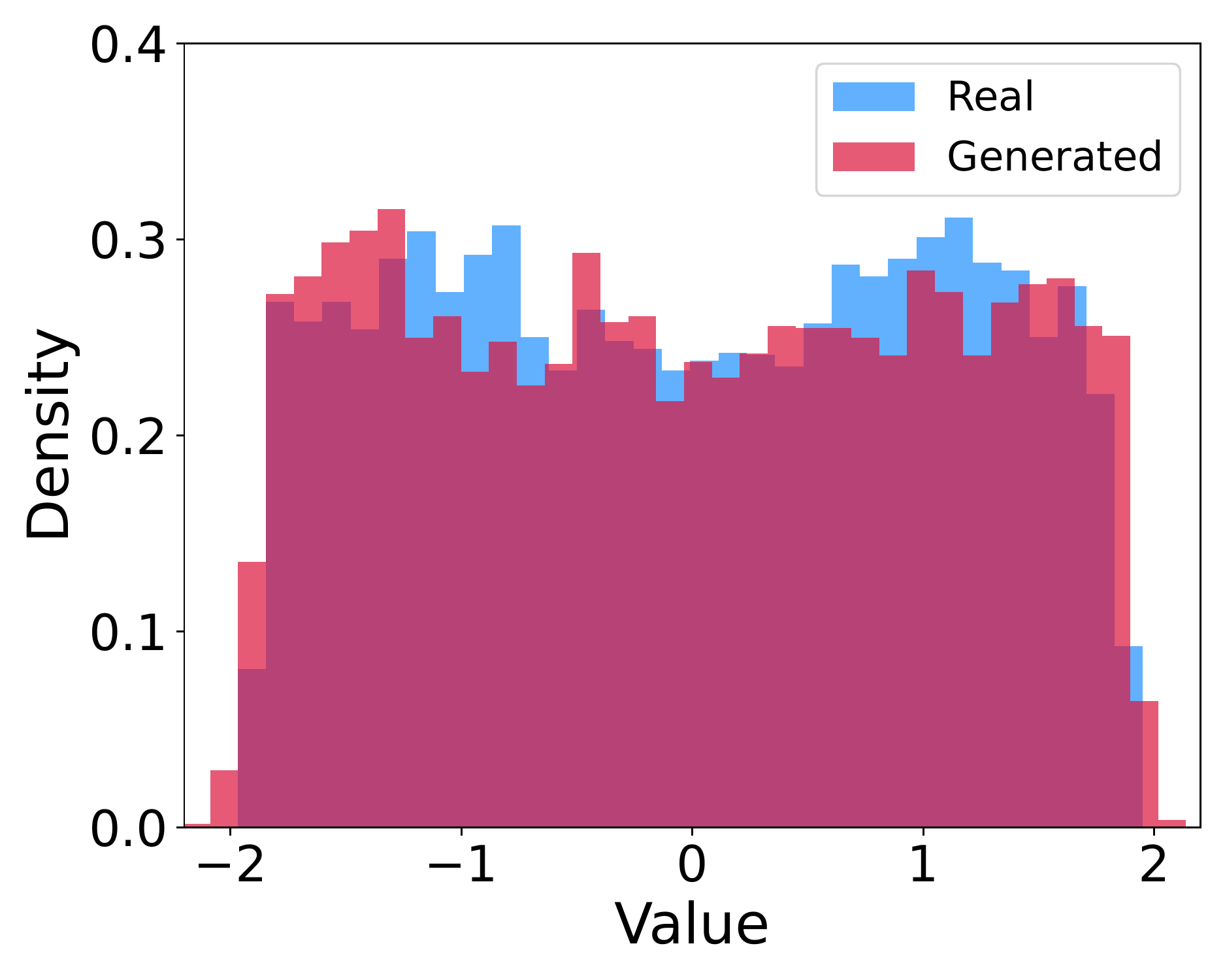}} 
\subfloat[$t=0.3$]{\includegraphics[width=0.15\textwidth]{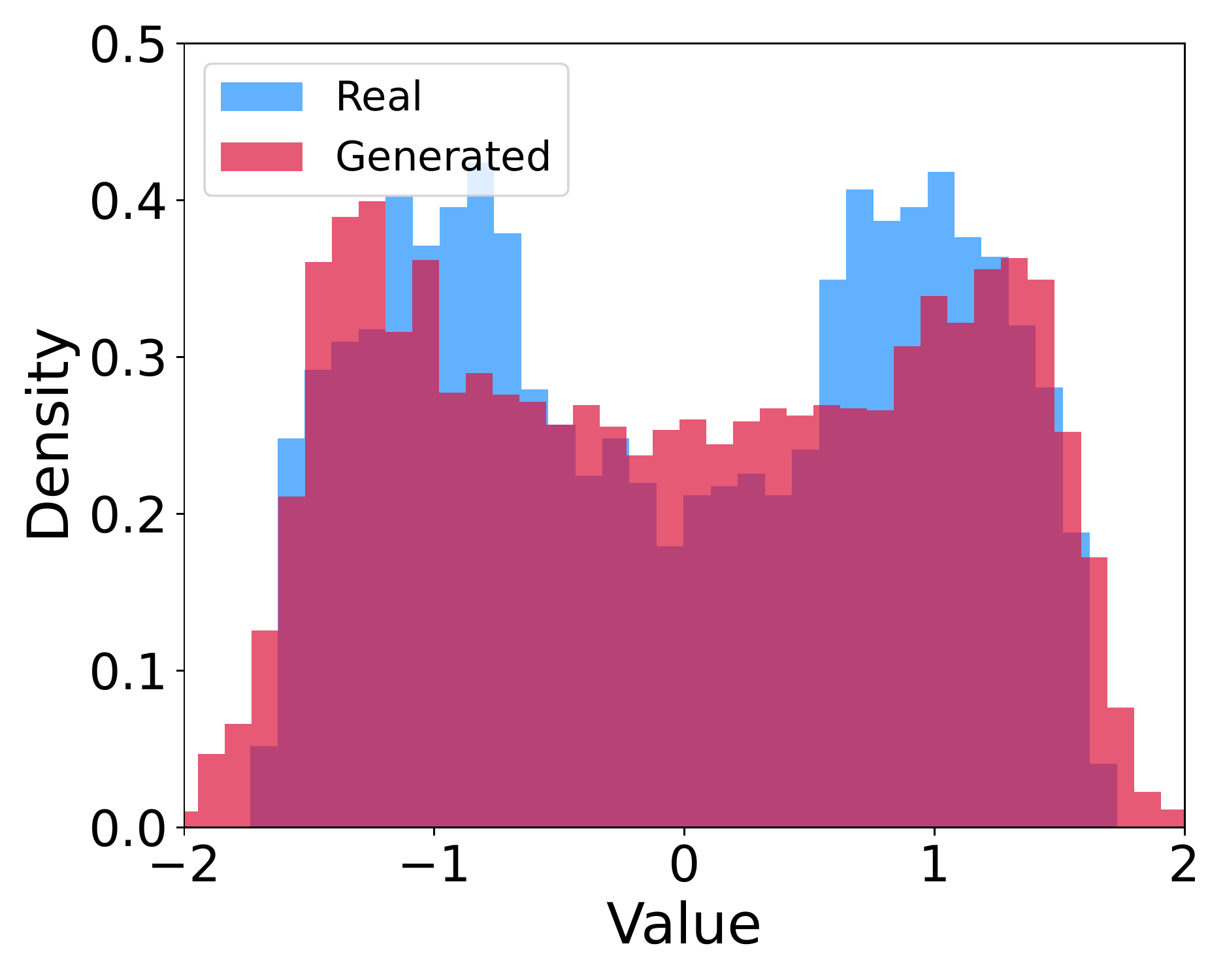}}
\subfloat[$t=0.5$]{\includegraphics[width=0.15\textwidth]{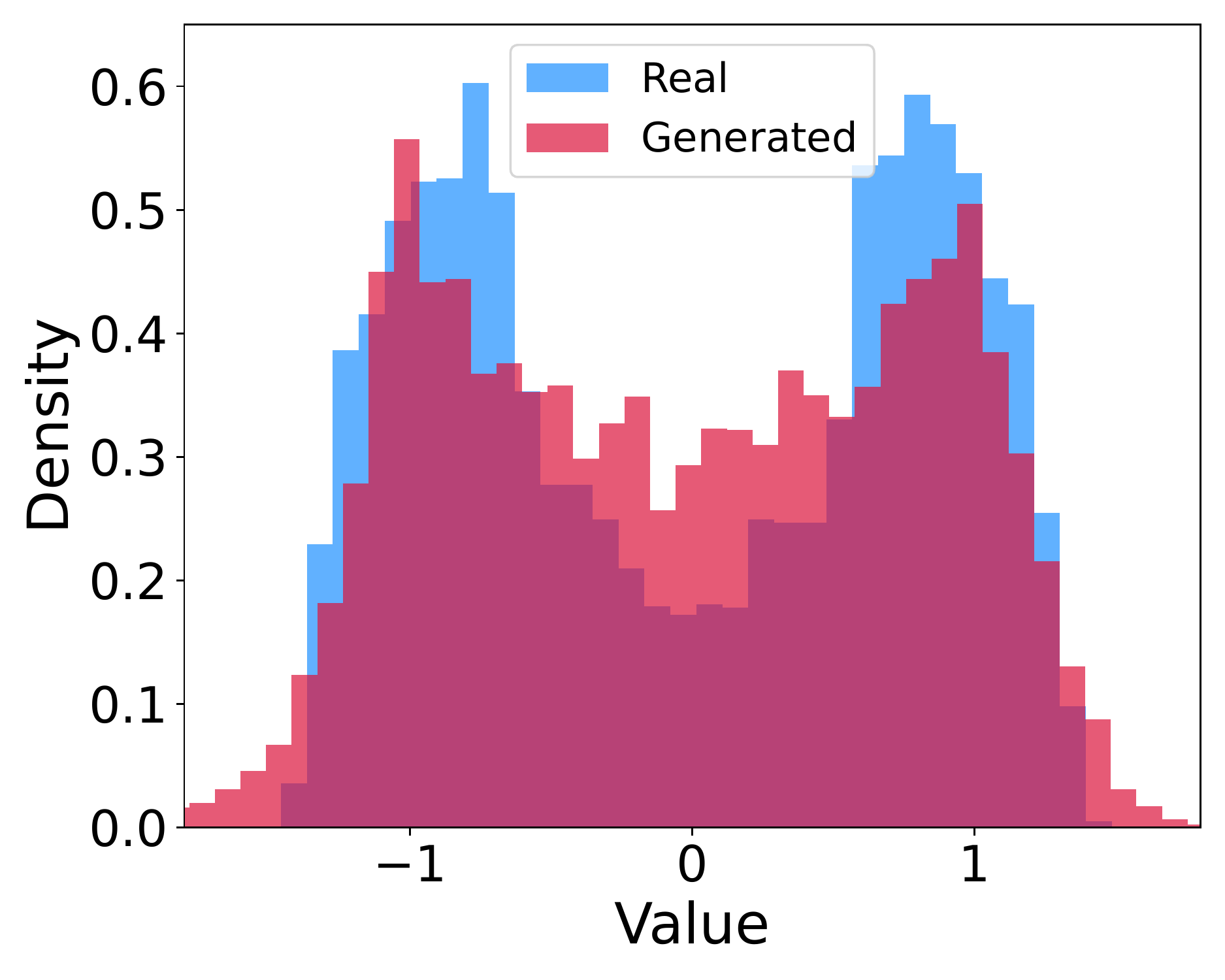}} 
\subfloat[$t=0.7$]{\includegraphics[width=0.15\textwidth]{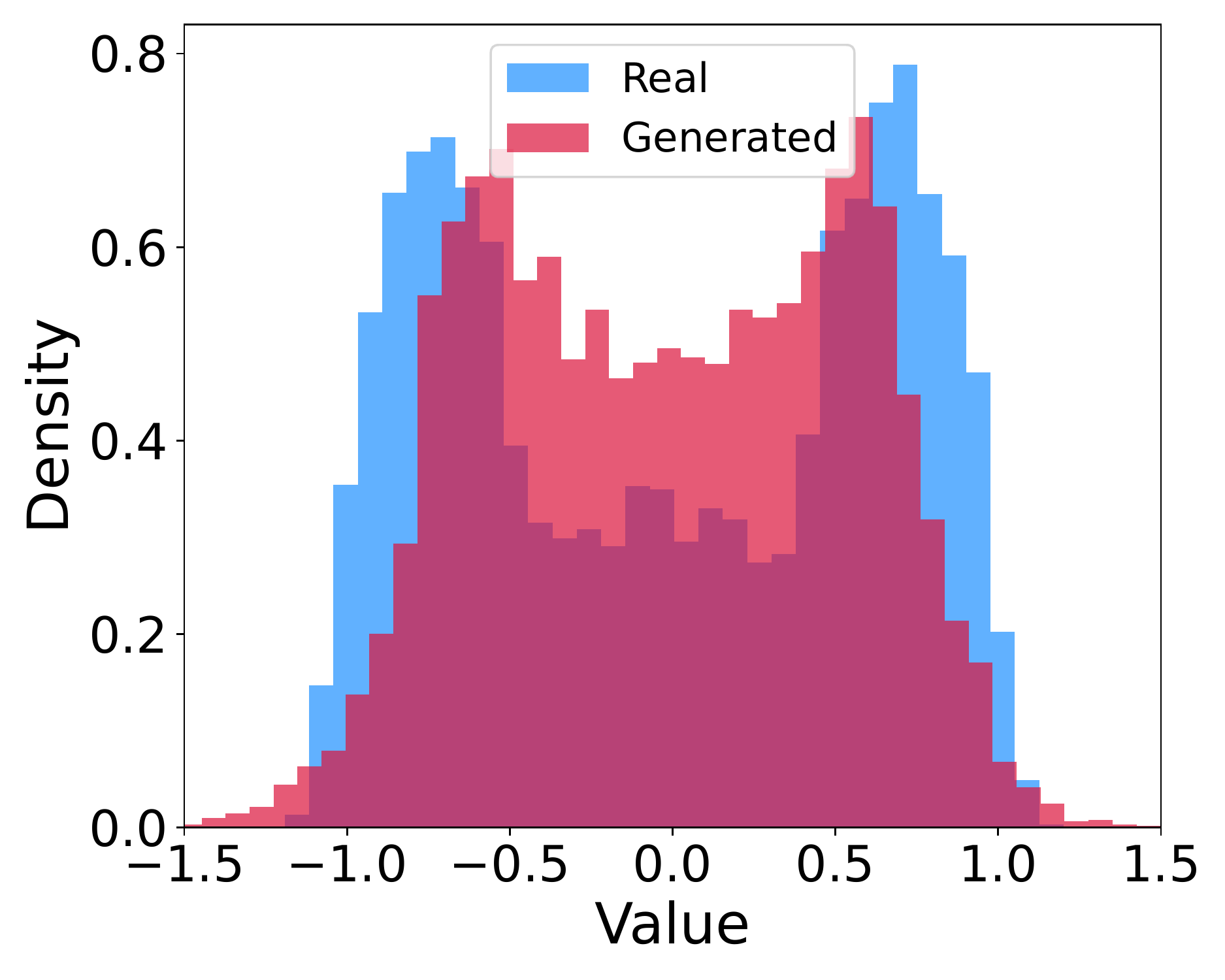}}
\subfloat[$t=0.9$]{\includegraphics[width=0.15\textwidth]{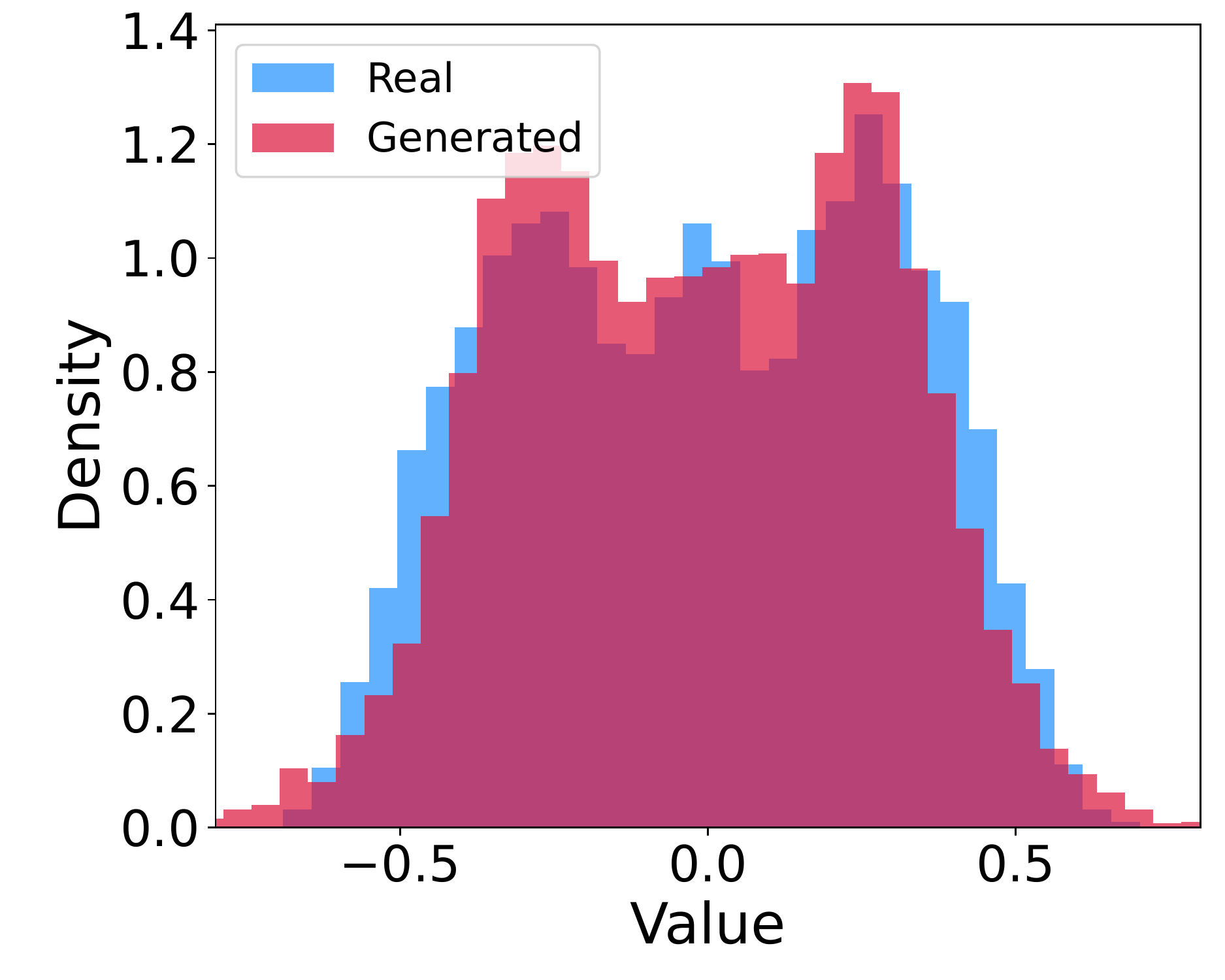}} 
\subfloat[$t=1.0$]{\includegraphics[width=0.15\textwidth]{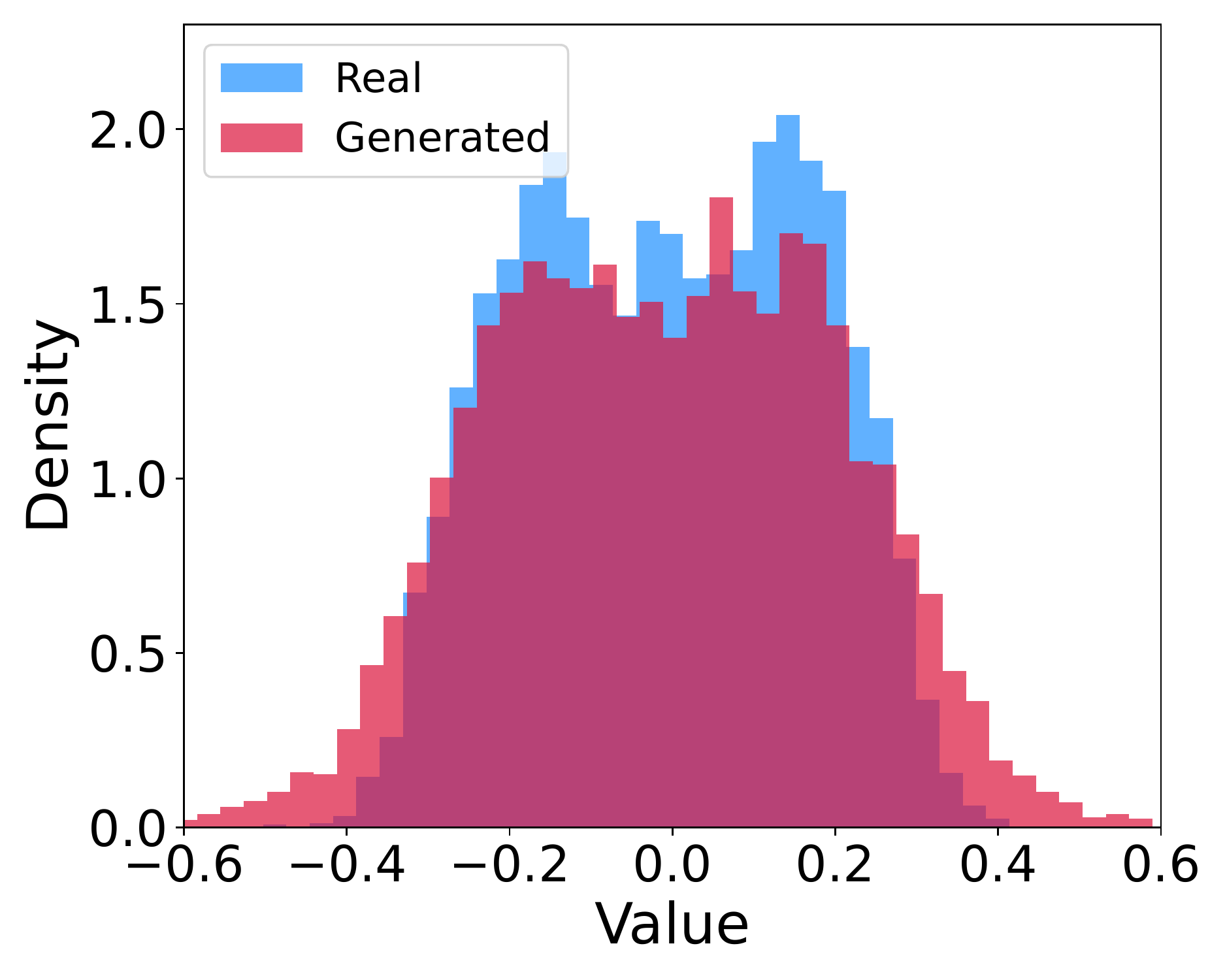}}
\vspace{-.1in}
\caption{Marginal distributions of real data (blue) and generated data (red) from Example 1 (Stochastic opinion dynamics) at $t\in\{0.1, 0.3, 0.5, 0.7, 0.9, 1\}$ in Section~\ref{sec:OpinionDyn}. Figures (a)--(f) are generated by Neural SDEs, and Figures (g)--(l) are generated by DC-GANs. One can see from Figures (e) and (f) that Neural SDEs fail to capture the bimodal distribution.}
\label{fig:opinion}

\end{figure*}

\section{DIRECTED CHAIN SDEs AND SIGNATURES}
In this section, we introduce two mathematical concepts that serve as the backbones of our algorithm: directed chain SDEs and signatures. {In Section~\ref{sec:dc-sde}, we identify the central issue of naively generating time series from true data using DC-SDEs: the non-independence of the true data and fake data (Problem~\ref{prob:independence}). Then we overcome the non-independence issue by Decoorelating and Branching Phase in Section~\ref{sec:generator}, and provide theoretical guarantees for this procedure (Theorem~\ref{thm:dependence_decay}).}

In the sequel, we shall use $X_s, X_t$ to denote the state of $X$ at time $s$ and $t$, respectively. With no subscript, e.g., by $X$, we mean the whole path from $t=0$ to $T$.

\subsection{Directed Chain SDEs (DC-SDEs)}\label{sec:dc-sde}


{The DC-SDEs are the limit of a system of $n$-coupled SDEs interacting homogeneously on a directed chain when $n$ goes to infinity. Below we will focus on DC-SDEs and defer the introduction of this limiting process to Appendix~\ref{app:prelim}.}
Under the general setup, DC-SDEs can be of McKean-Vlasov type where the coefficients have distributions as inputs, corresponding to the $n$-coupled system having mean-field interaction. In our proposed generator, it is sufficient to use the simple case mentioned above, DC-SDE without the mean-field interaction, as in the following definition. 

\begin{defn}[DC-SDEs]
\label{def:dc_sde}
Fix a filtered probability space $(\Omega, \mc{F}, (\mc{F}_t)_{t \ge 0}, \PP)$ and a finite time horizon $[0, T]$.
Let $(X, \tilde{X})$ with $X, \tilde{X}\in L^2(\Omega\times [0,T], \RR^N)$ be a pair of square-integrable stochastic processes satisfying
\begin{equation} \label{eq:dc_sde}
    X_t = \xi \hspace{-2pt} + \hspace{-3pt}\int_0^t V_0(s, X_s, \tilde{X}_s)\ud s +  \hspace{-3pt}\int_0^t V_1(s, X_s, \tilde{X}_s)\ud B_s,
\end{equation}
for $t\in [0,T]$, with the {distributional constraint}
\begin{equation} \label{eq:dc_sde_a}
    \Law(X_t, 0\le t\le T)= \Law(\tilde{X}_t, 0\le t\le T),
\end{equation}
where {Law($\cdot$)} {stands for the distribution}, $V_0\in \RR^N$ and $V_1\in\RR^{N\times d}$ are smooth coefficients satisfying Lipschitz and linear growth conditions, $B$ is a standard $d$-dimensional Brownian motion, and $X_0:=\xi$, $\tilde{X}$ and $B$ are assumed to be independent.
\end{defn}


The existence of the solution to  \eqref{eq:dc_sde} and the weak uniqueness in the sense of distribution have been proved under the Lipschitz and linear growth assumptions on the coefficients in \citet{detering2020directed} for a simple case, and in \citet{ichiba2022smoothness} for a more general case. Moreover, with the smoothness of the solution under certain additional conditions posed on the coefficients (cf. \citet{ichiba2022smoothness}), we can derive a partial differential equation (PDE) for the marginal densities of the solution. 
Then, the associated PDEs lead to the following theorem: DC-SDEs have at least the same amount of flexibility as Neural SDEs. 
\begin{thm}
\label{thm:flexibility}
Under proper assumptions, for any $Y$ that satisfies a system of Markovian SDEs on $[0,T]$, there exists a unique solution to the DC-SDE \eqref{eq:dc_sde} with constraints \eqref{eq:dc_sde_a}, some $V_0$ and non-degenerate coefficients $V_1$, such that they have the same marginal distributions for all $t\in[0,T]$. 
Here by degenerate, we mean that $V_i(t, x, \tilde{x}):=V_i(t, x),\, i\in\{0,1\}$, i.e., the coefficients have no dependence on neighborhood nodes at all. 
\end{thm}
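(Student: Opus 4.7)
The plan is to construct $V_0$ and $V_1$ explicitly so that the marginal Fokker--Planck equation satisfied by $X$ in \eqref{eq:dc_sde} coincides with that of the target system $Y$. Write the target Markovian system as $dY_t = \mu(t,Y_t)\,dt + \sigma(t,Y_t)\,dW_t$; under the stated regularity its one-dimensional marginal density $q(t,\cdot)$ obeys
\[
\partial_t q = -\nabla\cdot(\mu q) + \tfrac{1}{2}\sum_{i,j}\partial^2_{ij}\bigl((\sigma\sigma^\top)_{ij}\, q\bigr).
\]
First, I would apply It\^o's formula to $\phi(X_t)$ for $\phi\in C_c^\infty(\RR^N)$, take expectations against the joint law of $(X_t,\tilde X_t)$, and integrate by parts to obtain the corresponding PDE for the marginal density $p(t,\cdot)$ of $X_t$,
\[
\partial_t p = -\nabla\cdot(\bar V_0\, p) + \tfrac{1}{2}\sum_{i,j}\partial^2_{ij}(\bar A_{ij}\, p),
\]
with $\bar V_0(t,x) = \EE[V_0(t,X_t,\tilde X_t)\mid X_t = x]$ and $\bar A(t,x) = \EE[V_1 V_1^\top(t,X_t,\tilde X_t)\mid X_t = x]$; the smoothness of the relevant joint and marginal densities needed to justify this step is supplied by the hypotheses of \citet{ichiba2022smoothness}.

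My concrete choice is $V_0(t,x,\tilde x) := \mu(t,x)$ and $V_1(t,x,\tilde x) := \sigma(t,x)\,U(\tilde x)$, where $U:\RR^N\to\RR^{d\times d}$ is a smooth, bounded, Lipschitz family of matrices satisfying $U(\tilde x)U(\tilde x)^\top \equiv I_d$ and depending nontrivially on $\tilde x$; for instance $U$ can be a rotation whose angles are smooth bounded functions of the coordinates of $\tilde x$, and in the strictly scalar case $N=1$ we enlarge the Brownian dimension to $d=2$ and take $V_1 = \sigma(t,x)(\cos\tilde x,\,\sin\tilde x)$. Since $V_1 V_1^\top \equiv \sigma\sigma^\top(t,x)$ pointwise, we obtain $\bar V_0 = \mu$ and $\bar A = \sigma\sigma^\top$ regardless of the joint law of $(X_t,\tilde X_t)$, and the two Fokker--Planck equations collapse onto one another. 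With the matched initial condition $\xi \sim Y_0$, uniqueness of Fokker--Planck then yields $p(t,\cdot)\equiv q(t,\cdot)$ on $[0,T]$, which is the desired equality of marginals.

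To satisfy the path-level constraint \eqref{eq:dc_sde_a}, I would take $\tilde X$ on an extended probability space to be an independent copy of $Y$, so that $\Law(\tilde X)=\Law(Y)$. Because $V_1 V_1^\top = \sigma\sigma^\top$, the stochastic integral $\int_0^\cdot V_1(s,X_s,\tilde X_s)\,dB_s$ has quadratic variation $\int_0^\cdot \sigma\sigma^\top(s,X_s)\,ds$, so a L\'evy-characterisation argument on a suitably enriched filtration rewrites it as $\int_0^\cdot \sigma(s,X_s)\,d\hat B_s$ for some Brownian motion $\hat B$; thus $X$ satisfies the same SDE as $Y$ in law, and weak uniqueness upgrades equality of marginals to $\Law(X)=\Law(Y)=\Law(\tilde X)$. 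Existence and weak uniqueness of the DC-SDE itself under the Lipschitz and linear growth assumptions are provided by \citet{detering2020directed,ichiba2022smoothness}, and $V_1$ is non-degenerate because $U$ is nonconstant in $\tilde x$. The main technical obstacle is engineering this $\tilde x$-dependence of $V_1$ without disturbing $V_1 V_1^\top$: the orthogonal-matrix trick handles the generic case cleanly, but the strictly scalar case forces enlarging the Brownian dimension, and one still has to verify that the chosen $U$ meets the precise smoothness and mild ellipticity hypotheses under which the PDE derivation of \citet{ichiba2022smoothness} is valid.
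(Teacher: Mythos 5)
Your proposal is correct, and it reaches the conclusion by a genuinely different construction than the paper's. The paper also matches generators, but it works with the backward Kolmogorov equations $(\partial_t-\mc{L})u=0$ and $(\partial_t-\mc{L}^{dc})v=0$, where $\mc{L}^{dc}$ averages the coefficients over an independent copy $\tilde\xi$ of the initial condition; it then only needs the \emph{averaged} matching conditions $\EE_{\tilde\xi}[V_0(x,\tilde\xi)]=\mu(x)$ and $\EE_{\tilde\xi}[V_1V_1^\top(x,\tilde\xi)]=\sigma\sigma^\top(x)$, realized by additive mean-zero perturbations $V_0=\mu(x)+\varphi_1(\tilde\xi)-\EE[\varphi_1(\tilde\xi)]$ and $V_1V_1^\top=\sigma\sigma^\top(x)+\varphi_2(\tilde\xi)-\EE[\varphi_2(\tilde\xi)]$. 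That buys a larger family of admissible coefficients (both $V_0$ and $V_1$ non-degenerate), at the cost of only concluding equality of marginals and of needing $\varphi_2$ small enough that the perturbed diffusion matrix stays positive semidefinite. Your pointwise matching $V_1=\sigma(t,x)U(\tilde x)$ with $UU^\top=I$ makes the averaging step vacuous, avoids the positivity issue entirely, and — via the L\'evy-characterization step — upgrades the conclusion from equality of marginals to equality of full path laws, which also lets you verify the constraint \eqref{eq:dc_sde_a} explicitly by taking $\tilde X$ to be an independent copy of $Y$ (a verification the paper leaves implicit in its citation of the DC-SDE well-posedness results). Two small remarks: your $V_0=\mu(t,x)$ is degenerate in $\tilde x$, which is permitted by the theorem as stated (only $V_1$ must be non-degenerate), but you could add the paper's mean-zero perturbation to $V_0$ at no cost if both coefficients are meant to depend on the neighborhood; and your Fokker--Planck paragraph conditions on $X_t=x$ and averages over $\tilde X_t$ (a Gy\"ongy-type mimicking average), which differs from the paper's average over $\tilde\xi$ — for your pointwise-matched coefficients the distinction is immaterial, but it would matter if you tried to reuse the paper's additive-perturbation example inside your conditional-expectation framework, since $X_t$ and $\tilde X_t$ are not independent.
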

We defer the proof of Theorem~\ref{thm:flexibility} to Appendix~\ref{app:flexibility}.



Naturally, if $V_0$ and $V_1$ are known (or learned from data), one can take real data paths as $\tilde X$ in \eqref{eq:dc_sde} and straightforwardly generate paths of $X$ that have the same distribution as $\tilde X$ by the constraint \eqref{eq:dc_sde_a}. However, naively implementing this idea will lead to the following potential problems. 


\begin{problem}[Lack of Independence]
\label{prob:independence}
%
    The distribution of the generated sequence crucially depends on the real data; Consequently, to avoid dependence, {a single real path can only be used once as $\tilde X$} to generate one path of $X$, and thus the number of the generated sequence has to be the same as that of the training data set in one run. 
\end{problem}
Note that a qualified generator should also be able to generate \emph{unlimited} independent data that does not depend on the original one. 
Fortunately, both problems mentioned above can be overcome by the idea behind the following theorem.
\begin{thm}
\label{thm:dependence_decay}
Under mild non-degeneracy conditions, the correlation between training data and generated data in DC-SDEs decays exponentially fast, as the distance increases on the chain. 
\end{thm}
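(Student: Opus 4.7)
I would recast the ``walk along the chain'' as a Markov chain on path space. Concretely, starting from a training path $\tilde X^{(0)}$, define inductively for $k \geq 1$
\begin{equation*}
X^{(k)}_t = \xi^{(k)} + \int_0^t V_0(s, X^{(k)}_s, X^{(k-1)}_s)\,\ud s + \int_0^t V_1(s, X^{(k)}_s, X^{(k-1)}_s)\,\ud B^{(k)}_s,
\end{equation*}
with the Brownian motions $B^{(k)}$ and initial values $\xi^{(k)}$ chosen independently of one another and of $\tilde X^{(0)}$. The one-step kernel $P: X^{(k-1)} \mapsto X^{(k)}$ on $C([0,T];\RR^N)$ admits $\Law(\tilde X)$ as an invariant distribution by the constraint \eqref{eq:dc_sde_a}. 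The conclusion of Theorem~\ref{thm:dependence_decay} then amounts to geometric ergodicity of $P$, which I would phrase quantitatively as $|\cov(f(\tilde X^{(0)}), g(X^{(k)}))| \leq C\lambda^k$ for some $\lambda \in (0,1)$ and bounded Lipschitz test functions $f, g$.

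\textbf{Steps.} The natural tool is coupling. I would introduce a parallel chain $Y^{(k)}$ built by the same recursion but started from an independent sample $Y^{(0)}$ from the invariant law, coupled at each level through a common Brownian motion with $X^{(k)}$ (synchronous coupling). A Lipschitz/Gr\"onwall estimate applied to $X^{(k)}_t - Y^{(k)}_t$ gives a one-step inequality of the form
\begin{equation*}
\EE\sup_{t\le T}\|X^{(k)}_t - Y^{(k)}_t\|^2 \leq K\cdot\EE\sup_{t\le T}\|X^{(k-1)}_t - Y^{(k-1)}_t\|^2,
\end{equation*}
with $K$ depending on the Lipschitz constants of $V_0,V_1$ and on $T$. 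Under the non-degeneracy assumption on $V_1$, one can improve this to $K<1$, either by a Girsanov change of measure (recasting the dependence on the neighborhood as a bounded drift perturbation and invoking a Pinsker/total-variation contraction on the one-step kernel) or by a Bismut--Malliavin integration-by-parts that supplies a smoothing factor. Iterating yields the geometric factor $\lambda^k$; since both $\tilde X^{(0)}$ and $X^{(k)}$ have the same marginal $\Law(\tilde X)$, Cauchy--Schwarz converts the pathwise $L^2$ bound into the advertised exponential decay of the covariance.

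\textbf{Main obstacle.} The genuinely delicate point is getting $K<1$ rather than merely a polynomial-in-$T$ Gr\"onwall constant: a naive synchronous coupling with Lipschitz inputs alone never beats $1$. This is precisely where the ``mild non-degeneracy'' hypothesis earns its keep, via the ellipticity lower bound on $V_1 V_1^{\top}$; producing a clean, quantitative rate $\lambda$ would then require balancing this ellipticity constant against the drift Lipschitz constant and the horizon $T$. A secondary technical point is ensuring that the coupling is compatible with the distributional constraint \eqref{eq:dc_sde_a}, i.e.\ that both $X^{(k)}$ and $Y^{(k)}$ genuinely remain distributed as $\tilde X$ along the iteration; this is automatic from the DC-SDE invariance but should be recorded carefully. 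Once the contraction is in hand, the translation from $L^2$-coupling to covariance decay and the extension from a single step to arbitrary chain distance are standard.
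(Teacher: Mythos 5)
Your skeleton --- synchronous coupling of two chains driven by the same Brownian motions $\mathbf{B}=(B^{(2)},\dots,B^{(q)})$, a Gr\"onwall estimate per step, iteration along the chain, and a Lipschitz/Cauchy--Schwarz step converting the pathwise $L^2$ bound into decay of the conditional-versus-unconditional expectation --- matches the paper's proof. The genuine gap is precisely at the step you flag as the main obstacle. You collapse the one-step estimate to $\EE\sup_{t\le T}\|X^{(k)}_t-Y^{(k)}_t\|^2\le K\,\EE\sup_{t\le T}\|X^{(k-1)}_t-Y^{(k-1)}_t\|^2$, correctly observe that Lipschitz inputs alone never give $K<1$, and then propose to manufacture a contraction from ellipticity via Girsanov or Bismut--Malliavin smoothing --- which you do not carry out, and which is not how the decay is actually obtained. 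The paper instead keeps the one-step bound in integrated form: after absorbing the self-dependence by Gr\"onwall one has
\[
\EE\Big[\sup_{0\le s\le t}|X_{s,k}-Y_{s,k}|^2\Big]\;\le\; C e^{CT}\int_0^t \EE\Big[\sup_{0\le v\le s}|X_{v,k-1}-Y_{v,k-1}|^2\Big]\ud v,
\]
and iterating this $q-1$ times along the chain produces the Picard-type factor $T^{q-1}C^{q-1}e^{(q-1)CT}/(q-1)!$. The factorial beats the exponential, so no one-step contraction constant below one is needed, no smallness condition on $T$ or the Lipschitz constants is required, and the resulting bound $c^{q-1}/(q-1)!$ (Theorem~\ref{thm:dependence_decay_formal}) is in fact stronger than the geometric rate $\lambda^k$ you aim for.

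A second, smaller misreading: the non-degeneracy hypothesis is not what produces the decay. In the paper it serves to exclude the degenerate case $V_1\equiv 0$, in which the very same factorial estimate forces $X_1=X_2=\cdots=X_q$ in $L^2$ (Lemma~\ref{lem:degenerate_case} and Remark~\ref{rem:degenerate_eg}), so the chain collapses to a deterministic system and ``decorrelation'' becomes vacuous; the fresh Brownian noise at each node is what makes the statement meaningful, not what makes the estimate close. So your plan stalls exactly where you predict, but the fix is not heavier machinery --- it is refraining from discarding the time-integral structure of the one-step bound before iterating.
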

Due to the page limit, we give the formal statement of Theorem~\ref{thm:dependence_decay} with detailed proof in Appendix~\ref{app:dependence-decay}.

We shall explain how to beat the independence problem during the implementation described in Section~\ref{sec:generator}. As shown in Appendix \ref{app:dependence-decay}, the introduction of independent Brownian motions to \eqref{eq:dc_sde} is the key to solving the independence problem. We shall also provide an extreme example (cf. Remark \ref{rem:degenerate_eg}) showing that without $\int V_1 \ud B$, the system \eqref{eq:dc_sde}--\eqref{eq:dc_sde_a} has only trivial (deterministic) solution. 

\subsection{Signature} \label{sec:SignatureDef}
The proposed method utilizes signature \citep{lyons2007differential}, a concept from rough path theory that we shall briefly introduce for completeness. As an infinitely graded sequence, the signature can be understood as a feature extraction technique for time series data with certain regularity conditions. Let $x: \Omega\times[0,T] \to \RR^N$ be a continuous random process, and denote the signature map by $S: x\mapsto S(x) \in T(\RR^N)$, where $T(\RR^N)$ is the tensor algebra defined on $\RR^N$. Then,
$$S(x):=(1, x^1,\cdots, x^i, \dots), \quad \text{and}$$
$$x^i = \int_{0<t_1<\cdots < t_i < T} \ud x_{t_1} \otimes\cdots \otimes \ud x_{t_i}.$$
Signature characterizes paths uniquely up to the tree-like equivalence, and the equivalence is removed if at least one dimension of the path is strictly increasing \citep{boedihardjo2016signature}. The concept has been applied to design machine learning methods, e.g., \citet{chevyrev2018signature, kidger2019deep, min2021signatured, ni2021sig, min2022convolutional,dyer2021approximate}. In practice, one needs to truncate the signature up to a finite order $M$, denoted by $S^M(x)=(1,x^1,\cdots,x^M)$. We will justify the truncation by the factorial decay property and discuss other theoretical properties of signature in Appendix \ref{app:sig_decay}.  

In addition, signature induces another powerful tool to characterize the distribution of random processes: the \emph{expected signatures}. It was proved by \citet{chevyrev2016characteristic} that expected signatures characterize the distribution of random processes uniquely, i.e., if $\EE[S(x)]=\EE[S(y)]$ and $\EE[S(x)]$ has an infinite radius of convergence, then $x$ and $y$ have the same distribution. 


\section{PROPOSED METHOD: DC-GANs}
\label{sec:model}

In this section, we describe DC-GANs for generating multimodal distributed time series. Our method builds on the DC-SDEs with a straightforward idea: To find the (sub-) optimal solution of the generator, we implement a GAN model with the Neural DC-SDEs as the generator. For the discriminator, we use Neural CDEs \citep{kidger2021neural} and Sig-Wasserstein GAN \citep{ni2020conditional, ni2021sig}.

\subsection{Generator}\label{sec:generator}
To overcome the independence issue explained in Problem~\ref{prob:independence}, we design DC-GANs by two phases: 1) training and 2)  decorrelating and branching. The second phase will be utilized during testing. Both $V_0$ and $V_1$ in \eqref{eq:dc_sde} will be parameterized by multi-layers fully connected NNs. 

{\bf Training Phase.}
We set aside the independence problem and focus on finding the optimal coefficients $V_0$ and $V_1$ (together with the discriminator). Denote the training data by $\{\Tilde{X}(\omega_i)\}_{i=1}^M$, where each $\omega_i$ represents a realization of the randomness in the path space. We treat our training data $\{\Tilde{X}(\omega_i)\}_{i=1}^M$ as the neighborhood process $\Tilde{X}$ in \eqref{eq:dc_sde}. For each training path data $\Tilde{X}(\omega_i)$, we generate a DC-SDE path $X(\omega_i)$, according to the Euler scheme of \eqref{eq:dc_sde},
\begin{align}
    &X_{t_{j+1}}(\omega_i) \nonumber\\
    & = X_{t_j}(\omega_i) + V_0(t_j, X_{t_j}(\omega_i), \tilde X_{t_j}(\omega_i)) (t_{j+1} - t_j) \label{eq:dc_sde-simulation}\\
    & \hspace{12pt}+ V_1(t_j, X_{t_j}(\omega_i), \tilde X_{t_j}(\omega_i))(B_{t_{j+1}}(\omega_i)-B_j(\omega_i)), \nonumber
\end{align}
where $0 = t_0 \leq t_1 \leq \ldots \leq t_J = T$ is a partition on $[0,T]$, $\{B(\omega_i)\}_{i=1}^M$ are independent Brownian paths. 

Both the generated paths $\{X(\omega_i)\}_{i=1}^M$ and the training paths $\{\Tilde{X}(\omega_i)\}_{i=1}^M$ will be passed into the discriminator, where their Wasserstein distance needs to be minimized. 
To simplify the notations for later use, we define $G_\theta: (\xi, B, \Tilde{X}) \mapsto X$ to represent the overall transformation in \eqref{eq:dc_sde-simulation}, with $\theta$ denoting all network parameters of $V_0$ and $V_1$.


{\bf Decorrelating and Branching Phase.}
During testing, we utilize a branching scheme to alleviate the independence problem; see Figure~\ref{fig:branch} for an illustrative example.
Let $q$ be the number of steps we ``walk'' along the directed chain. Here ``walking'' along the chain means: After we have finished the training (identified $V_0$ and $V_1$) phase, we start with the first chain (the grey one in Figure \ref{fig:branch}). We take real data as the first neighborhood $X_1$ to generate $X_2$ through the scheme \eqref{eq:dc_sde-simulation}, where $X_1$ takes the role of $\tilde X$ and $X_2$ takes the role of $X$. Then we use $X_2$ as the neighborhood to generate $X_3$, and repeat this procedure until we obtain $X_q$. By Theorem \ref{thm:dependence_decay}, $X_q$ and $X_1$ are asymptotically uncorrelated, as $q \to \infty$. We describe the pseudo-code in Algorithm~\ref{algo:testing} below for this decorrelating step. 

\begin{figure}[htpb]
    \centering
    \vspace{.05in}
    \includegraphics[width=0.4\textwidth]{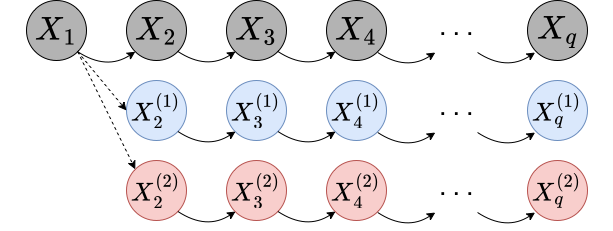}
    \vspace{.1in}
    \caption{Branching Scheme. Let $q$ be the number of steps we ``walk'' along the directed chain. We take real data as the first neighborhood $X_1$ to generate $X_2$ through the scheme \eqref{eq:dc_sde-simulation}, where $X_1$ takes the role of $\tilde X$ and $X_2$ takes the role of $X$. Then we use $X_2$ as the neighborhood to generate $X_3$, and repeat this procedure until we obtain $X_q$. }
    \label{fig:branch}
\end{figure}

\begin{algorithm}
\caption{Generator in the Decorrelating and Branching}
\label{algo:testing}
\begin{algorithmic}
\STATE {\bfseries Input:} real data $\{\Tilde{X}(\omega_i)\}_{i=1}^M$, \# of steps  $q$, generator $G_\theta$;
\STATE {\bfseries Set} $\{X_{1}(\omega_i)\}_{i=1}^M :=\{\Tilde{X}(\omega_i)\}_{i=1}^M$;
\FOR{$k=2$ {\bfseries to} $q$}
\STATE Generate $M$ independent copies of initials positions and Brownian paths $\{\xi_k(\omega_i), B_k(\omega_i)\}_{i=1}^M$;
\STATE Generate $M$ paths $\{X_k(\omega_i)\}_{i=1}^M$ by $$X_k(\omega_i) = G_\theta(\xi_k(\omega_i), B_k(\omega_i), X_{k-1}(\omega_i));$$
\ENDFOR
\STATE {\bfseries Output:} $\{X_q(\omega_i)\}_{i=1}^M$ 
\end{algorithmic}
\end{algorithm}

To generate more fake data, we can initiate more chains with the same starting node $X_1$  (where the real data are) and independent Brownian paths, and then ``walk'' along the chain to get $X_q^{(i)}, i=2,3,\dots$. Again, $X_q^{(i)}$ is asymptotically uncorrelated to $X_1$. By the definition of DC-SDEs, $X_q$ and $X_q^{(i)}$ are conditionally independent with conditioning on $X_1$. Therefore, we can claim that $X_q$ and $X_q^{(i)}$ are asymptotically uncorrelated. 


{\bf Architecture.} Note that although the directed chain SDE pair $(X, \Tilde{X})$ is Markovian, $X$ itself can be non-Markovian as a standalone stochastic process. All the historical information can be embedded in the neighborhood process and fetched through $V_0$ and $V_1$. Such a property leads to one of the key differences between our method and Neural SDEs: there is no need to embed time series into a hidden space. In our implementations, $V_0$ and $V_1$ take standard feedforward neural networks; see Appendix~\ref{app:experiment} for details.

\subsection{Discriminator}
The purpose of the discriminator is to identify the optimal parameters in the $V_0$- and $V_1$- networks. We use the Wasserstein GAN  framework \citep{goodfellow2020generative, arjovsky2017wasserstein} to train the generator, and two types of discriminators will be used here.

{\bf SigWGAN.} 
Using the idea of expected signature, \citet{ni2020conditional, ni2021sig} designed  Sig-Wasserstein GAN by directly minimizing the signature Wasserstein-1 distance,
\begin{equation*}
\mathrm{Sig\mbox{-}W}_1(\mu, \nu):= |\EE_{X\sim\mu}[S(X)] - \EE_{X\sim\nu}[S(X)]|,
\end{equation*}
where $\mu$ and $\nu$ are two distributions of time series corresponding to real data and fake data, $S$ is the signature map, and $|\cdot|$ is the $l_2$ norm. For practical use,  we approximate the infinite sequence $S$ by truncating signatures up to some finite order $m$, i.e.,
\begin{equation}
\label{eq:sig_mmd}
\mathrm{Sig\mbox{-}W}^m_1(\mu,\nu):= |\EE_{\mu}[S^m(X)] - \EE_{\nu}[S^m(X)]|.
\end{equation}
The higher the truncation order $m$, the more information the signature can capture. However, the number of terms in the truncated signature will grow exponentially and become costly when the time series data is high-dimensional.

{\bf Neural CDEs.} Neural controlled differential equations are the second candidate for the discriminator when the underlying time series is of high dimension. This is also the discriminator used in \cite{kidger2021neural}. Let $D_\phi: X \mapsto R$ be a Neural CDE discriminator where $\phi$ denotes the network parameters. The training goal is to solve the following optimization problem for the generator
$$\min_{\theta} \EE_{\xi, B}\big[D_\phi\big(G_\theta(\xi, B, \Tilde{X})\big)\big],$$
and the following one for the discriminator
\begin{align}
\max_\phi \big\{ \EE_{\xi, B}\big[ D_\phi\big(G_\theta(\xi, B, \Tilde{X})\big)\big] - \EE_{\Tilde{X}}\big[D_\phi(\Tilde{X}) \big]\big\}. \label{eq:cde_wgan}
\end{align}
Compared to only using the Neural CDEs as the discriminator, we notice that a combination of Neural CDEs and lower-order signature Wasserstein-1 distance as the discriminator works better for the third numerical example below. That is, the generator is optimized with respect to
\begin{align}
\min_{\theta} \big\{ &\EE_{\xi, B}\big[D_\phi\big(G_\theta(\xi, B, \Tilde{X})\big)\big] \nonumber\\
& + \mathrm{Sig\mbox{-}W}^m_1\big(\Law(\Tilde{X}), \Law\big(G_\theta(\xi, B, \Tilde{X})\big)\big) \big\}.\label{eq:combined_loss}
\end{align}
Remark that DC-GANs can work with different discriminators, and here we choose to use neural CDEs and SigWGAN as the discriminators.
The pseudo-algorithm of the overall training strategy is summarized in Algorithm~\ref{algo:training}. 


\begin{algorithm}
\caption{The Training Phase}
\label{algo:training}
\begin{algorithmic}
\STATE {\bfseries Input:} real data $\{\Tilde{X}(\omega_i)\}_{i=1}^M$, boolean variable $cde$, total epochs $E$, signature truncation order $m$;
\FOR{$e=1$ {\bfseries to} $E$}
\STATE Generate independent copies of initials and Brownian motions $(\xi(\omega_i), B(\omega_i))_{i=1}^M$;
\STATE Generate fake data $\{X(\omega_i)\}_{i=1}^M$ by 
$$X(\omega_i) = G_\theta(\xi(\omega_i), B(\omega_i), \Tilde{X}(\omega_i));$$\vspace{-10pt}
\IF{$cde$ is True}
\STATE Compute the loss \eqref{eq:cde_wgan} and its gradients w.r.t. $\phi$;
\STATE Compute the loss \eqref{eq:combined_loss} and its gradients w.r.t. $\theta$;
\STATE Update $\theta$ by stochastic gradient descent optimiser;
\STATE Update $\phi$ by stochastic gradient ascent optimiser;
\ELSE
\STATE Compute the loss \eqref{eq:sig_mmd} and its gradients w.r.t. $\theta$;
\STATE Update $\theta$ by stochastic gradient descent optimiser;
\ENDIF
\ENDFOR
\STATE {\bfseries Output:} Generator $G_\theta$.
\end{algorithmic}
\end{algorithm}

\section{EXPERIMENTS}
We present the performance of the proposed DC-GANs on four different datasets, including stochastic opinion dynamics, network dynamics from neural science, and real-world stock data and energy consumption data. In all cases, we set $q=10$, i.e., ``walk'' along the chain for ten steps during the decorrelating phase. Other hyperparameters for neural network training can be found in Appendix~\ref{app:experiment} for details. The example implementation of DC-GAN is available at \url{https://github.com/mmin0/DirectedChainSDE}.

{\bf Benchmarks \& Evaluation.} The first two synthetic datasets are generated by SDEs, the third real-world data set of stock price time series was extracted from Yahoo Finance\footnote{\url{https://finance.yahoo.com/quote/GOOG?p=GOOG&.tsrc=fin-srch}.}, and the fourth real-world energy consumption data were obtained from Ireland's open data portal\footnote{\url{https://data.gov.ie/dataset?theme=Energy}.}. 
We compare our results by DC-GANs with SigWGAN, CTFP, Neural SDEs and Transformer-based TTS-GANs, and DC-GANs give much better accuracy under discriminative, predictive, and maximum mean discrepancy (MMD) metrics detailed below. We also provide independence metrics to show that our decorrelating and branching scheme can resolve the independence problem. {We also test over different discriminators, and show the flexibility of choosing the one that brings better performance or has a faster running time.}



\subsection{Metrics}

{\bf Marginal Distribution \& MMD.} 
For the first two examples, we plot histograms to compare their marginal distributions at several time stamps. To measure the goodness of fitting for time series, we use {maximum mean discrepancy} (MMD) induced by the expected signature given in \eqref{eq:sig_mmd}.

{\bf Discriminative Metric.}
To quantitively measure the similarity between the fake data generated by DC-GANs and real data, we train a post-hoc time series classifier by optimizing a two-layer LSTM to discriminate original and fake sequential data. The fake data is labeled \textit{nonreal} and the original data is labeled \textit{real}. The worse discriminative ability of the post-hoc time series classifier implies the better performance of the time series generator. Our discriminative score is calculated as the absolute difference between 0.5 and predicting accuracy on testing data, thus a smaller score indicates a better generator.

{\bf Predictive Metric.}
Typically, a useful time series dataset contains temporal evolution information, and we can predict the future given past data. We expect that DC-GANs can capture this temporal dynamic property accurately from the original data. To this end, we train an auxiliary two-layer LSTM sequential predictor on the generated time series and test this post-hoc predictor on the original time series. The predictive score is calculated as the $L^1$ distance between predicted sequences and true sequences on testing data (the real data), with smaller scores for better generators.

{\bf Independence Metric.}
It is crucial for success to show that our algorithm can address the independence problem. As an independence metric, we use 
\begin{equation}
\label{eq:independence_score}
\rho(x, y) := \sup_{t\in[0,T]} \| \rho(x_t, y_t) \|_1,
\end{equation}
where $x, y\in L^2(\Omega\times[0,T], \RR^N)$ and $\rho(x_t, y_t)$ represents the cross-correlation matrix between random vectors $x_t, y_t$. Smaller $\rho(x, y)$ means less correlation between real data $x$ and generated data $y$. 

These metric scores are used to measure the algorithms' discriminative ability, predictive ability, and the distance and correlation between the generated data and the true distribution. A decrease in the metric score suggests an improvement in the quality of the generated data. Furthermore, the decrease in independence score is supported by Theorem~\ref{thm:dependence_decay} and Theorem~\ref{thm:dependence_decay_formal} in Appendix.

All experiments are run over ten different random seeds, and we report the mean and standard deviation (in the parentheses) for all metrics in Tables~\ref{tab:opinion}--\ref{tab:energy}. We give more details on how all these metrics are implemented in Appendix \ref{app:experiment_metrics}.

\subsection{Example 1: Stochastic Opinion Dynamics}\label{sec:OpinionDyn}

We first consider stochastic opinion dynamics modeled by the following MV-SDE
\begin{equation*}
    \ud Y_t = -\bigg[ \int_\RR \varphi_\theta(\|Y_t - y\|)(Y_t-y) \;\mu_t(\mathrm{d} y) \bigg]\ud t + \sigma \ud W_t,
\end{equation*}
where $\varphi_\theta$ is a interaction kernel with $\theta_1, \theta_2 >0$,
\begin{align*}
    \varphi_\theta(r) = \left\{ \begin{array}{ll}
    \theta_1 \exp\bigg( -\frac{0.01}{1-(r-\theta_2)^2} \bigg),     & r>0, \\
       0,  & r\le 0,
    \end{array} \right.
\end{align*}
and $\mu_t = \text{Law}(Y_t)$ denotes the distribution of $Y_t$.  
One can interpret $\theta_1$ as a scale parameter that characterizes the intensity of the attraction between entities, and $\theta_2$ as the range parameter that determines the distance, within which an entity must be of one another in order to interact. This model is widely used in many disciplines, from flocking and swarming behaviors in biology (where $Y_t$ is the position) to public opinion evolution in social science (where $Y_t$ is the opinion towards a topic). We refer to \citet{motsch2014heterophilious} for further details. 

We choose $\theta_1 =6$, $\theta_2 =0.2$, $\sigma = 0.1$, $T = 1$, $\Delta t = 0.01$, and generate 8192 paths. The distribution $\mu_t$ is approximated by the empirical distribution of 8192 samples. These samples are used to produce the blue density in Figure~\ref{fig:opinion}, where a clear shift in distribution from unimodality to bimodality is observed. 



We first compare with the Neural SDEs method \citep{kidger2021neural}. Figure~\ref{fig:opinion} gives the comparison of the marginal distributions at $t = 0.1, 0.3, 0.5, 0.7, 0.9, 1.0$. One can see that DC-GANs can accurately capture the bimodal distribution in general, but the Neural SDE method can not. Under the MMD metric \eqref{eq:sig_mmd}, the discrepancy of DC-GANs is $0.07$, while the Neural SDEs give $0.12$. More comparisons with SigWGAN, CTFP, Neural SDEs and TTS-GAN under discriminative, MMD, and independence metrics are provided in Table~\ref{tab:opinion}. Our proposed DC-GANs have a smaller discriminative score, and an independence score comparable with the ones produced by the Neural SDE generator, SigWGAN, CTFP and TTS-GAN, all of which generate purely independent samples. Therefore, we conclude that DC-GANs can produce fake data closer to the real data without independence issues.



\begin{table*}[ht]
\caption{Stochastic Opinion Dynamics (Example 1). The scores are computed for SigWGAN, CTFP, Neural SDEs, TTS-GAN and DC-GANs under different metrics. The numbers in the parenthesis are the corresponding standard deviations of each score. Note that a smaller value means a better approximation, which indicates the DC-GANs provide more accurate fake data with compared independence and running time.} \label{tab:opinion}
\vskip 0.15in
\begin{center}
\begin{small}
\begin{sc}
\begin{tabular}{cllll}
\toprule
Method  & Discriminative & MMD & Independence & Time (min) \\
\midrule
SigWGAN & 0.213 (0.01)   & 0.328 (0.004) & 0.009(0.004) & 6.55\\
CTFP &  0.131 (0.02)  & 0.281 (0.005) & 0.010(0.003) & 5.58 \\
Neural SDEs             &  0.045 (0.025) & 0.122 (0.003) & 0.007 (0.005) & 7.07\\
TTS-GAN & 0.127(0.014) & 0.176(0.003) & 0.008(0.003) & 15.6 \\
DC-GANs            & {\bf 0.028 (0.019)} & {\bf 0.07 (0.003)} & 0.009 (0.004) & 6.82   \\
\bottomrule
\end{tabular}
\end{sc}
\end{small}
\end{center}
\vskip -0.1in
\end{table*}

\subsection{Example 2: Stochastic FitzHugh-Nagumo Model}

FitzHugh-Nagumo model is a standard model from neuroscience \citep{baladron2012mean,reisinger2022adaptive}, used to describe the neurons' interacting spiking. Mathematically, for $N$ neurons and $P$ different neuron populations, and $i\in\{1,\dots,N\}$,  we denote by $p(i)=\alpha, \alpha\in\{1,\dots,P\}$ the population of $i$-th particle that belongs to. The state vector of neural $i$, $(X_t^{i, N})_{t\in[0,T]}=(V_t^{i,N}, w_t^{i,N}, y_t^{i,N})_{t\in[0,T]}$, satisfies the SDE, 
\begin{equation*}
\begin{aligned}
    \ud X_t^{t, N} &= f_\alpha(t, X_t^{t, N})\ud t + g_\alpha(t, X_t^{t, N})\left[ \begin{array}{c}
         \ud W_t^i  \\
         \ud W_t^{i,y} 
    \end{array} \right] \\
    & \quad+ \sum_{\gamma=1}^P\frac{1}{N_\gamma}\sum_{j,p(j)=\gamma}\bigg( b_{\alpha\gamma}(X_t^{i,N}, X_t^{j,N})\ud t\\
    &\qquad + \beta_{\alpha\gamma}(X_t^{i,N}, X_t^{j,N})\ud W_t^{i,\gamma} \bigg),
\end{aligned}
\end{equation*}
where $V$ denotes a short, nonlinear elevation of membrane voltage, $w$ denotes a slower, linear recovery variable,
$N_\gamma$ denotes the number of neurons in the population $\gamma$. We defer more details about model description and  training data generation to Appendix \ref{app:experiment_details}. 

The FitzHugh-Nagumo system is an example of a relaxation oscillator, and exhibits a characteristic excursion in phase space, before the variables $V$ and $w$ relax back to their rest values. As a result, their distributions are typically multimodal distributed; see Figure~\ref{fig:fitzhugh-dist} in Appendix~\ref{app:experiment_details}.



Figure \ref{fig:fitzhugh} depicts the differences of their joint marginal densities between generated time series and training (real) time series on channels $1$ and $3$ at $t=0.1, 0.3, 0.5, 0.7, 0.9, 1.0$. The darker the color the smaller the differences, thus the closer the distribution and indicating a better generator. It can be observed that DC-GANs produce less difference in joint marginal densities at multiple time stamps. Under discriminative, predictive, and MMD metrics, DC-GANs give better samples than SigWGAN, CTFP, Neural SDEs, and TTS-GAN consistently; see Table \ref{tab:fitzhugh}. In particular, fake samples produced by DC-GANs are almost indistinguishable for a two-layer LSTM classifier after exhaustive training. By the comparison using MMD, one can see that DC-GANs generate fake samples with distributions significantly closer to real data than the other three methods. The independence scores given by \eqref{eq:independence_score} are nearly indistinguishable.


\begin{figure*}[ht]
\vspace{-.05in}
\centering
\subfloat[$t=0.1$]{\includegraphics[width=0.15\textwidth]{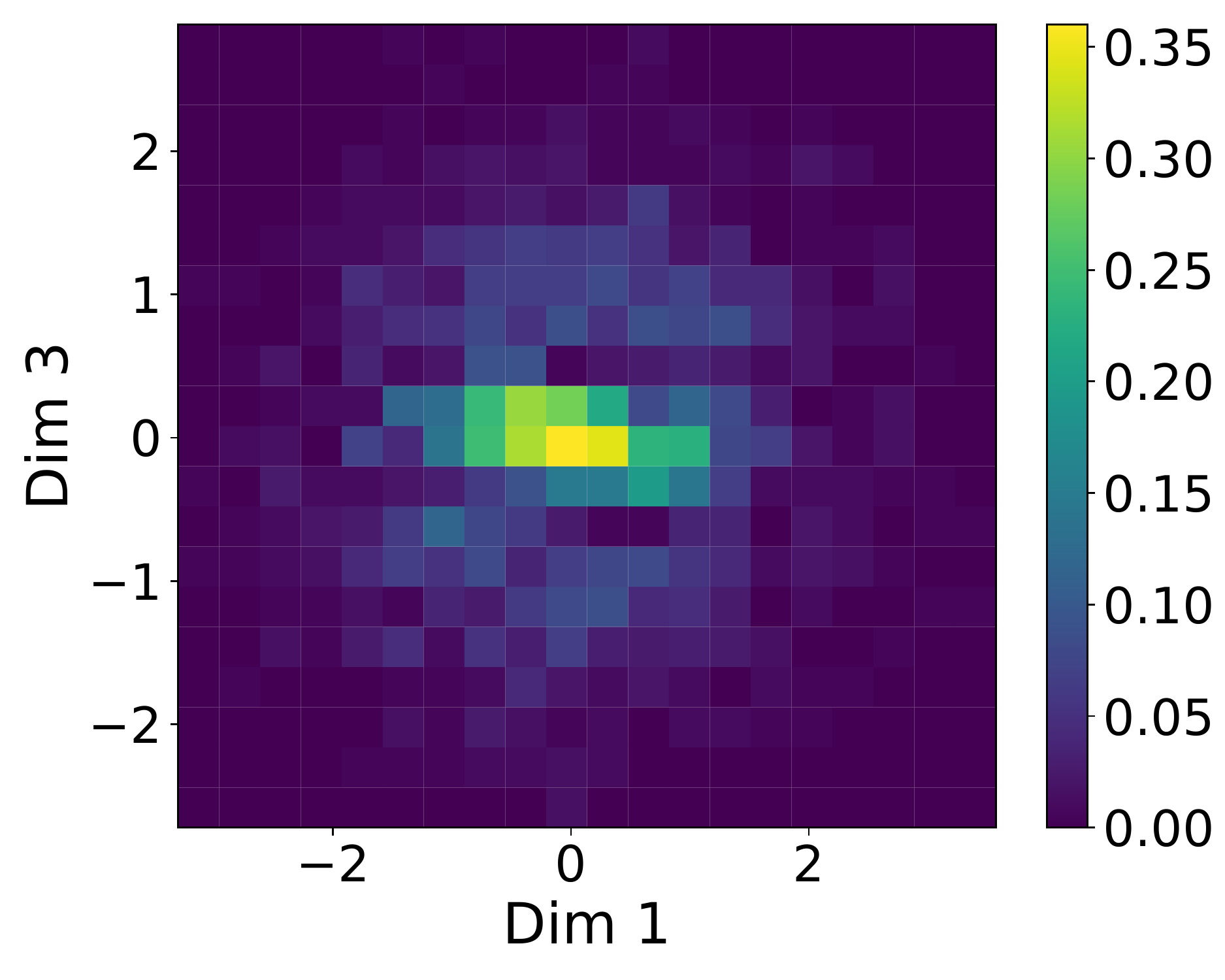}} 
\subfloat[$t=0.3$]{\includegraphics[width=0.15\textwidth]{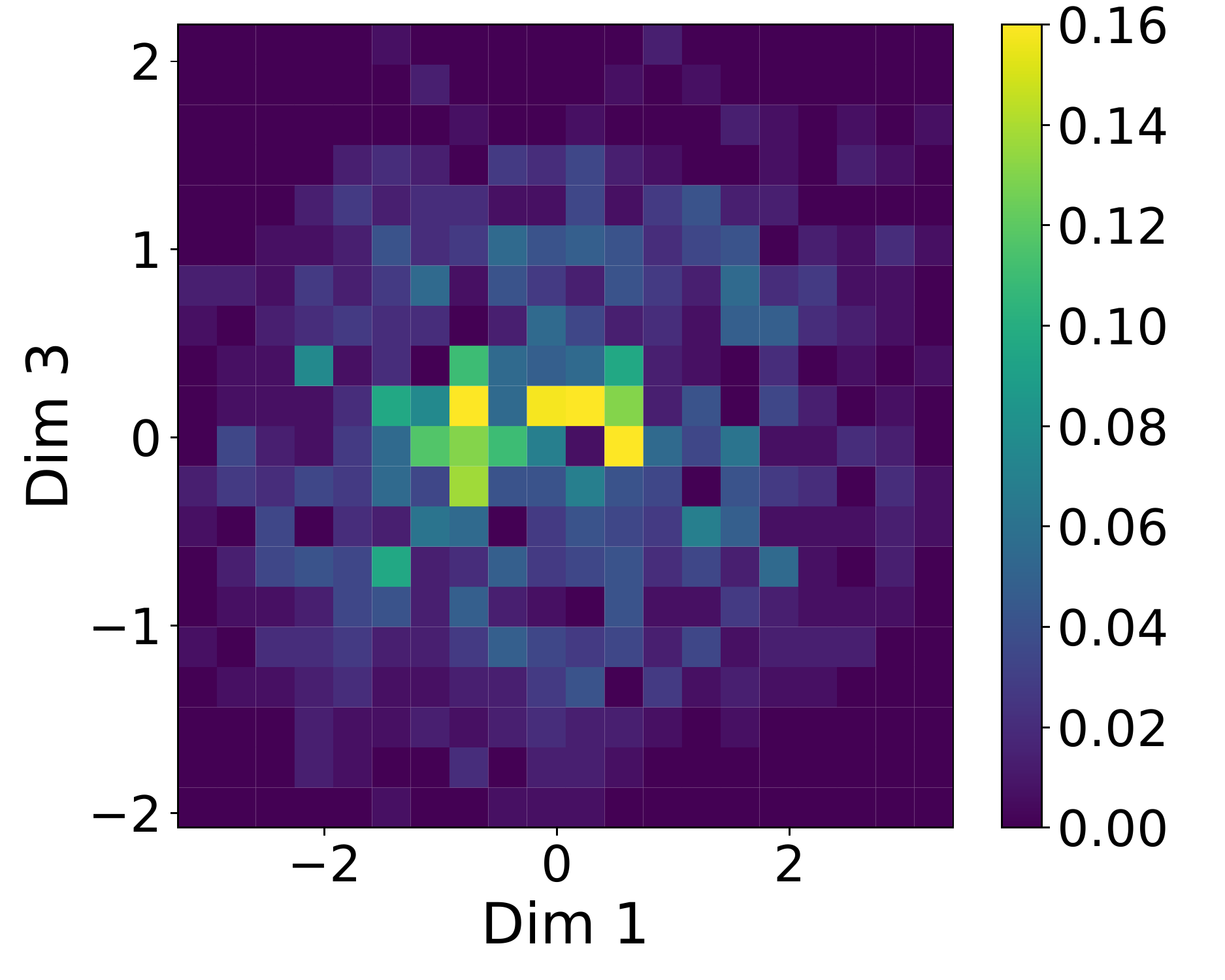}}
\subfloat[$t=0.5$]{\includegraphics[width=0.15\textwidth]{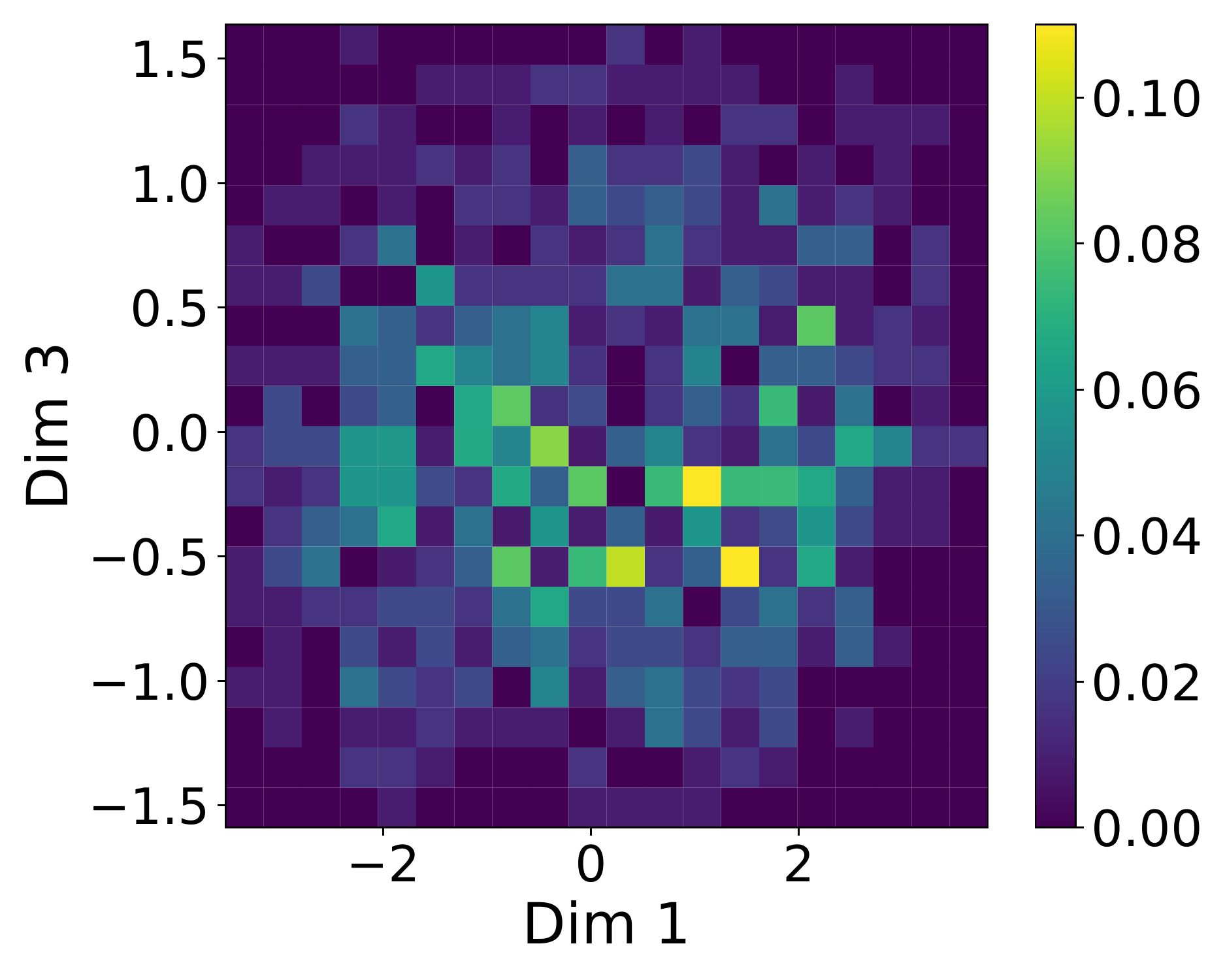}} 
\subfloat[$t=0.7$]{\includegraphics[width=0.15\textwidth]{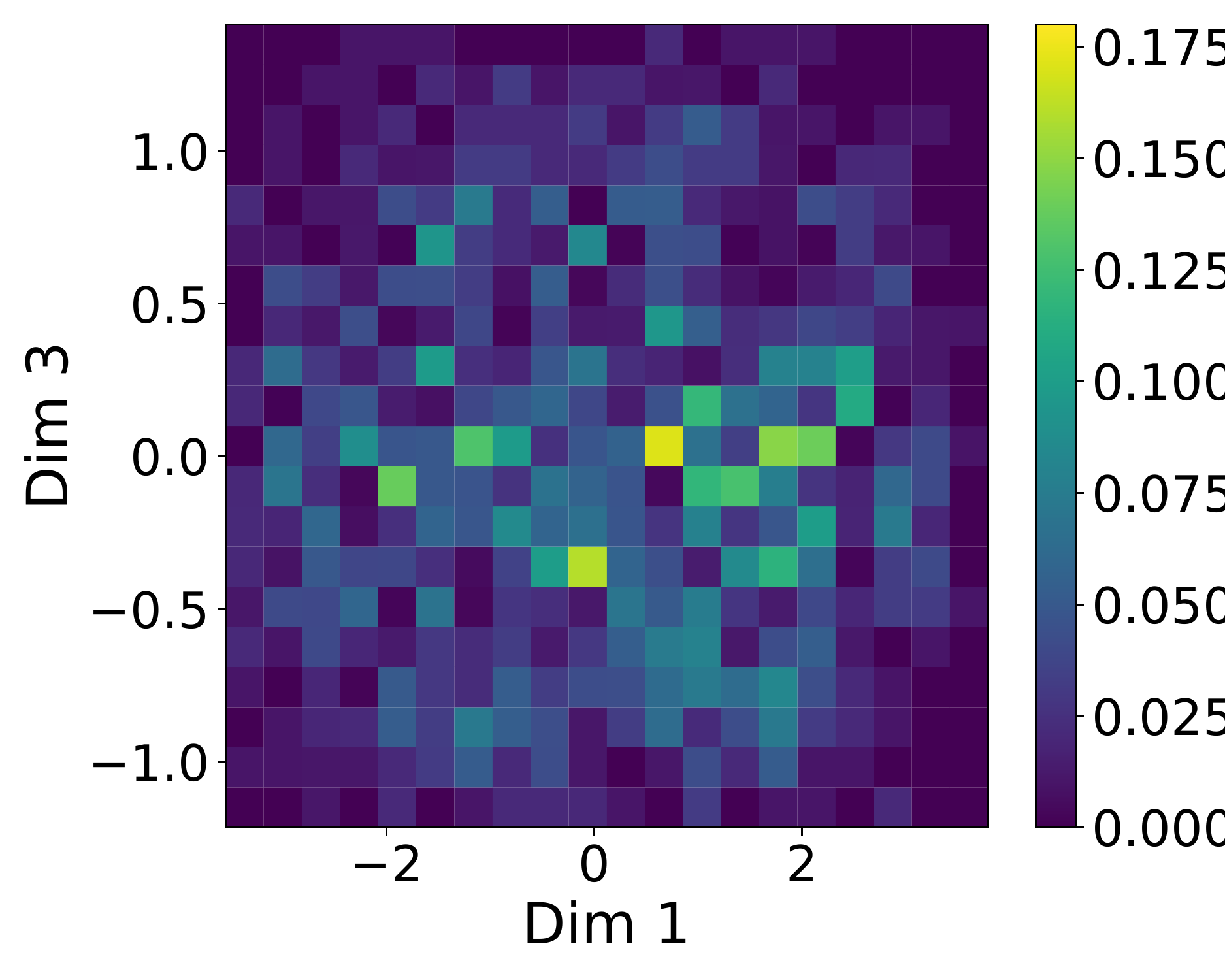}}
\subfloat[$t=0.9$]{\includegraphics[width=0.15\textwidth]{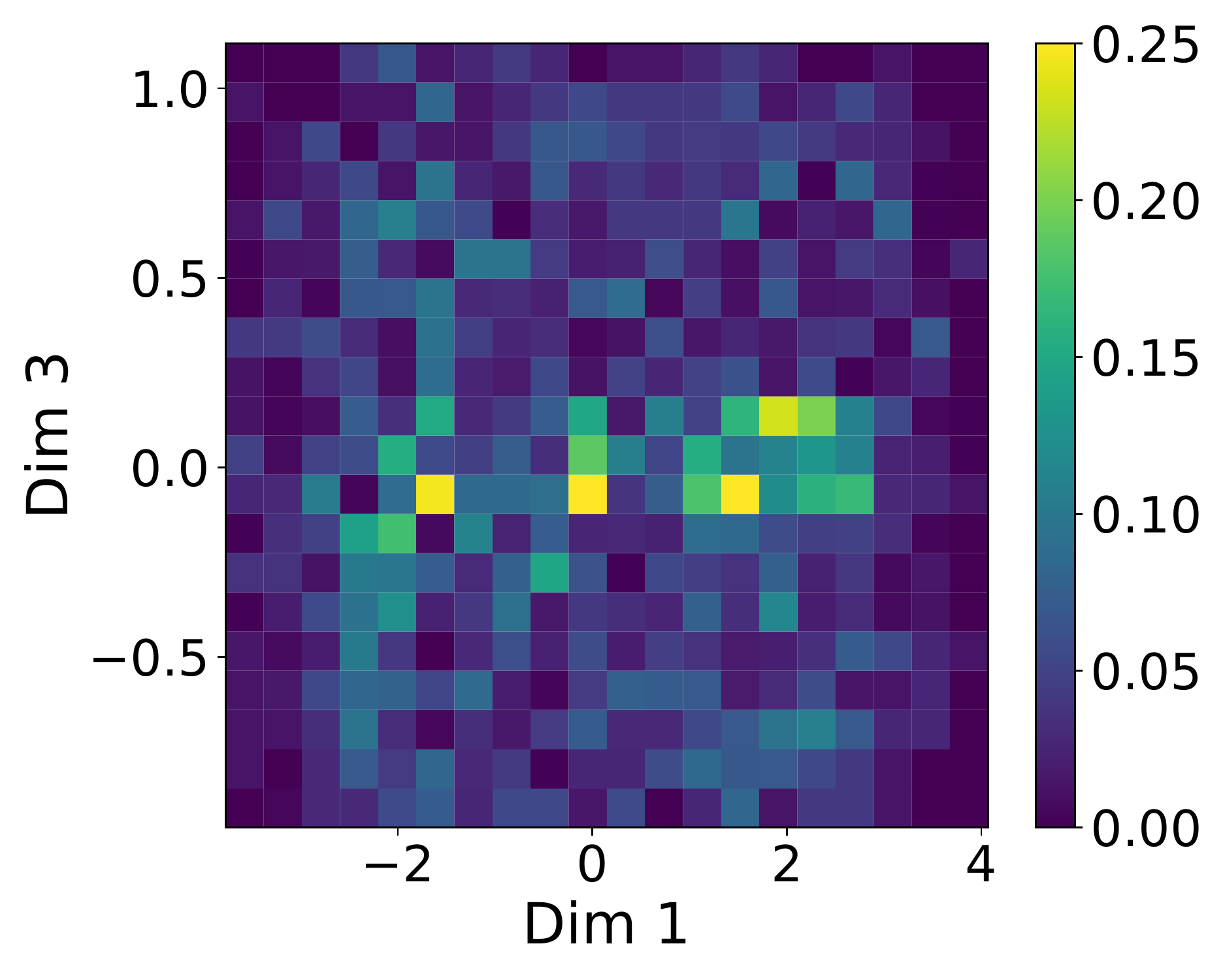}} 
\subfloat[$t=1.0$]{\includegraphics[width=0.15\textwidth]{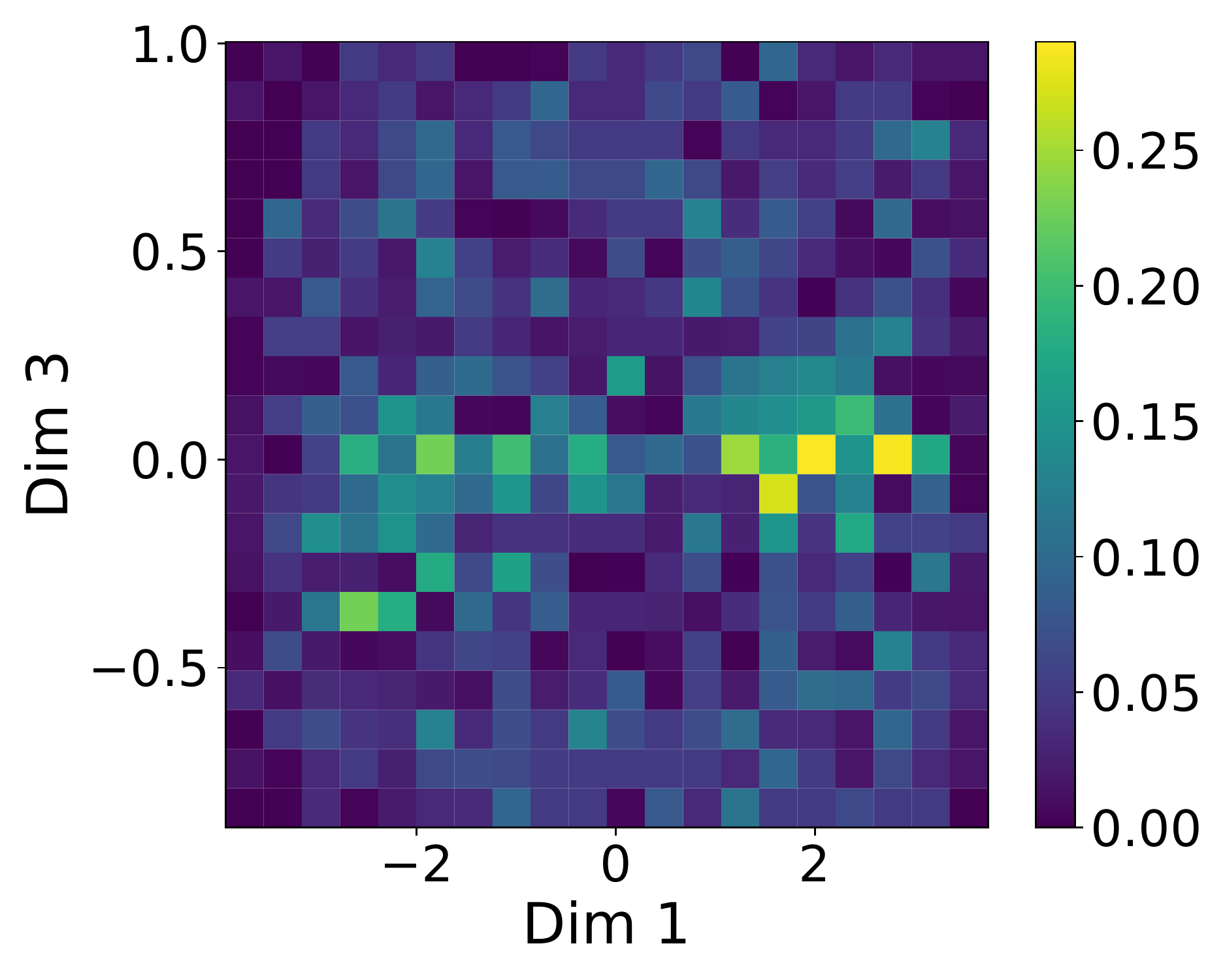}} \\
\subfloat[$t=0.1$]{\includegraphics[width=0.15\textwidth]{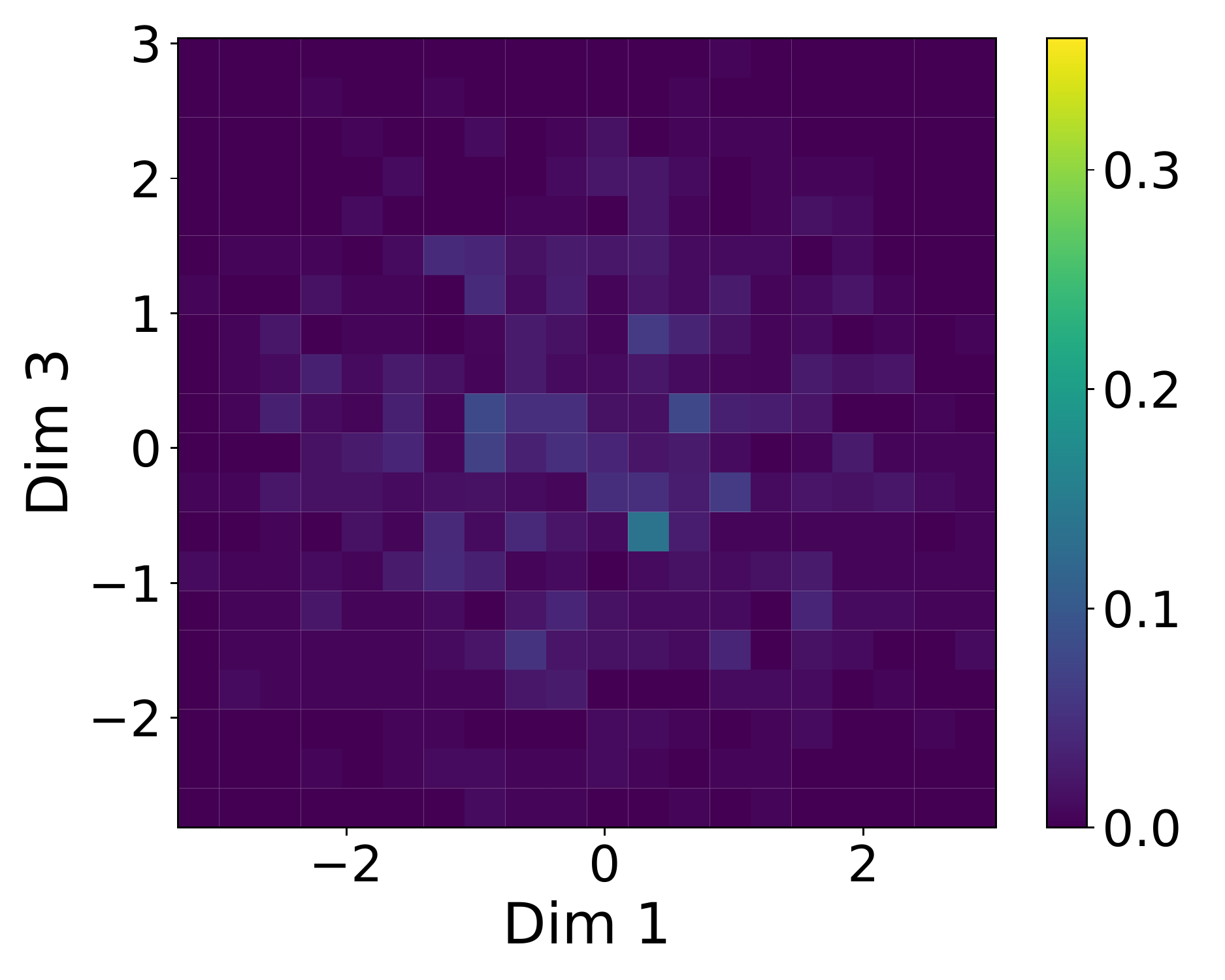}} 
\subfloat[$t=0.3$]{\includegraphics[width=0.15\textwidth]{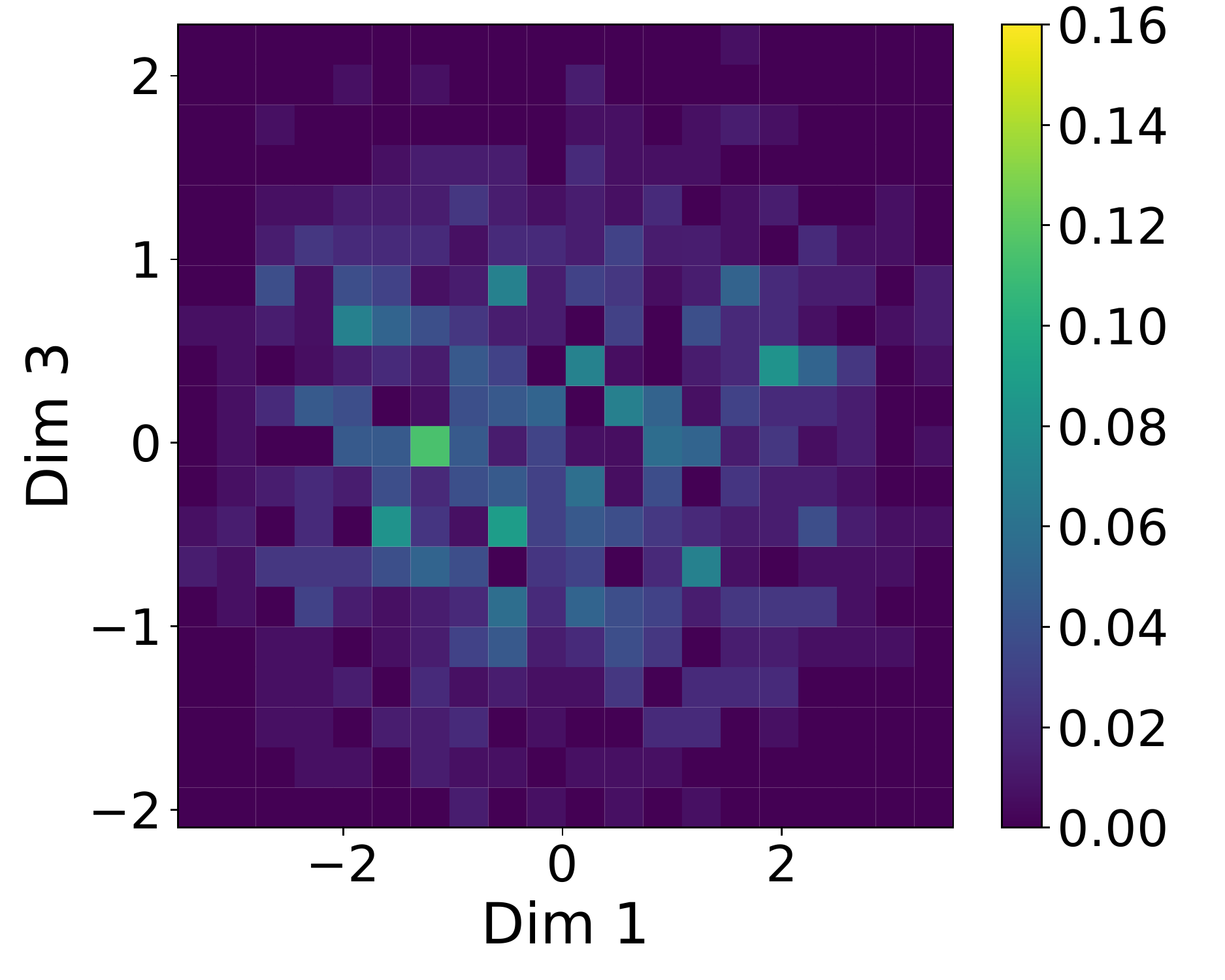}}
\subfloat[$t=0.5$]{\includegraphics[width=0.15\textwidth]{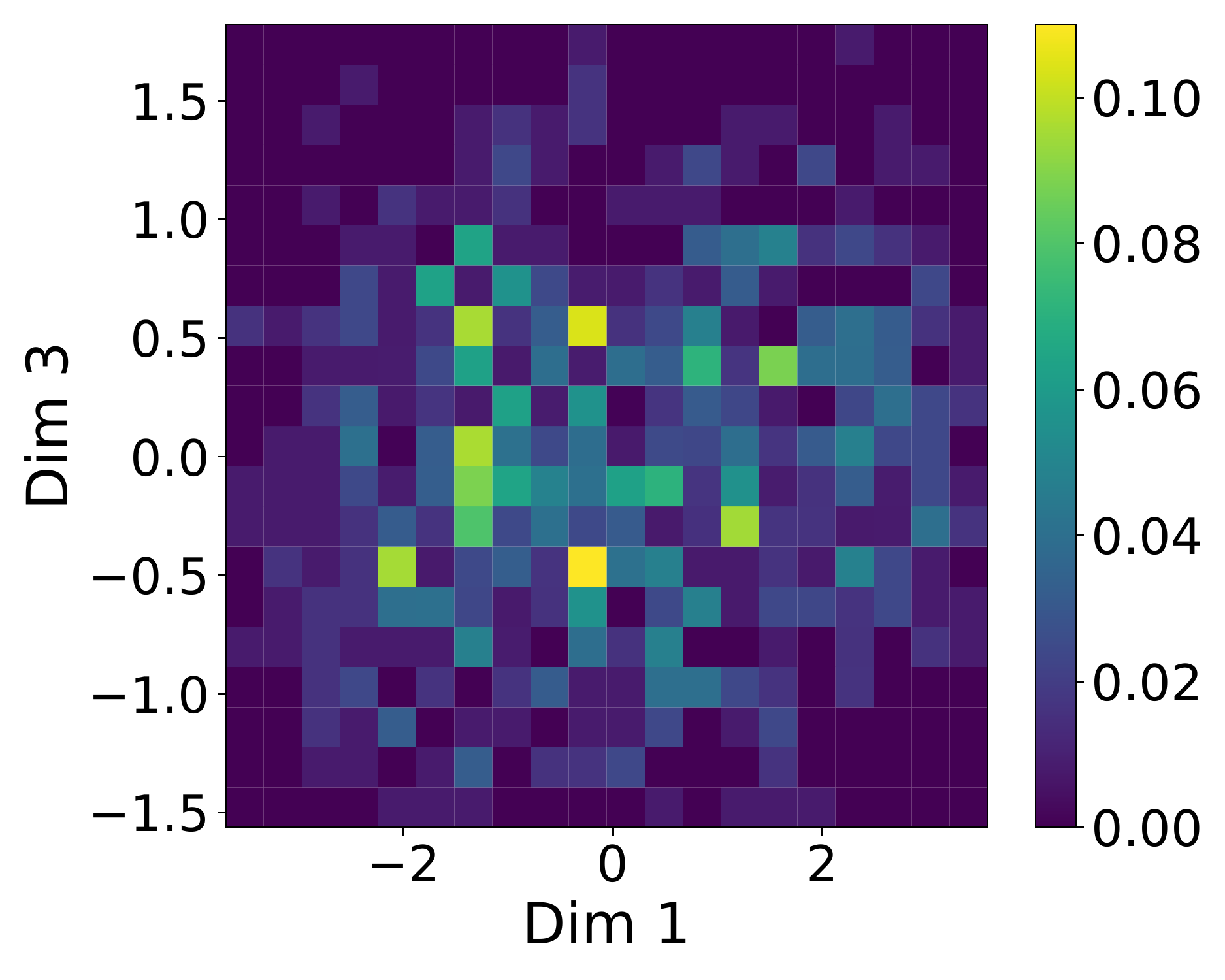}} 
\subfloat[$t=0.7$]{\includegraphics[width=0.15\textwidth]{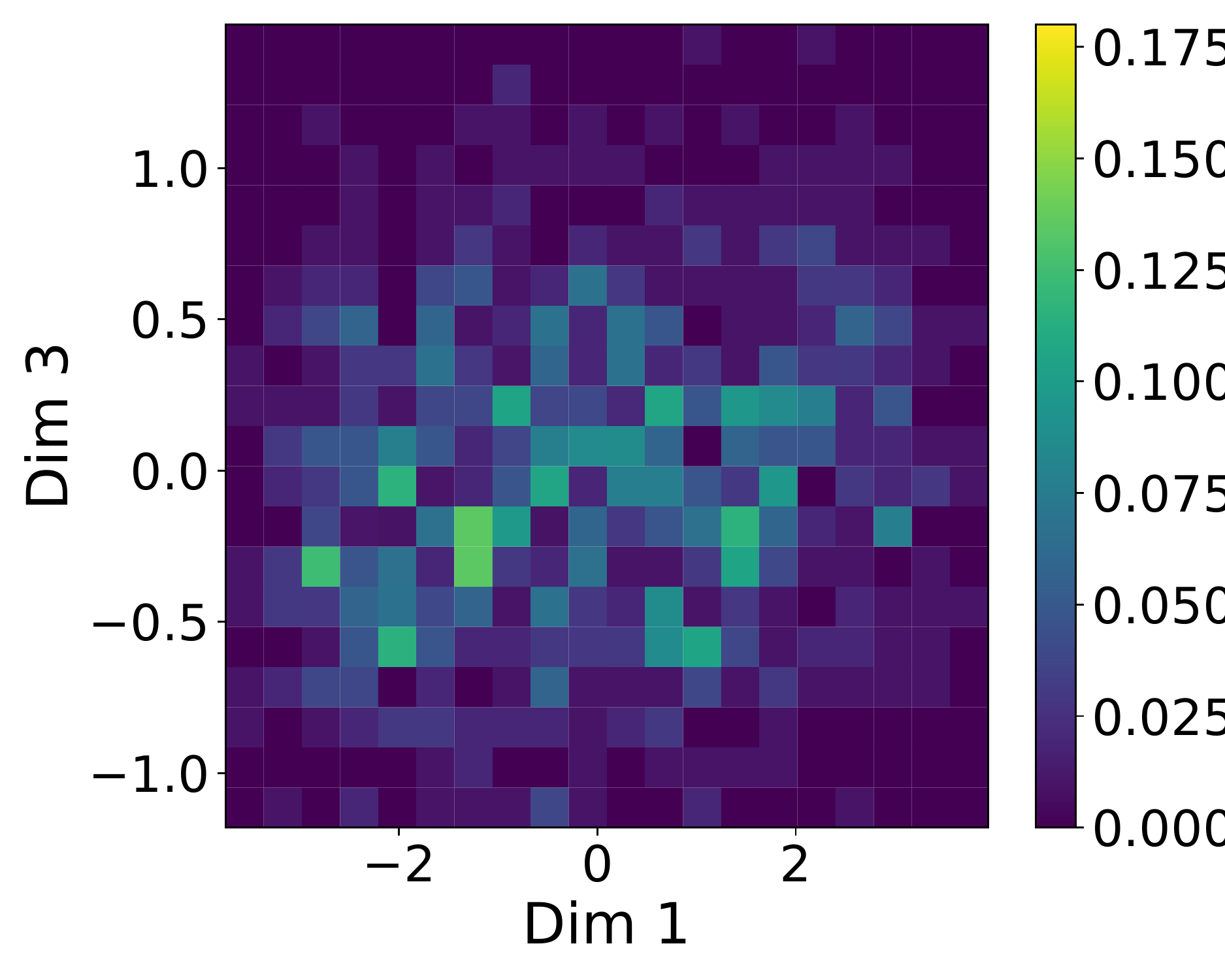}}
\subfloat[$t=0.9$]{\includegraphics[width=0.15\textwidth]{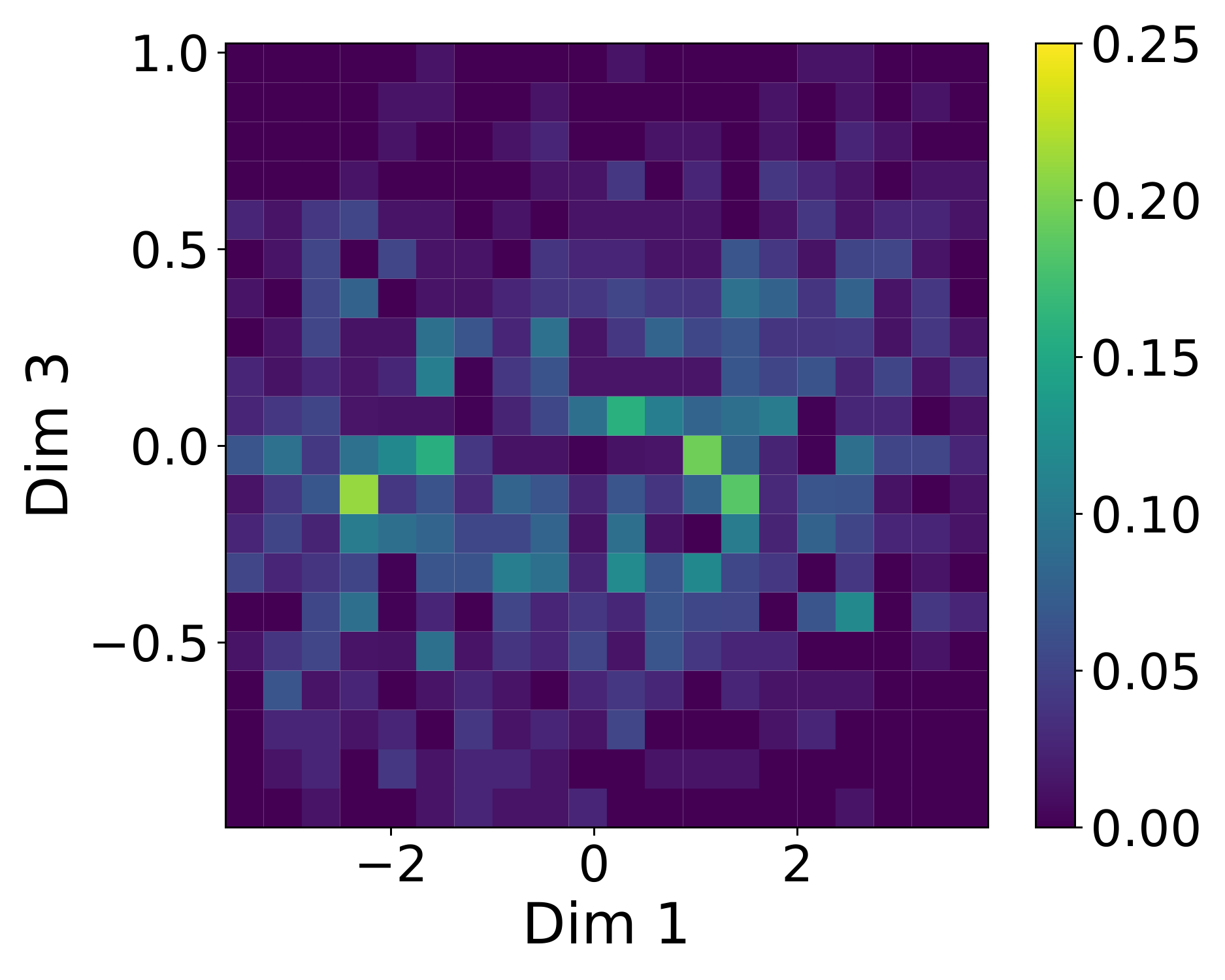}} 
\subfloat[$t=1.0$]{\includegraphics[width=0.15\textwidth]{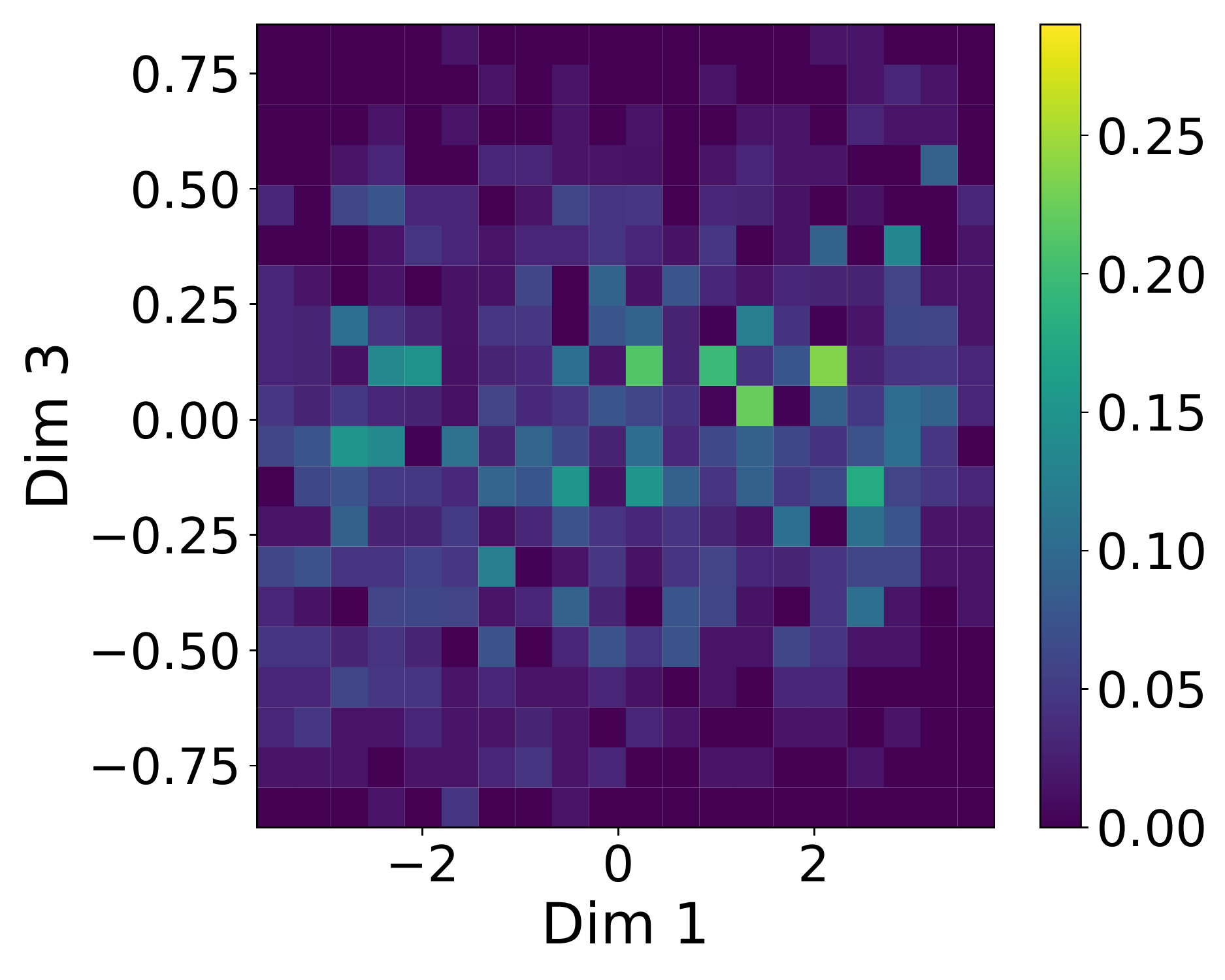}}
\vspace{-.1in}
\caption{Stochastic FitzHugh-Nagumo Model (Example 2). Figures (a)-(f) are generated by Neural SDEs, and Figures (g)-(l) are generated by DC-GANs. They show their joint marginal densities differences between estimated time series and real-time series on channels $1$ (Dim 1) and $3$ (Dim 3) at $t\in\{0.1, 0.3, 0.5, 0.7, 0.9, 1.0\}$. Darker color means a smaller difference, and thus a better fitting. One can observe that DC-GANs produce less difference in joint marginal densities at multiple time stamps.}
\label{fig:fitzhugh}
\end{figure*}



\begin{table*}[htbp]
\caption{Stochastic FitzHugh-Nagumo Model (Example 2). The scores are computed for SigWGAN, CTFP, Neural SDEs, TTS-GAN and DC-GANs under different metrics. Note that a smaller value means a better approximation. Parenthesized numbers are standard deviations. } \label{tab:fitzhugh}
\vskip 0.15in
\begin{center}
\begin{small}
\begin{sc}
\begin{tabular}{clllll}
\toprule
Method  & Discrimative & Predictive & MMD & Independence & Time (min) \\
\midrule
SigWGAN & 0.126 (0.04)  & 0.44 (0.001)  & 0.737 (0.01)& 0.0083(0.0024) & 9.63\\
CTFP & 0.275 (0.05)  & 0.501 (0.004) & 1.095 (0.02) & 0.0088(0.0023) & 6.88\\
Neural SDEs             &  0.20 (0.003) & 0.44 (0.000) & 0.97 (0.02) & 0.0085 (0.0023) & 8.25 \\
TTS-GAN & 0.258(0.02) & 0.45(0.001) & 0.96(0.04) & 0.0082(0.002) & 16.3 \\
DC-GANs             & \textbf{0.01 (0.009)} & \textbf{0.439 (0.000)} & \textbf{0.47 (0.02)} & 0.0085 (0.0027) & 8.13 \\
\bottomrule
\end{tabular}
\end{sc}
\end{small}
\end{center}
\vskip -0.1in
\end{table*}




\subsection{Example 3: Stock Price Time Series (Real Data)}
\begin{table*}[!t h]
\caption{Stocks Price Time Series (Example 3). The scores are computed for SigWGAN, CTFP, TimeGAN, Neural SDEs, TTS-GAN and DC-GANs under different metrics. Note that a smaller value means a better approximation. Parenthesized numbers are standard deviations. } \label{tab:stock}
\vskip 0.15in
\begin{center}
\begin{small}
\begin{sc}
\begin{tabular}{clllll}
\toprule
Model  & Discriminative &  Predictive & MMD & Independence  & Time (min) \\
\midrule
SigWGAN & 0.183 (0.03)  & 0.060 (0.004)  & 0.121 (0.011) & 0.012(0.004)& 4.13 \\
CTFP & 0.256 (0.05)  & 0.138 (0.006) & 0.187 (0.009) & 0.013(0.005) & 6.40 \\
TimeGAN         & 0.102 (0.021) & 0.038 (0.001)   & 0.0220 (0.007) & 0.011 (0.005) & $>$660\\
Neural SDEs             & 0.085 (0.028) & 0.048 (0.001) & 0.0193 (0.008) & 0.011 (0.006) & 9.93\\
TTS-GAN & 0.093(0.022) & 0.041(0.001) & 0.023(0.007) & 0.010(0.004) & 19.2 \\
DC-GANs             & \textbf{0.045 (0.015)} & \textbf{0.036 (0.000)} & \textbf{0.0133 (0.005)} & 0.013 (0.006) & 9.53\\
\bottomrule
\end{tabular}
\end{sc}
\end{small}
\end{center}
\vskip -0.1in
\end{table*}
The third example is Google stock prices from 2004 to 2019, extracted from Yahoo Finance. Sequences of stock prices are known as continuous time series data with unknown distributions, and can even be non-Markovian. Our data have six channels, volume and high, low, opening, closing, and adjusted closing prices. Among all, the first five channels are multimodal.
The combined discriminator \eqref{eq:combined_loss} (Neural CDE and $\mathrm{Sig\mbox{-}W}_1$) is used in GAN for this experiment, and we list the comparison results in Table \ref{tab:stock}. One can see that DC-GANs outperform SigWGAN, CTFP, TimeGAN, Neural SDEs and TTS-GAN under all three metrics. 


\subsection{Example 4: Energy Consumption Data (Real Data)}
We download the Energy Consumption data from Ireland's open data portal, and choose four electric and gas consumption time series from 02/2011–02/2013, where channels 1,3, and 4 exhibit multimodal features. We list the comparison results in Table \ref{tab:energy}, which shows consistent advantages of DC-GANs compared with other methods under different metrics as in previous examples. Notice that DC-GANs can be used with both Neural CDEs (NCDE) and Signature Wasserstein (SigW) discriminators, and in this example, DC-GANs with SigW as the discriminator present better performance and have a faster running time.




\begin{table*}[!h t]
\caption{Energy Consumption Data from Ireland's open data portal (Example 4). The scores are computed for SigWGAN, CTFP, Neural SDEs, and DC-GANs under different metrics. Note that a smaller value means a better approximation. Parenthesized numbers are standard deviations.}
\label{tab:energy}
\vskip 0.15in
\begin{center}
\begin{small}
\begin{sc}
\begin{tabular}{lccccr}
\toprule
Method  & Discriminative & Predictive & MMD & Independence & Time(min) \\
\midrule

SigWGAN & 0.368 (0.09)  & 0.159 (0.002)  & 0.135 (0.006) & 0.022(0.007) & 9.47\\
  CTFP & 0.487 (0.01)  & 0.185 (0.001) & 0.558 (0.006) & 0.021(0.008) & 8.52 \\
 Neural SDEs        &   0.413 (0.06)  & 0.172 (0.004)  & 0.126 (0.004) & 0.022(0.006) & 9.73 \\
 TTS-GAN & 0.394(0.04) & 0.167(0.003) & 0.183(0.008) & 0.022(0.006) & 17.4\\
 DC-GANs (w/ NCDE)   &   {0.322 (0.12)}     &  {0.155 (0.006)} & {0.077 (0.003)} & 0.029(0.007) & 23.44  \\
 DC-GANs (w/ SigW)   &   {\bf 0.310 (0.09)}     &  {\bf 0.151 (0.008)} & {\bf 0.075 (0.003)} & 0.033(0.008) & 9.38  \\
\bottomrule
\end{tabular}
\end{sc}
\end{small}
\end{center}
\vskip -0.1in
\end{table*}

\subsection{Dependence Elimination \& Ablation Study.} To demonstrate the effectiveness of removing dependence in the {\it Decorrelating and Branching Phase} (Section~\ref{sec:generator}), we compare the independence score \eqref{eq:independence_score} with choices of $q=2$ (basic model) and $q=10$ (DC-GANs), and present the results in Table \ref{tab:independence_compare}. A smaller score indicates better independence. The large differences observed in all four experiments suggest that the proposed scheme significantly reduces correlation in the directed chain generator. We anticipate that this approach can be employed in other directed chain-related methods to effectively mitigate strong dependence on generated data.





\begin{table*}[!h t]
    \caption{The comparison of independence scores between a basic model (with $q=2$) and DC-GANs (with $q=10$). A smaller score indicates better independence.}
    \label{tab:independence_compare}
    \vskip 0.15in
    \centering
    \begin{sc}
    \begin{tabular}{cllll} \hline
       Experiments  & Opinion (Exp.1) & Fitz-Nag (Exp.2) & Stock (Exp.3) & Energy(Exp.4)\\ \hline 
       $q=2$ (Basic) &0.078(0.013) & 0.052(0.009)& 0.130(0.024) & 0.119(0.017) \\
       $q=10$ (DC-GANs) & 0.009(0.004) & 0.0085(0.0027) & 0.013(0.006) & 0.033(0.008) \\ \hline
    \end{tabular}
    \end{sc}
\end{table*}


 We also conduct an ablation study on the discriminator by using an ordinary LSTM as the discriminator and implementing Wasserstein-GAN for comparison. The results, summarized in Table~\ref{tab:wgan}, indicate that DC-GANs (shown in Tables~\ref{tab:opinion}-\ref{tab:energy}) outperform Wasserstein-GAN with an LSTM discriminator in both accuracy and speed.


\begin{table*}[!h t]
\caption{The scores computed when the discriminator is replaced by an ordinary WGAN. Parenthesized numbers are standard deviations. Compared to Tables~\ref{tab:opinion}-\ref{tab:energy}, DC-GANs outperform Wasserstein-GAN with an LSTM discriminator in both accuracy and speed.}\label{tab:wgan}
\vskip 0.15in
    \centering
    \begin{sc}
    \begin{tabular}{clllll} \hline
       Experiments  & Discriminative & Predictive & MMD & Independence& Time (min) \\ \hline 
       Stoch. Opinion  & 0.041(0.016) & - & 0.123(0.004) & 0.008(0.003) & 19.5\\ 
       FitzHugh-Nagumo & 0.09(0.01) & 0.441(0.000) & 0.66 (0.07) &  0.0083(0.0022) & 26.2 \\ 
       Google Stock & 
0.082 (0.024) & 0.059 (0.002) & 0.0176 (0.006) & 0.010 (0.005)  & 20.85 \\ 
       Energy Consumption & 0.343 (0.13) & 0.161(0.004) & 0.093 (0.004) & 0.024(0.004) & 21.17 \\ \hline
    \end{tabular}
    \end{sc}
\end{table*}

\section{Conclusion}

We propose a novel time series generator, DC-GANs, motivated by the study of \citet{detering2020directed, ichiba2022smoothness} on directed chain SDEs (DC-SDEs). Compared to more complicated graph systems, we find from numerical examples that the directed chain systems exhibit promising ability in fitting time series of multimodal probability distributions. We prove in theory that DC-GANs have the same flexibility as the Neural SDEs in capturing marginal distributions, and DC-GANs naturally embrace the non-Markovian property in the topological structure, if needed. We also prove that the correlation of the generated path decays exponentially fast as the graph distance of the generated path from the original data becomes large under some mild assumptions, and hence, the lack-of-independence problem can be overcome by walking along the directed chain. We present four numerical examples, two synthetic datasets generated by the SDEs, and two real-world data of stock price and energy consumption, and show that DC-GANs have a better performance than SigWGAN, CTFP, Neural SDEs, TimeGAN and TTS-GAN, with the comparable independence property. {We remark that the DC-GANs algorithm can also work with irregular data (i.e., the sample paths may have data sampled on different time grids), which may happen in healthcare applications. }

{\bf Potential Societal Impact.} The proposed DC-GANs in this paper offer a fast and flexible generative adversarial network method for machine learning research areas, such as biology, economics, environmental science, finance, medicine, and more, where path-dependent analysis is critical. They have the potential to contribute to research areas where sequential data is scarce or missing. Furthermore, the strong predictive power of DC-GANs demonstrated in the energy consumption example (Example 4) can aid in the development of energy management and help reduce energy waste. As DC-GANs are primarily used to generate time-series data or sequential data, rather than fake faces, the usual negative social impact associated with creating fake social media accounts to spam would not be a concern in this context.





\section*{Acknowledgements}

R.H. was partially supported by the NSF grant DMS-1953035, and the Early Career Faculty Acceleration funding and the Regents’ Junior Faculty Fellowship at the University of California, Santa Barbara. T.I. was partially supported by NSF grant DMS-2008427. The authors are grateful to the reviewers for their valuable and constructive comments. 

\newpage
\bibliography{citations}
\bibliographystyle{icml2023}

\newpage
\appendix
\onecolumn
\clearpage
\appendix
\thispagestyle{empty}
\onecolumn
\section{Preliminaries on Directed Chain SDEs}\label{app:prelim}
In this appendix, we give an intuitive explanation of how the limit of an $n$-coupled SDE system leads to the DC-SDE. As before, we use $X_s, X_t$ to denote the state of $X$ at time $s$ and $t$, respectively. With no subscript, e.g., by $X$, we mean the whole path from $t=0$ to $T$. With a little abuse of notations, we use $X_i, X_{i+1}$ to denote the $i$-th or $(i+1)$-th node, and  with two subscripts, e.g., $X_{i,t}$, it represents the state value of the $i$-th node at time $t$.

Let us start with a system of $n$-coupled SDEs, which approximates a generic directed chain SDE when $n$ goes infinity. An illustration of their chain-like coupling is given in Figure \ref{fig:chain}(a). Each node $X_i$ satisfies an SDE, which also depends on the node pointing to it. More specifically, for $i \leq n-1$, $X_{i+1}$ depends on $X_{i}$, and we say the $i$-th node is the neighborhood of the $(i+1)$-th node; the $n$-th node affects the first one, yielding a circular structure. The dependence is determined in a homogeneous manner over the whole system of particles. Such a circular chain structure forces every node to be identically distributed. When $n$ goes to infinity, the circle chain is equivalent to a non-circular chain with the {\it distribution constraint} (see Definition \ref{def:dc_sde}), and we get the abstract DC-SDE. We refer interested readers to  \citet{detering2020directed, ichiba2022smoothness} for more details.

\begin{figure}[ht]
\centering
\subfloat[Circular structure in the $n$-coupled SDEs]{\quad \qquad \includegraphics[width=0.21\textwidth]{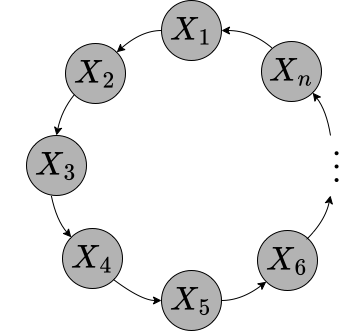}\qquad \quad} \\
\vspace{.2in}
\subfloat[None-Circular structure in the DC-SDEs ]{\includegraphics[width=0.35\textwidth]{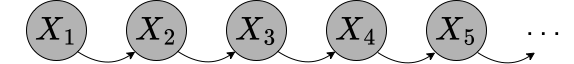}}
\vspace{.1in}
\caption{Illustrative Directed Chain Structure. Each node $X_i$ satisfies an SDE, which also depends on the node pointing to it.}
\label{fig:chain}

\end{figure}

\section{Additional Theorems and Proofs}\label{app:proof}

\subsection{Property of Signatures}
\label{app:sig_decay}
We provide some related properties and theorems of signatures.
Firstly, we justify the validity of truncating signatures in Section~\ref{sec:SignatureDef}.  The signature of a path is an infinite series of iterated integrals, which can be used to represent the path. Practically, one can only deal with finite sequences, thus requiring truncating signatures up to a finite order (called the signature depth). This truncation order is, in general, determined by its factorial decay property, which is indicated in the following extension theorem. For the formal definition of control $\omega$ and $\beta$, we refer interested readers to the book by \citet{lyons2002system}. 

\begin{thm}[Extension Theorem,  {\citet[Theorem 3.7]{lyons2002system}}] 
\label{extension_thm} 
Let $p\ge 1$ be a real number and $n\ge 1$ an integer with $n\ge\lfloor p \rfloor$. Denote $\bX: \Delta_T \to T^n(\RR^d)$ as a multiplicative functional with finite $p$-variation controlled by a control $\omega$. 
Then there exists a unique extension of $\bX$ to a multiplicative functional $\Delta_T\to T((\RR^d))$ which possesses finite $p$-variation.

More precisely, for every $m\ge \lfloor p\rfloor + 1$, there exists a unique continuous function $\bX^m:\Delta_T\to (\RR^d)^{\bigotimes m}$, such that
$$(s,t) \to \bX_{s,t}=\left( 1, \bX^1_{s,t}, \dots, \bX^{\lfloor p\rfloor}_{s,t}, \dots, \bX^m_{s,t}, \dots \right) \in T((\RR^d))$$
is a multiplicative functional with finite $p$-variation controlled by $\omega$, i.e.,
\begin{equation}
\label{extension_eq}
	\|\bX^i_{s,t}\| \le \frac{\omega(s,t)^{\frac{i}{p}}}{\beta (\frac{i}{p})!} \,\,\,\, \forall i\ge 1, \,\,\,\, \forall (s,t)\in \Delta_T.
\end{equation}
\end{thm}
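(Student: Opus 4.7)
The plan is to construct the extension one level at a time by induction on $m \ge \lfloor p \rfloor + 1$, using a Riemann-type approximation over nested partitions of $[s,t]$ and verifying the multiplicative (Chen) identity and the factorial bound \eqref{extension_eq} simultaneously. Fix $(s,t) \in \Delta_T$ and assume $\bX^1, \ldots, \bX^{m-1}$ have been built, all satisfying \eqref{extension_eq}. For a partition $D = \{s = t_0 < t_1 < \cdots < t_r = t\}$, I would set
$$
\bX^{m,D}_{s,t} := \Pi_m\!\left( \bX_{t_0,t_1} \otimes \bX_{t_1,t_2} \otimes \cdots \otimes \bX_{t_{r-1},t_r} \right),
$$
where the tensors live in the truncated algebra $T^{m}((\RR^d))$ with the already-built lower components plugged in, and $\Pi_m$ denotes the projection onto the degree-$m$ component. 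The first task is to show that $\bX^{m,D}_{s,t}$ is Cauchy as $|D|\to 0$: under a single-point refinement $D \mapsto D \cup \{u\}$, the level-$m$ increment telescopes into a sum of tensor products of already-extended components $\bX^{j_1}_{t_{i-1},u}\otimes \bX^{j_2}_{u,t_i}$ with $j_1+j_2 = m$ and $j_1,j_2 \ge 1$, each of which is controllable by the inductive hypothesis.

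The main obstacle is upgrading mere convergence into the sharp factorial bound $\|\bX^{m}_{s,t}\| \le \omega(s,t)^{m/p}/(\beta (m/p)!)$ with the \emph{same} constant $\beta$ at every level. The key tool is Lyons' neo-classical inequality: for $p \ge 1$ and $x,y \ge 0$,
$$
\sum_{k=1}^{m-1} \frac{x^{k/p}}{(k/p)!}\,\frac{y^{(m-k)/p}}{((m-k)/p)!} \;\le\; \frac{\beta}{\beta^2}\cdot \frac{(x+y)^{m/p}}{(m/p)!}.
$$
Applied with $x = \omega(s,u)$ and $y = \omega(u,t)$, combined with the superadditivity of the control $\omega$, this bounds the level-$m$ part of one refinement by a quantity already carrying the required factorial denominator, and the excess cross-terms of orders $\le \lfloor p \rfloor$ can be absorbed by choosing $u$ to roughly halve $\omega$. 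Iterating along a dyadic sequence of refinements and summing the resulting geometric-type series, which converges precisely because $m/p > 1$, yields the uniform estimate \eqref{extension_eq} for $\bX^m_{s,t}$ in the limit.

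Once the bound is secured, I would define $\bX^m_{s,t}$ as the mesh-zero limit of $\bX^{m,D}_{s,t}$; continuity in $(s,t)$ is inherited from the estimate, and the Chen identity $\bX^m_{s,t} = \sum_{k=0}^{m} \bX^{k}_{s,u} \otimes \bX^{m-k}_{u,t}$ (with $\bX^0 \equiv 1$) follows by splitting any partition of $[s,t]$ at $u$ and passing to the limit in each factor. For uniqueness, any competing extension $\widetilde{\bX}^m$ with finite $p$-variation and the multiplicative property must agree with $\bX^m$ on arbitrarily fine partitions, since the multiplicative relation forces it to coincide with the same Riemann-type limit; the factorial bound on the difference then forces it to vanish. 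This closes the induction and produces the full series $(\bX^m)_{m \ge 1}$ as an element of $T((\RR^d))$.
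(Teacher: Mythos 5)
The paper does not actually prove this statement: it is Lyons' Extension Theorem, imported verbatim with a citation to \citet[Theorem 3.7]{lyons2002system} purely to justify truncating signatures via the factorial decay in \eqref{extension_eq}, so there is no in-paper proof to compare yours against. Measured against the proof in the cited source, your sketch has the right skeleton — induction on the level $m$, Riemann-type products over partitions of $[s,t]$ whose degree-$m$ component is shown to converge, the neo-classical inequality to propagate the factorial bound with a level-independent constant, and uniqueness from the multiplicative property plus the bound — so I would not call the approach wrong.

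Several steps are garbled in ways you would have to repair in a full write-up. First, the neo-classical inequality does not involve $\beta$ at all: it reads $\sum_{k=0}^{m}\frac{x^{k/p}}{(k/p)!}\frac{y^{(m-k)/p}}{((m-k)/p)!}\le p^{2}\,\frac{(x+y)^{m/p}}{(m/p)!}$, and your constant ``$\beta/\beta^{2}$'' conflates it with the constant $\beta$ of \eqref{extension_eq}, which is chosen only at the very end, large enough (roughly $\beta\ge p^{2}\bigl(1+\sum_{r\ge 3}(2/(r-2))^{(\lfloor p\rfloor+1)/p}\bigr)$) to absorb the series produced by the partition argument; keeping these two constants separate is exactly what lets the same $\beta$ work at every level. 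Second, the convergence step is cleaner (and is standardly done) in the coarsening direction: from a partition with $r$ interior points, superadditivity of $\omega$ yields a point $t_{j}$ with $\omega(t_{j-1},t_{j+1})\le 2\omega(s,t)/(r-1)$; removing it changes the level-$m$ term by $\sum_{k=1}^{m-1}\|\bX^{k}_{t_{j-1},t_{j}}\|\,\|\bX^{m-k}_{t_{j},t_{j+1}}\|\le \frac{p^{2}}{\beta^{2}}\frac{\omega(t_{j-1},t_{j+1})^{m/p}}{(m/p)!}$ by the inductive hypothesis and the neo-classical inequality, and summing over $r$ converges precisely because $m/p>1$. Your ``choose $u$ to roughly halve $\omega$'' and ``absorb the excess cross-terms of orders $\le\lfloor p\rfloor$'' do not correspond to steps of the actual argument — all cross terms $1\le k\le m-1$ are handled uniformly, and it is the choice of the removed point, not a halving, that makes the series summable. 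Third, for uniqueness the crisp argument is that if two multiplicative extensions agree below level $m$, their level-$m$ difference is an \emph{additive} functional dominated by $\omega^{m/p}$ with $m/p>1$, hence identically zero; your partition-limit phrasing gestures at this but skips the additivity observation that makes it immediate. None of these is a conceptual gap, but as written the key quantitative lemma is misstated.
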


The extension theorem states that, for any multiplicative functional with finite $p$-variation, we can extend it to an infinite sequence. In particular, the signature is one of  such objects for some special multiplicative functionals (what we call geometric rough paths).
In most real-world applications, time series data are interpolated linearly and hence fall into the case of $p=1$, i.e., paths with bounded variation. In certain financial applications or the first two examples in this paper, we have semi-martingales that fall into the cases of $p\in(2,3)$, that is, the geometric rough paths with finite $p$-variation. Their signatures are all well-defined. 
The factorial decay property is implied by equation \eqref{extension_eq}.


{
As a feature map of sequential data, the signature has a
universality detailed in the following theorem.

\begin{thm}[Universality]
    Let $p\ge 1$ and $f: \mathcal{V}^p([0,T], \R^d)\to \R$ be a continuous function in paths. For any compact set $K\subset \mathcal{V}^p([0,T], \R^d)$, if $S(x)$ is a geometric rough path for any $x\in K$, then for any $\eps >0$ there exist $M>0$ and a linear functional $l \in T((\RR^d))^\ast$ such that
    \begin{equation}
        \sup_{x\in K} |f(x) - \langle l, S(x) \rangle| <\epsilon.
    \end{equation}
    \label{thm:sig_universality}
\end{thm}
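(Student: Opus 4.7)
The plan is to apply the Stone--Weierstrass theorem on the compact space $K$ to the algebra of real-valued functions of the form $x \mapsto \langle l, S(x)\rangle$ with $l$ ranging over the algebraic dual $T((\mathbb{R}^d))^\ast$ of the tensor algebra. Continuity of these functions is a standard consequence of the continuity of the signature map in $p$-variation on geometric rough paths, so I may regard
$$\mathcal{A} := \{ x \mapsto \langle l, S(x) \rangle : l \in T((\mathbb{R}^d))^\ast \}$$
as a linear subspace of $C(K, \mathbb{R})$ and try to show it is uniformly dense there.

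I would then verify the three hypotheses of Stone--Weierstrass in turn. First, $\mathcal{A}$ contains the constants: taking $l$ to be the projection onto the zeroth tensor level yields the constant function $1$, since $S(x)$ always has $1$ as its zeroth entry. Second, $\mathcal{A}$ is a subalgebra of $C(K,\mathbb{R})$: this is the crucial shuffle identity, which asserts that for any $l_1, l_2 \in T((\mathbb{R}^d))^\ast$ there exists $l \in T((\mathbb{R}^d))^\ast$ (constructed from the shuffle product of $l_1$ and $l_2$) such that $\langle l_1, S(x)\rangle \cdot \langle l_2, S(x)\rangle = \langle l, S(x)\rangle$ for every geometric rough path $x$. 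Third, $\mathcal{A}$ separates points of $K$: since $S(x)$ is a geometric rough path for every $x \in K$, signatures characterize paths uniquely up to tree-like equivalence (as recalled in Section~\ref{sec:SignatureDef}), and under the strictly-monotone-coordinate assumption this equivalence collapses to equality on $K$. Hence whenever $x \neq y$ in $K$, some tensor coordinate of $S(x) - S(y)$ is nonzero, and the dual basis element at that coordinate gives an $l$ with $\langle l, S(x)\rangle \neq \langle l, S(y)\rangle$.

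Stone--Weierstrass on the compact Hausdorff space $K$ then gives density of $\mathcal{A}$ in $C(K, \mathbb{R})$ under the uniform norm, yielding $l \in T((\mathbb{R}^d))^\ast$ with $\sup_{x \in K} |f(x) - \langle l, S(x)\rangle | < \eps$. Since every element of the algebraic dual of $T((\mathbb{R}^d))$ is supported on finitely many tensor levels, such an $l$ pairs only with the truncated signature $S^M(x)$ for some finite $M$, delivering the $M$ claimed in the statement.

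I expect the point-separation step to be the main obstacle, since it is the only place where the qualitative hypothesis on $K$ enters. If the class $K$ permits nontrivial tree-like equivalent pairs, any continuous $f$ that distinguishes them cannot be approximated by signature functionals, and the conclusion fails. The clean way to handle this is to require (either as part of the definition of $K$ or as a standing assumption in the paper) that each path have at least one strictly monotone coordinate, as discussed after the definition of the signature; the remaining ingredients — continuity of the signature map, the shuffle identity, and the finite-support nature of algebraic dual elements — are classical in rough path theory and require no further computation.
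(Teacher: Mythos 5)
The paper does not actually prove this theorem: it is stated in Appendix~\ref{app:sig_decay} as a known result from rough path theory, with no argument supplied. Your proposal is the classical Stone--Weierstrass proof that underlies the cited result, and it is essentially correct: constants come from the zeroth tensor level, the subalgebra property is exactly the shuffle identity (which is where the hypothesis that each $S(x)$ is a \emph{geometric} rough path is used), and point separation follows from injectivity of the signature on $K$. Two remarks. First, the point-separation caveat you raise is genuine: as stated, the theorem is false if $K$ contains two distinct, tree-like equivalent paths and $f$ distinguishes them, so the standing assumption (stated informally after the definition of the signature in Section~\ref{sec:SignatureDef}) that at least one coordinate is strictly increasing --- or that one works modulo tree-like equivalence --- must be read into the hypothesis on $K$; this is a gap in the paper's statement rather than in your argument. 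Second, your claim that ``every element of the algebraic dual of $T((\RR^d))$ is supported on finitely many tensor levels'' is not literally true for the full algebraic dual of a space of formal tensor series; the standard convention in this literature is that $T((\RR^d))^\ast$ denotes the graded dual, identified with the free tensor algebra $T(\RR^d)$ of finitely supported words, under which your extraction of the truncation level $M$ is immediate. With those two clarifications, your argument is the standard and complete one.
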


Given that signature is well-defined and with finite expectation, we call $\EE[S(X)]$ the expected signature of $X$. Intuitively, the expected signature serves the moment-generating function, which can characterize the law induced by a stochastic process under some regularity conditions. More precisely, an immediate consequence of Proposition 6.1 in \citet{chevyrev2018signature} on the uniqueness of the expected signature is summarized in the below theorem:

\begin{thm}
    Let $X, Y$ be two random variables of geometric rough paths such that $\EE[S(X)]] = \EE[S(Y)]$ and $\EE[S(X)]$ has an infinite radius of convergence, then $X, Y$ have the same distribution. 
\end{thm}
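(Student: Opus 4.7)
The plan is to reduce this distributional uniqueness statement to a moment-problem argument using the shuffle algebra structure of signatures, following the spirit of Chevyrev--Oberhauser. The key observation is that, although signatures are infinite-dimensional objects, the fact that $\EE[S(X)]=\EE[S(Y)]$ already encodes all ``linear-in-signature'' moments, and the shuffle identity will then upgrade this to all \emph{polynomial-in-signature} moments.

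First I would recall the shuffle product identity for geometric rough paths: for any two tensor-algebra functionals $u,v \in T((\RR^d))^*$ and any geometric rough path $x$, the pointwise product $\langle u,S(x)\rangle \langle v,S(x)\rangle$ equals $\langle u \ast v,S(x)\rangle$ for a suitable element $u\ast v$ built from $u$ and $v$ via shuffling. This immediately implies that the family $\mathcal{A} = \mathrm{span}\{x\mapsto \langle l,S(x)\rangle : l\in T((\RR^d))^*\}$ is a unital \emph{subalgebra} of continuous functionals on path space. The hypothesis $\EE[S(X)]=\EE[S(Y)]$ then yields $\EE[\varphi(X)]=\EE[\varphi(Y)]$ for every $\varphi \in \mathcal{A}$.

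Next I would invoke the universality statement (Theorem \ref{thm:sig_universality}) together with a Stone--Weierstrass argument: after augmenting paths with a monotone time coordinate so that signatures separate paths (removing tree-like equivalence), the algebra $\mathcal{A}$ separates points and contains constants, hence is uniformly dense in $C(K)$ for every compact $K$ in the path space. So on any compact set we already have $\int_K \varphi \,\mathrm{d}\mu_X = \int_K \varphi \,\mathrm{d}\mu_Y$ for a dense family of test functions, which is the local version of what we want.

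The main obstacle, as in every moment problem, is passing from this \emph{local} (compact-set) equality to \emph{global} equality of laws on the non-compact space of rough paths; this is precisely where the infinite radius of convergence assumption enters, playing the role of a Carleman-type condition. The plan is to use $\EE[S(X)]$ having infinite radius of convergence to control the tail norms $\EE[\|S^n(X)\|]$, so that truncating the signature at order $M$ introduces an error that vanishes uniformly as $M\to\infty$ and also furnishes tightness of both $\mathrm{Law}(X)$ and $\mathrm{Law}(Y)$. Combining tightness with uniform approximation of any bounded continuous $f$ on a sufficiently large compact set by elements of $\mathcal{A}$ (truncated appropriately), one deduces $\EE[f(X)]=\EE[f(Y)]$ for all $f\in C_b$, which gives $\mathrm{Law}(X)=\mathrm{Law}(Y)$. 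I expect the delicate bookkeeping to lie in choosing the truncation order $M$ and the compact exhaustion consistently, so that the tail estimates from the infinite radius assumption dominate the Stone--Weierstrass approximation error.
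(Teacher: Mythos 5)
First, a point of comparison: the paper does not actually prove this statement. It is imported as ``an immediate consequence of Proposition 6.1 in Chevyrev and Kormilitzin (2018)'', which in turn rests on the uniqueness theorem of Chevyrev and Lyons (2016). So any complete argument you write necessarily goes beyond what the paper records, and the relevant question is whether your sketch would close on its own.

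It assembles the right ingredients --- the shuffle identity making $\mathcal{A}=\mathrm{span}\{\langle l, S(\cdot)\rangle\}$ a unital, point-separating algebra; Stone--Weierstrass; the infinite radius of convergence as a Carleman-type hypothesis --- but the final, local-to-global step is a genuine gap, not bookkeeping. Knowing that $\EE[\varphi(X)]=\EE[\varphi(Y)]$ for all $\varphi\in\mathcal{A}$ and that $\mathcal{A}$ is uniformly dense in $C(K)$ for each compact $K$ does \emph{not} give equality of laws: the approximants $\varphi$ are unbounded off $K$, so transferring $\EE[f(X)]=\EE[f(Y)]$ to bounded continuous $f$ requires controlling $\EE[\lvert\varphi\rvert\,\I_{K^c}]$ uniformly along the approximating sequence, and this is exactly the failure mode of the indeterminate multidimensional moment problem --- matching all polynomial moments plus tightness is insufficient in general. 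Your plan to ``choose the truncation order $M$ and the compact exhaustion consistently'' is where this difficulty is hidden, and the infinite radius of convergence does not obviously close it along that route; even on $\RR$ the analogous step is closed by a quasi-analyticity/analytic-continuation argument for characteristic functions, not by Stone--Weierstrass on compacts. The Chevyrev--Lyons proof sidesteps the issue by never testing against unbounded functionals: it tests against matrix coefficients of finite-dimensional unitary representations of the group-like elements, which are \emph{bounded}, separate points, and form a $*$-algebra, so a Stone--Weierstrass argument determines the law with no tail control at all; the infinite radius of convergence enters only at the separate, purely analytic step of recovering each such bounded characteristic functional from $\EE[S(X)]$ by term-by-term summation of its level expansion. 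To salvage your shuffle/moment route you would need a genuine rough-path analogue of Carleman's theorem rather than the compact-exhaustion argument you outline; otherwise the cleanest fix is to restructure the proof around bounded characteristic functionals as above.
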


}

\subsection{Proof of Theorem \ref{thm:flexibility}}\label{app:flexibility}
We first restate Theorem \ref{thm:flexibility} formally. Without loss of generality, we treat the time-homogeneous case, i.e.,  $\mu$ and $\sigma$ are independent of $t$. 
Our proof relies on constructing the forward equations characterizing marginal distributions of both SDEs and directed chain SDEs, thus can be easily generalized to  time-dependence cases. The forward equation associated with directed chain SDEs has been constructed by \citet{ichiba2022smoothness} and will be used directly in our proof. 
\begin{thm}
Let $Y\in L^2(\Omega\times[0,T], \RR^N)$ be an $N$-dimensional stochastic process with the following dynamics
$$\ud Y_t = \mu(Y_t)\ud t + \sigma(Y_t) \ud B^y_t, \quad Y_0=\xi^y,$$
where $B^y$ is a standard $d$-dimensional Brownian motion, and $\mu : \RR^N\to \RR^N, \sigma : \RR^N \to \RR^{N\times d}$ are Borel measurable functions with Lipschitz and linear growth conditions. Then, there exist functions $V_0$ and $V_1$ such that the process $X$ has the same marginal distribution as $Y$ for all $t\in [0, T]$, where $X$ is described by the following directed chain SDEs with an initial position $\xi$ as an independent copy of $\xi^y$,
\begin{align*}
    &\ud X_t = V_0(X_t, \Tilde{X}_t)\ud t + V_1(X_t, \Tilde{X}_t) \ud B^y_t, \quad X_0=\xi, \\
    &\text{subject to: } \Law(X_t, 0\le t\le T) = \Law(\Tilde{X}_t, 0\le t\le T).
\end{align*}

\end{thm}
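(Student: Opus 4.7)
My plan is to prove the theorem by matching Fokker-Planck (forward Kolmogorov) equations for the marginals of $Y_t$ and $X_t$. Standard theory for the Markovian SDE governing $Y$ gives that its density $p^Y(t,y)$ satisfies
\begin{equation*}
\partial_t p^Y = -\nabla_y\!\cdot(\mu\, p^Y) + \tfrac{1}{2}\sum_{i,j}\partial^2_{y_iy_j}\bigl((\sigma\sigma^T)_{ij}\,p^Y\bigr),
\end{equation*}
with initial condition $p^Y(0,\cdot)=\Law(\xi^y)$. For the DC-SDE, applying It\^o's formula to $f(X_t)$ and integrating against the joint density $q(t,x,\tilde x)$ of $(X_t,\tilde X_t)$ — whose existence and smoothness under proper assumptions is supplied by \citet{ichiba2022smoothness} — yields that the $X$-marginal $p^X(t,x)=\int q(t,x,\tilde x)\,d\tilde x$ solves the same form of forward equation but with coefficients replaced by the conditional expectations $\bar V_0(t,x):=\EE[V_0(X_t,\tilde X_t)\mid X_t=x]$ and $\overline{V_1V_1^T}(t,x):=\EE[V_1V_1^T(X_t,\tilde X_t)\mid X_t=x]$. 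It therefore suffices to choose $V_0,V_1$ so that $\bar V_0(t,x)=\mu(x)$ and $\overline{V_1V_1^T}(t,x)=\sigma(x)\sigma(x)^T$, and to take $\Law(\xi)=\Law(\xi^y)$.

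For the explicit construction I would set $V_0(x,\tilde x):=\mu(x)$ and $V_1(x,\tilde x):=\sigma(x)\,O(\tilde x)$, where $O:\RR^N\to\RR^{d\times d}$ is a smooth non-constant map whose values are orthogonal matrices (for instance a planar rotation by an angle depending smoothly on $\tilde x$, when $d\ge 2$). The first conditional identity then holds trivially. The second holds pointwise in $\tilde x$, since $V_1V_1^T=\sigma(x)O(\tilde x)O(\tilde x)^T\sigma(x)^T=\sigma(x)\sigma(x)^T$, and so in particular $\overline{V_1V_1^T}(t,x)=\sigma\sigma^T(x)$ regardless of the unknown conditional law of $\tilde X_t$ given $X_t$. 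Crucially $V_1$ depends non-trivially on $\tilde x$, so it is non-degenerate in the sense of the theorem statement. Because $V_0$ and $V_1$ inherit Lipschitz continuity and linear growth from those of $\mu,\sigma$ and the smoothness of $O$, the existence and weak-uniqueness results of \citet{detering2020directed} and \citet{ichiba2022smoothness} furnish a unique weak solution to \eqref{eq:dc_sde}--\eqref{eq:dc_sde_a}. Since $p^X$ and $p^Y$ then solve the same Fokker-Planck PDE with identical initial data, uniqueness of that PDE yields $\Law(X_t)=\Law(Y_t)$ for every $t\in[0,T]$.

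The main obstacle I foresee is not the algebraic matching above but the analytic justification of the forward equation for $p^X$: passing from It\^o's formula to a pointwise PDE requires interchanging differentiation in $x$ with integration in $\tilde x$, which needs enough regularity of $q$ and sufficient non-degeneracy of the effective diffusion. These are exactly the hypotheses under which the forward equation for DC-SDEs is derived in \citet{ichiba2022smoothness}, so the plan is to verify that our concrete choice $V_0(x,\tilde x)=\mu(x)$, $V_1(x,\tilde x)=\sigma(x)O(\tilde x)$ satisfies their standing assumptions — this is the content of the ``proper assumptions'' invoked in the theorem statement. A minor secondary issue is that a smooth non-constant orthogonal-matrix-valued $O$ requires $d\ge 2$; in the scalar case $d=1$ one can instead encode non-degeneracy into the drift via a mean-zero perturbation $V_0(x,\tilde x)=\mu(x)+g(\tilde x)-\bar g(t,x)$ with $\bar g(t,x)=\EE[g(\tilde X_t)\mid X_t=x]$, so that $\bar V_0=\mu$ still holds by construction, at the cost of closing the system by a fixed-point argument rather than the closed-form recipe above.
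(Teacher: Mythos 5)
Your argument follows the same skeleton as the paper's proof: reduce the claim to matching the Kolmogorov/Fokker--Planck equations governing the marginals, which in turn reduces to matching the drift and the squared diffusion after the neighborhood variable is averaged out, with existence, weak uniqueness, and the requisite regularity delegated to \citet{detering2020directed,ichiba2022smoothness}. Where you genuinely depart from the paper is in the construction of the coefficients. The paper takes $V_0(x,\tilde \xi)=\mu(x)+\varphi_1(\tilde \xi)-\EE_{\tilde\xi}[\varphi_1(\tilde \xi)]$ and chooses $V_1$ so that $V_1V_1^{\top}(x,\tilde \xi)=\sigma\sigma^{\top}(x)+\varphi_2(\tilde \xi)-\EE_{\tilde\xi}[\varphi_2(\tilde \xi)]$: an additive mean-zero perturbation that matches the operators only after averaging against the appropriate law of the neighborhood, and which tacitly requires the perturbed matrix to remain positive semidefinite so that a square root $V_1$ exists. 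Your multiplicative twist $V_1(x,\tilde x)=\sigma(x)O(\tilde x)$ makes $V_1V_1^{\top}=\sigma\sigma^{\top}(x)$ hold \emph{pointwise} in $\tilde x$, so no averaging measure needs to be identified and positive semidefiniteness is automatic. In fact your construction proves more than you claim: since $\tilde X$ is independent of $B$, the process $\int_0^{\cdot}O(\tilde X_s)\ud B_s$ is a Brownian motion by L\'evy's characterization, so your $X$ is a weak solution of the original SDE for $Y$; by weak uniqueness the full path laws agree, the distributional constraint is satisfied by taking $\tilde X$ to be an independent copy of $Y$, and the forward-equation machinery is not even needed. Two caveats. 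First, as you note, the orthogonal trick requires $d\ge 2$, and your $d=1$ fallback is essentially the paper's additive construction (applied to the drift), including its fixed-point subtlety about which law the centering expectation is taken against. Second, the assertion that $V_1=\sigma(x)O(\tilde x)$ ``inherits'' the joint Lipschitz condition is not quite right when $\sigma$ is unbounded, since $\partial_{\tilde x}V_1=\sigma(x)\,\partial_{\tilde x}O(\tilde x)$ grows linearly in $x$; this blemish is shared by the paper's toy example and disappears under the bounded-derivative hypotheses (Assumption~\ref{assump:smoothness_density}) actually imposed in the appendix, but it deserves a sentence if you want the construction to sit inside the cited well-posedness framework.
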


\begin{proof}
Let $g\in\mc{C}^2(\RR^N)$ be a twice continuously differentiable function. To characterize marginal distributions of the SDE solution $Y$ for all $t\in[0,T]$, we use the Kolmogorov forward equations. Define $u(t, x):= \EE[g(Y_t)|Y_0=x]$, it is the solution of the following Cauchy problem
\begin{align}
    (\partial_t - \mc{L})u(t, x) &= 0, \label{eq:pde1}\\
     u(0, x) & = g(x).\label{eq:bd1}
\end{align}
The derivation relies on It\^o's formula and can be found in stochastic calculus textbooks, e.g., in  \citet{karatzas2012brownian}. Here the \textit{infinitesimal operator} $\mc{L}$ is given by
$$\mc{L}g(x) = \mu(x) \cdot \nabla_x g(x) + \frac{1}{2}\text{Tr}(\sigma \sigma^T(x) \text{Hess}_{x} g(x)),$$
where $\text{Hess}_x(\cdot)$ denotes the Hessian matrix, and $\text{Tr}(\cdot)$ denotes the matrix trace. In \citet[Section 4.5]{ichiba2022smoothness}, a similar partial differential equation for the directed chain SDEs is derived, and we here summarize a simpler version without the mean-field interaction term. Define $v(t, x):=\EE[g(X_t)|X_0=x]$, then $v$ solves 
\begin{align}
    (\partial_t - \mc{L}^{dc})v(t, x) &= 0, \label{eq:pde2}\\
     v(0, x) &= g(x).\label{eq:bd2}
\end{align}
Let $\Tilde{\xi}$ be an independent copy of $\xi$, and the differential operator $\mc{L}^{dc}$ is given by
\begin{align}
\label{eq:operator_L}
    \mc{L}^{dc} g(x) =&\EE_{\tilde{\xi}}\bigg[V_0(x, \tilde{\xi})\cdot \nabla_{x}g(x) + \frac{1}{2} \text{Tr}(V_1 V_1^T(x, \tilde{\xi})\text{Hess}_{x}g(x))\bigg],
\end{align}
where $\EE_{\tilde{\xi}}$ is the expectation with respect to the distribution of $\tilde{\xi}$. 
As long as we can match these two operators $\mc{L}$ and $\mc{L}^{dc}$ with some non-degenerate choices of $V_0, V_1$, then \eqref{eq:pde1}-\eqref{eq:bd1} and \eqref{eq:pde2}-\eqref{eq:bd2} agree with each other and so do their solutions $u$ and $v$. To this end, it suffices to choose $V_0, V_1$ such that
\begin{align*}
    &  \EE_{\Tilde{\xi}}[V_0(x, \tilde{\xi})] = \mu(x),\\
    & \EE_{\Tilde{\xi}}[V_1 V_1^\top(x, \tilde{\xi})] = \sigma\sigma^\top(x).
\end{align*}
A toy example of non-degenerate $V_0, V_1$ can be $V_0(x, \tilde{\xi}) = \mu(x) + \varphi_1(\tilde{\xi})-\EE_{\tilde{\xi}}[\varphi_1(\tilde{\xi})]$ and $V_1(x, \tilde{\xi})$ such that $ V_1 V_1^\top(x, \tilde{\xi}) = \sigma\sigma^\top(x) + \varphi_2(\tilde{\xi})-\EE_{\tilde{\xi}}[\varphi_2(\tilde{\xi})]$
with measurable and integrable functions $\varphi_1, \varphi_2$. 
\end{proof}

\subsection{Proof of Theorem \ref{thm:dependence_decay}}\label{app:dependence-decay}
From \citet[Proposition 2.1]{ichiba2022smoothness}, we have the existence and weak uniqueness of directed chain SDEs. Denote this unique measure flow by 
$$m := \Law(X_t, 0\le t\le T) = \Law(\Tilde{X}_t, 0\le t\le T).$$
This measure can also be understood as a probability distribution on $C([0,T], \RR^N)$.
Given the Brownian motion path and the neighborhood path, we define a map $\Phi: C([0,T], \RR^N)\times C([0,T], \RR^d) \to C([0,T], \RR^N)$ such that
$$X = \Phi(\Tilde{X}; B) \in C([0,T], \RR^N)$$
and $\Phi_t$ as the projection of $\Phi$ onto any specific time stamp, i.e. $X_t \equiv \Phi_t(\Tilde{X}; B)$. Then, on a chain-like structure depicted in Figure \ref{fig:chain}{(b)} or \ref{fig:branch}, we write
$$X_q = \Phi(X_{q-1}; B^{q}) = \Phi( \Phi(X_{q-2}; B^{q-1}); B^{q}) =\Phi \circ \Phi (X_{q-2}; B^{q-1}, B^{q}).$$
Namely, $X_q$ is obtained as an output of the composite map $\Phi\circ\Phi$ from the inputs $X_{q-2}$, $B^{q-1}$ and $B^q$. 
Repeating the above equation until tracing back to the first node produces
$$X_q = \Phi\circ \cdots \circ \Phi(X_1; B^{2}, \dots, B^{q}) := \Phi^{q}(X_1; \mathbf{B}),$$
where $\mathbf{B}=(B^{2}, \dots, B^{q})$ and $B^2, \ldots, B^q$ are independent $d$-dimensional Brownian motions.
Such a chain-like structure possesses \textit{local Markov property} as pointed out in Proposition 4.6 in \citet{ichiba2022smoothness}.  Let us denote $X_{t, q}=\Phi^q_t(X_1; \mathbf{B})$. In the proof below, we impose Lipschitz and linear growth conditions on coefficients $V_0$ and  $V_1$.
\begin{assump} \label{assump:lip_linear}
For both coefficients $V_0$ and $V_1$, there exists a positive constant $C_T$ such that,
\begin{enumerate}
    \item (Lipschitz conditions) for $i=0,1$, 
    $$\lvert V_i(x_1, y_1) - V_i(x_2, y_2) \rvert \le C_T(|x_1-x_2| + |y_1-y_2|);$$
    \item (Linear growth conditions) for $i=0,1$, 
    $$V_i(x, y) \le C_T(1+|x|+|y|).$$
\end{enumerate}
    
\end{assump}

The following lemma gives the necessity of having the Brownian motion noises $B^{j}$, $j = 2, \dots, q$ in $\Phi^q_t$, in order to have dependence decay properties. 
\begin{lem}\label{lem:degenerate_case}
Suppose Assumption \ref{assump:lip_linear} holds. In the degenerate case, i.e., $V_1\equiv 0$ and $X_{t, q}=\Phi^q_t(X_1)$, if all the initial conditions $X_{0,1}=X_{0, 2}=\cdots=X_{0, q}=\xi$ are identical, then the directed chain SDE satisfy $X_1=X_2=\cdots X_q$ in the $L^2$ sense. 
\end{lem}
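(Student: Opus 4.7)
The plan is to reduce the DC-SDE to a deterministic ODE when $V_1\equiv 0$, use the distributional constraint \eqref{eq:dc_sde_a} together with the deterministic initial value $\xi$ to identify the unique solution with that ODE's unique curve, and then propagate the identification along the chain by a trivial induction.

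First I would introduce a canonical deterministic candidate. Under Assumption~\ref{assump:lip_linear}, the ODE $\phi(t)=\xi+\int_0^t V_0(s,\phi(s),\phi(s))\ud s$ admits a unique solution $\phi\in C([0,T],\RR^N)$, and taking $X=\tilde X=\phi$ verifies that $\phi$ is indeed a solution of \eqref{eq:dc_sde}--\eqref{eq:dc_sde_a} with $V_1\equiv 0$. Next I would show that any DC-SDE solution $(X_1,\tilde X_1)$ with $V_1\equiv 0$ and $X_{1,0}=\xi$ must coincide with $\phi$ in $L^2$. Subtracting the integral equations for $X_1$ and $\phi$ and applying the Lipschitz estimate for $V_0$ gives, after taking $L^2$-norms,
\[
u(t):=\|X_{1,t}-\phi(t)\|_{L^2}\;\le\; C_T\int_0^t \bigl(u(s)+v(s)\bigr)\ud s,\qquad v(s):=\|\tilde X_{1,s}-\phi(s)\|_{L^2}.
\]
Because $\phi$ is deterministic and $\Law(X_1)=\Law(\tilde X_1)$ by \eqref{eq:dc_sde_a}, the $L^2$-distance to $\phi$ depends only on the one-dimensional marginals, so $v\equiv u$. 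Hence $u(t)\le 2C_T\int_0^t u(s)\ud s$ with $u(0)=|\xi-\phi(0)|=0$, and Gronwall's inequality yields $u\equiv 0$, i.e., $X_1=\phi$ in $L^2$.

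Once $X_1=\phi$ almost surely, the equation for $X_2=\Phi(X_1)$ reads $X_{2,t}=\xi+\int_0^t V_0(s,X_{2,s},\phi(s))\ud s$, which is precisely the ODE defining $\phi$; pathwise ODE uniqueness under Assumption~\ref{assump:lip_linear} gives $X_2=\phi$. An immediate induction then propagates $X_k=\phi$ for every $1\le k\le q$, proving $X_1=X_2=\cdots=X_q$ in $L^2$. The delicate step is the middle uniqueness argument: closing Gronwall requires both the distributional constraint (to swap $v$ for $u$) and the deterministic initial condition (to make $u(0)=0$), so that the coupled estimate actually collapses to zero. This is also what underlies the remark that $\int V_1\ud B$ is indispensable, since without Brownian noise the system \eqref{eq:dc_sde}--\eqref{eq:dc_sde_a} retains only the trivial deterministic solution $\phi$ and hence cannot generate genuinely random paths.
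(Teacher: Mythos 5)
Your proof is correct, but it takes a genuinely different route from the paper's. The paper never identifies the limit object: it bounds $\EE[\sup_{s\le t}|X_{s,k}-X_{s,k-1}|^2]$ by a Gronwall estimate in terms of the previous link's difference, iterates this down the chain to obtain a factorial bound of the form $\EE[\sup_s|X_{s,q}-X_{s,q-1}|^2]\le \frac{TC^{q-1}e^{(q-1)CT}}{(q-1)!}\,\EE[\sup_s|X_{s,2}-X_{s,1}|^2]$, and then invokes the invariance of the joint law of consecutive pairs to turn this into the self-referential inequality $A\le \frac{c^{q-1}}{(q-1)!}A$ with $q$ arbitrary, forcing $A=0$. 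You instead pin the solution down as the unique deterministic ODE curve $\phi$ in a single Gronwall step and propagate by induction; the crucial move is using the constraint \eqref{eq:dc_sde_a} to equate $u$ and $v$, which is legitimate precisely because the $L^2$-distance to a \emph{deterministic} $\phi$ is a functional of the one-dimensional marginals alone. Your argument is more elementary and yields a strictly stronger conclusion ($X_k=\phi$ explicitly, which is really the content of Remark~\ref{rem:degenerate_eg}), but it does hinge on $\xi$ being deterministic: if $\xi$ were a nondegenerate random variable shared by all nodes, $\phi$ would be random and the identity $v\equiv u$ would require the joint law of $(\tilde X_{1,s},\xi)$, not just the marginal. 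The paper's difference-based estimate needs no such restriction (the common initials simply cancel), and — more to the point — its chain-contraction template is reused verbatim as the last inequality in the proof of Theorem~\ref{thm:dependence_decay_formal}, which is why the paper sets the lemma up this way. Since Remark~\ref{rem:degenerate_eg} notes that identical initials are only admissible in the directed-chain framework when $\xi$ is deterministic, your extra hypothesis is harmless here, but you should state it explicitly.
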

\begin{proof}
We first write our directed chain dynamics in the integral form,
\begin{equation}
\label{eq:degenerate_sde}
X_{t, q} = \xi + \int_{0}^t V_0(X_{s, q}, X_{s, q-1})\ud s.
\end{equation}
Note that the current directed chain system with degenerate $V_1$ also has unique solutions. By the Lipschitz property on $V_0$, we compute
\begin{align*}
    \EE[\sup_{0\le s\le t}| X_{s,q}-X_{s,q-1} |^2] &\le \EE\bigg[\sup_{0\le s\le t} 2C_T \int_{0}^s (|X_{v,q}-X_{v,q-1}|^2 + |X_{v,q-1}-X_{v,q-2}|^2)\ud v\bigg] \\
    & \le C\cdot \EE\bigg[ \int_{0}^t \sup_{0\le v\le s} \bigg( |X_{v,q}-X_{v,q-1}|^2 + |X_{v,q-1}-X_{v,q-2}|^2 \bigg)\ud v \bigg] \\
    & \le C\cdot  \int_{0}^t \EE\big[\sup_{0\le v\le s} |X_{v,q}-X_{v,q-1}|^2 \big]\ud v  + C\cdot  \int_{0}^t\EE\big[ \sup_{0\le v\le s} |X_{v,q-1}-X_{v,q-2}|^2\big] \ud v  \\
    &\le C \cdot e^{CT} \int_{0}^t\EE\big[ \sup_{0\le v\le s} |X_{v,q-1}-X_{v,q-2}|^2\big] \ud v,
\end{align*}
where the third inequality comes from Fubini's theorem and Proposition 2.2 in \citet{ichiba2022smoothness}, and the last inequality follows from Gronwall's inequality. Iterating back to the beginning of the chain, we deduce
$$\EE[\sup_{0\le s\le T}| X_{s,q}-X_{s,q-1} |^2] \le \frac{TC^{q-1}e^{(q-1)CT}}{(q-1)!} \EE[\sup_{0\le s\le T}| X_{s,2}-X_{s,1} |^2].$$
According to the invariance of (joint) distribution (see \citet{detering2020directed, ichiba2022smoothness}), we get
$$\EE[\sup_{0\le s\le T}| X_{s,2}-X_{s,1} |^2] \le \frac{TC^{q-1}e^{(q-1)CT}}{(q-1)!} \EE[\sup_{0\le s\le T}| X_{s,2}-X_{s,1} |^2].$$
The constant $q$ can be arbitrarily large and hence the above inequality forms a contraction, which implies $$\EE[\sup_{0\le s\le T}| X_{s,2}-X_{s,1} |^2] = 0.$$
We then conclude $X_1=X_2=\cdots=X_q$ in the $L^2$ sense.
\end{proof}

Although the assumption of identical initials in Lemma \ref{lem:degenerate_case} is different from the general setting of directed chain SDEs, where initials should be i.i.d, it is consistent in the case that initials are deterministic. Therefore, the existence of non-degenerate $V_1$ becomes crucial, and we give the following necessary assumptions for factorial dependence decay property.

\begin{defn}[$\mc{C}_{b, \Lip}^{k,k}$]\label{def:ckk} We have the following definition for $\mc{C}_{b, \Lip}^{k,k}$:
\begin{itemize}
    \item[(a)] We use $\partial_x, \partial_y$ to denote the derivative with respect to the first and second Euclidean variables in $V_0, V_1$.
    \item[(b)] Let $V:\RR^N\times\RR^N \to \RR^{N}$ with components $V^1, \dots, V^N: \RR^N\times\RR^N\to\RR$. We say $V\in \mc{C}_{b, \Lip}^{1,1}(\RR^N\times\RR^N; \RR^N)$ if the following is true: for each $i=1,\dots,N$, $\partial_xV^i, \partial_yV^i$ exist. Moreover, assume the boundedness of the derivatives for all $(x,y)\in\RR^N\times\RR^N$,
    $$|\partial_xV^i(x, y)| + |\partial_yV(x,y)| \le C.$$
    In addition, suppose that $\partial_xV^i, \partial_yV^i$ are all Lipschitz in the sense that for all $(x,y)\in\RR^N\times\RR^N$,
    $$\begin{array}{c}
         |\partial_xV^i(x, y) - \partial_xV^i(x^\prime, y^\prime)| \le C(|x-x^\prime|+|y-y^\prime|),  \\
         |\partial_yV^i(x, y) - \partial_yV^i(x^\prime, y^\prime)| \le C(|x-x^\prime|+|y-y^\prime|),
    \end{array}$$
    and $V^i, \partial_xV^i, \partial_yV^i$ all have linear growth property,
    $$|V^i(x,y)|+ |\partial_xV^i(x, y)| + |\partial_yV^i(x, y)| < C_T(1+|x|+|y|),$$
    where $C_T$ is a constant depending only on $T$.
    \item[(c)] We write $V\in \mc{C}_{b, \Lip}^{k,k}(\RR^N\times\RR^N; \RR^N)$, if the following holds: for each $1,\dots,N$, and all multi-indices $\alpha, \beta$ on $\{1,\dots,N\}$ satisfying $|\alpha| + |\beta|\le k$, the derivative $\partial_x^{\alpha}\partial_y^{\beta}$ exists and is bounded, Lipschitz continuous, and satisfies linear growth condition.
    \item[(d)] We say $V_0\in \mc{C}_{b, \Lip}^{k,k}(\RR^N\times\RR^N)$ for short if $V_0:\RR^N\times\RR^N\to\RR^N$ satisfies (c). Let $V_1: \RR^N\times\RR^N \to \RR^{N\times d}$ with components $V_1^{1}, \dots, V_1^d:\RR^N\times\RR^N\to \RR^N$. We say $V_1\in \mc{C}_{b, \Lip}^{k,k}(\RR^N\times\RR^N)$ for short if $V_1^j \in \mc{C}_{b, \Lip}^{k,k}(\RR^N\times\RR^N)$ for every $ j=1,\dots,d$.
\end{itemize}
\end{defn}

\begin{assump}\label{assump:smoothness_density}
We emphasize two assumptions used for the existence and smoothness of the marginal densities of directed chain SDEs:
\begin{enumerate} 
    \item (Uniform ellipticity on $V_1$) Assume that there exists $\epsilon>0$ such that for all $\eta, x, \Tilde{x} \in \RR^N$,
    $$\eta^{\top} V_1(x, \Tilde{x})V_1(x, \Tilde{x})^{\top} \eta \ge \epsilon |\eta|^2.$$
    
    \item (Smoothness on $V_0, V_1$) Assume that $V_0, V_1 \in \mc{C}_{b, \Lip}^{k,k}(\RR^N, \RR^N)$ with $k\ge N+2$, where $V_0, V_1 \in \mc{C}_{b, \Lip}^{k,k}(\RR^N, \RR^N)$ is defined in Definition~\ref{def:ckk}.
\end{enumerate}
\end{assump}
Under Assumption \ref{assump:smoothness_density}, one can prove the existence of the density function of directed chain SDEs \citep[Theorem 4.3]{ichiba2022smoothness}. 

\begin{thm}\label{thm:dependence_decay_formal}
Suppose Assumption \ref{assump:smoothness_density} is satisfied. For every Lipschitz function $\varphi:\RR^N\to \RR$ with Lipschitz constant $K$, there exists a constant $c>0$ such that the difference between the conditional expectation of $\varphi(X_{t, q})$, given $X_{1}$ and the unconditional expectation $\varphi(X_{t, q})$ for all $t\in[0,T]$ is bounded, i.e.,
\begin{align}
    \EE\big[ \sup_{0\le t\le T}\big| \EE[ \varphi(X_{t,q}) |X_1] - \EE[ \varphi(X_{t,q})] \big|^2 \big] \le \frac{c^{q-1}}{(q-1)!}. \label{eq:factorial_decay}
\end{align}
\end{thm}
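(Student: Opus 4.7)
The plan is to compare $X_{t,q}$ against a coupled companion chain $X'_{t,q}$ whose first node $X'_1$ is an independent copy of $X_1$ (same law $m$) but which is thereafter propagated using the \emph{same} Brownian increments $B^2,\ldots,B^q$ as $X_q$. Defining $g_t(y) := \EE[\varphi(\Phi^{q-1}_t(y; B^2,\ldots,B^q))]$ for a path $y$, the local path-Markov property for directed chains from \citet{ichiba2022smoothness} gives $\EE[\varphi(X_{t,q})\mid X_1] = g_t(X_1)$, and the identity $\EE[\varphi(X_{t,q})] = \EE[g_t(X'_1)]$ (valid since $X_1 \stackrel{d}{=} X'_1$) lets one write
$$g_t(X_1) - \EE[g_t(X_1)] = \EE\bigl[\varphi(X_{t,q}) - \varphi(X'_{t,q})\bigm| X_1\bigr].$$
Lipschitz continuity of $\varphi$ with constant $K$, Jensen's inequality, and the generic bound $\sup_t \EE[f(t)\mid X_1] \le \EE[\sup_t f(t)\mid X_1]$ then reduce the claim to an $L^2$ estimate on the coupled path difference, namely $\EE[\sup_{t\le T} |X_{t,q}-X'_{t,q}|^2] \le c^{q-1}/(q-1)!$ (with $c$ absorbing a factor $K^2$).

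Next I would set $u_k(t) := \EE[\sup_{s\le t} |X_{s,k} - X'_{s,k}|^2]$ and establish the one-step recursion $u_q(t) \le A \int_0^t u_{q-1}(s)\,\ud s$ on $[0,T]$. Subtracting the integral forms of the SDEs for $X_{\cdot,q}$ and $X'_{\cdot,q}$ (which share $B^q$ and the coefficients $V_0,V_1$), then applying Doob's maximal inequality, the Burkholder--Davis--Gundy inequality, and the Lipschitz bounds of Assumption~\ref{assump:lip_linear}, yields
$$u_q(t) \le C\!\int_0^t u_q(s)\,\ud s + C\!\int_0^t u_{q-1}(s)\,\ud s,$$
after which Gronwall's lemma absorbs the first summand into a single constant $A$. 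Iterating the recursion $q-1$ times collapses the bound to a $(q-1)$-fold integral on the time simplex,
$$u_q(T) \le A^{q-1} \int_0^T\!\!\int_0^{s_{q-1}}\!\!\cdots\!\!\int_0^{s_2} u_1(s_1)\,\ud s_1\cdots\ud s_{q-1} \le u_1(T)\,\frac{(AT)^{q-1}}{(q-1)!},$$
and the base bound $u_1(T)\le C$ is just the uniform second-moment estimate on the invariant law of the neighborhood process, itself a consequence of the linear-growth part of Assumption~\ref{assump:lip_linear}. Combining with the opening reduction finishes the proof.

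The main obstacle will be the inductive Gronwall step: the supremum in time inside $u_q$ must be controlled simultaneously with the stochastic integral term, forcing a careful invocation of BDG, while both $V_0$ and $V_1$ depend on the neighborhood process $X_{\cdot,q-1}$ (feeding the recursion) and on $X_{\cdot,q}$ itself (producing the self-referential summand that Gronwall must absorb). A secondary subtlety is that $X_1$ is a \emph{path-valued} conditioning variable, so the reduction in the first paragraph invokes the local path-Markov property of directed chains from \citet{ichiba2022smoothness} rather than a naive pointwise Markov property; accordingly, the ellipticity and smoothness conditions in Assumption~\ref{assump:smoothness_density} enter only implicitly, via the cited well-posedness and density machinery, whereas the coupling estimate itself is powered entirely by the Lipschitz coefficients.
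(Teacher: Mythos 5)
Your proposal is correct and follows essentially the same route as the paper: couple $X_{t,q}=\Phi^q_t(X_1;\mathbf{B})$ with a companion chain started from an independent copy of $X_1$ but driven by the same Brownian motions, reduce via Jensen and the Lipschitz property of $\varphi$ to an $L^2$ bound on the coupled path difference, and obtain the factorial decay by the Gronwall-plus-iteration argument of Lemma~\ref{lem:degenerate_case}. The only difference is cosmetic: you spell out the BDG/Doob step needed to handle the stochastic integral in the non-degenerate recursion, which the paper leaves implicit by declaring the estimate ``verbatim'' to the degenerate-case lemma.
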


We shall first provide some interpretations for Theorem \ref{thm:dependence_decay_formal} before giving the proof. For random variables in space $C([0,T], \RR^N)$, there is no unique choice on how to measure their correlation or covariance. Here, we measure the difference between conditional expectation and unconditional expectation over a family of testing functions $\varphi$. Thus, we use the left-hand side in inequality \eqref{eq:factorial_decay} to measure the dependence between $X_{q}$ and $X_1$. 

\begin{proof}
Note that Assumption \ref{assump:smoothness_density} is a stronger version of Assumption \ref{assump:lip_linear}, and it not only ensures the existence and weak uniqueness of the solution, but also guarantees the existence of a smooth density which excludes the case of deterministic $X_{t,q}$. If $X_{q}$ and $X_1$ are independent, the left-hand side is zero for every Lipschitz function $\varphi$. The vice versa is also correct because of the exclusion of the deterministic case.

Let us start from the left-hand side in \eqref{eq:factorial_decay}, the difference between the conditional expectation of $\varphi$ and unconditional expectation can be bounded by 
\begin{align*}
    &\EE\big[ \sup_{0\le t\le T}\big| \EE[ \varphi(X_{t,q}) |X_1] - \EE[ \varphi(X_{t,q})] \big|^2 \big] \\
    & = \int_{C([0,T], \RR^N)}\sup_{0\le t\le T} \bigg|  \int_{C([0,T], \RR^N)} \bigg( \EE_{\bf{B}}\big[\varphi(\Phi_t^q(\omega; \mathbf{B}))\big] - \EE_{\bf{B}}\big[\varphi(\Phi_t^q(\tilde{\omega}; \mathbf{B}))\big] \bigg)  m(\ud \tilde{\omega}) \bigg|^2 m(\ud \omega) \\
    & \le \int_{C([0,T], \RR^N)^2} \sup_{0\le t\le T} \EE_{\mathbf{B}}\big[ \big| \varphi(\Phi_t^q(\omega; \mathbf{B})) - \varphi(\Phi_t^q(\tilde{\omega}; \mathbf{B})) \big|^2 \big]m(\ud \tilde{\omega}) m(\ud \omega) \\
    & \le K^2 \int_{C([0,T], \RR^N)^2} \EE_{\mathbf{B}}\big[\sup_{0\le t\le T}\big|\Phi_t^q(\omega; \mathbf{B}) -  \Phi_t^q(\tilde{\omega}; \mathbf{B}) \big|^2 \big]m(\ud \tilde{\omega}) m(\ud \omega) \\
    &\le \frac{c^{q-1}}{(q-1)!},
\end{align*}
for some positive constant $c$, where the proof of the last inequality is verbatim to the procedures in Lemma \ref{lem:degenerate_case}.
\end{proof}

\begin{rem}\label{rem:degenerate_eg}
We shall emphasize that the assumption $X_{0, 1}=X_{0, 2}=\cdots=X_{0, q}=\xi$ is not allowed under directed chain framework except for $\xi\equiv x\in\RR^N$ (the deterministic initial condition). This is quite common in practice, for instance, the investment returns usually start from 1. Given results from Lemma \ref{lem:degenerate_case} and equation \eqref{eq:factorial_decay}, we are able to conclude that
$$\EE[\sup_{0\le t\le T} |\varphi(X_1) - \EE[\varphi(X_1)]|^2] \le \frac{c^{q-1}}{(q-1)!}.$$
Here $q$ can be arbitrarily large, hence we conclude that $\EE[\sup_{0\le t\le T} |\varphi(X_1) - \EE[\varphi(X_1)]|^2] = 0$. The only possible solution for a directed-chain system is the deterministic case where we have deterministic initial conditions and degenerate $V_1$ (or we should call it ``ODE"). Brownian motion is the key ingredient to enrich the representability of our directed-chain systems.
\end{rem}


\section{Experimental Details}\label{app:experiment}
Both discriminative and predictive metrics involve training tasks, and we shall first list all implementing details of these metrics, which is universal for all experiments. Then, we provide training hyper-parameters and training details used in different experiments.
\subsection{Metrics}\label{app:experiment_metrics}
\paragraph{Discriminative Metric.}
We first generate the same amount of fake data paths as true data paths to avoid imbalance, and choose 80\% from both real and fake data as training data, leaving the rest 20\% as testing data. We use a two-layer LSTM classifier with \texttt{channels/2} as the size of the hidden state, where \texttt{channels} is the dimension of generated and real series. We will minimize the cross-entropy loss, and the optimization is done by Adam optimizer with a learning rate of 0.001 for 5000 iterations. The discriminative score is calculated by the difference between 0.5 and the prediction accuracy on testing data.

\paragraph{Predictive Metric.}
We first generate the same amount of fake data as true data, and use it as training data for the predictive metric, whereas true data is for testing. We use a two-layer LSTM sequential predictor with \texttt{channels/2} as the size of the hidden state, where \texttt{channels} is the dimension of generated and real series. Our objective function is $L^1$ distance between predicted sequences and true sequences. The predictor generates one-step future predictions in the last feature with the others as input. Optimization is done by Adam optimizer with a learning rate of 0.001 for 5000 iterations. The predictive score is reported as the $L^1$ distance (also interpreted as mean absolute error (MAE)) between the predictive sequences and true sequences on testing data.

\paragraph{Independence Metric.}
The independence score is computed by the maximum of the $L^1$ distance of cross-correlation matrices over the time period $[0,T]$. In practice, we consider the maximum over the time stamps $t\in\{0.1,0.2,0.3,0.4,0.5,0.6,0.7,0.8,0.9,1.0\}$. 

\subsection{Experiments}\label{app:experiment_details}
In all four experiments, we use feed-forward neural networks with two hidden layers of sizes [128,128] to parameterize the drift $V_0$ and diffusion coefficient $V_1$. For the purpose of fair comparisons, we use the same GAN structure for both neural SDEs and DC-GANs, i.e., the same Sig-Wasserstein GAN setup \eqref{eq:sig_mmd} or the combination of neural CDE and Sig-WGAN scheme \eqref{eq:combined_loss} as discriminators. We remark that DC-GANs can be adapted to \texttt{torchsde}\footnote{See the Python package \url{https://github.com/google-research/torchsde}.} framework and use their adjoint method for back-propagation.

\paragraph{Stochastic Opinion Dynamics.}
In this experiment, we only use Sig-Wasserstein GAN approach for the discriminator, and choose $m = 8$ as the truncation depth in \eqref{eq:sig_mmd}. We choose $N = 1$ and  $d = 3$ dimensional standard Brownian motion in the DC-GANs generator \eqref{eq:dc_sde-simulation}, a batch size of 1024, a learning rate of 0.001 decaying to one-tenth for every 500 steps, and train a total of 2000 steps. Training data and testing data are sampled by the Euler scheme \eqref{eq:dc_sde-simulation} with a sample size of 8192, and their initial distributions $\xi$ are drawn independently from a uniform distribution on $[-2, 2]$. 

\paragraph{Stochastic FitzHugh-Nagumo Model.}
The stochastic FitzHugh-Nagumo model is widely used in neuroscience for describing the neurons' interacting spiking, in particular, to capture the multimodality of neurons' interspike interval distribution. For $N$ neurons and $P$ different neuron populations, we denote by $p(i)=\alpha, \alpha\in\{1,\dots,P\}$, the population of $i$-th particle belongs to,  for $i\in\{1,\dots,N\}$. The state vector $(X_t^{t, N})_{t\in[0,T]}=(V_t^{i,N}, w_t^{i,N}, y_t^{i,N})_{t\in[0,T]}$ of neural $i$ follows a three-dimensional SDE:
\begin{align*}
    \ud X_t^{t, N} &= f_\alpha(t, X_t^{t, N})\ud t + g_\alpha(t, X_t^{t, N})\left[ \begin{array}{c}
         \ud W_t^i  \\
         \ud W_t^{i,y} 
    \end{array} \right] \\
    & \quad+ \sum_{\gamma=1}^P\frac{1}{N_\gamma}\sum_{j,p(j)=\gamma}\bigg( b_{\alpha\gamma}(X_t^{i,N}, X_t^{j,N})\ud t + \beta_{\alpha\gamma}(X_t^{i,N}, X_t^{j,N})\ud W_t^{i,\gamma} \bigg),
\end{align*}
where $N_\gamma$ denotes the number of neurons in population $\gamma$. For all $\gamma$ and $\alpha \in \{1,\dots, P\}$, $I^\alpha(t):=I$, $\forall t\in[0,T]$, $\forall \alpha$ for some constant value $I$, $f_\alpha$, $g_\alpha$, $b_{\alpha\gamma}$ and $\beta_{\alpha\gamma}$ are given by
$$f_\alpha(t, X_t^{i,N}) = \left[\begin{array}{c}
     V_t^{i,N}-\frac{(V_t^{i,N})^3}{3}-w_t^{i,N} + I^\alpha(t)  \\
     c_\alpha(V_t^{i,N} + a_\alpha - b_\alpha w_t^{i,N})  \\
     a^\alpha_r S_\alpha(V_t^{i,N})(1-y_t^{i,N}) - a_d^\alpha y_t^{i,N}
\end{array}\right], \quad g_\alpha(t, X_t^{i,N}) = \left[ \begin{array}{cc}
    \sigma_{\mathrm{ext}}^\alpha & 0 \\
    0 & 0 \\
    0 & \sigma_\alpha^y(V_t^{i,N}, y_t^{i,N})
\end{array} \right],$$
and 
$$b_{\alpha\gamma}(X_t^{i,N}, X_t^{j,N})=\left[ \begin{array}{c}
    -\Bar{J}_{\alpha\gamma}(V_t^{i,N}-V^{\alpha\gamma}_{\mathrm{rev}})y_t^{i,N}  \\
    0 \\
    0
\end{array} \right],\quad \beta_{\alpha\gamma}(X_t^{i,N}, X_t^{j,N}) = \left[\begin{array}{c}
     -\sigma^J_{\alpha\gamma}(V_t^{i,N} - V_{\mathrm{rev}}^{\alpha\gamma})y_t^{i, N}  \\
     0 \\
     0 
\end{array}\right].$$
The functions $S_\alpha$, $\mc{X}$ and $\sigma_\alpha^y$ are defined as
\begin{align*}
    & S_\alpha(V_t^{i,N}) = \frac{T^\alpha_{\max}}{1+e^{-\gamma_\alpha(V_t^{i,N} - V_T^{i,N})}}, \\
    & \mc{X}(y_t^{i,N}) = \mathds{1}_{y_t^{i,N}\in(0,1)} \Gamma e^{-\Lambda/(1-(2y_t^{i,N}-1)^2)},\\
    &\sigma_\alpha^y(V_t^{i,N}, y_t^{i,N}) = \sqrt{a_r^\alpha S_\alpha(V_t^{i,N})(1-y_t^{i,N}) + a_d^\gamma y_t^{i,N}} \times  \mc{X}(y_t^{i,N}),
\end{align*}
where $(W^i, W^{i,y}, W^{i,\gamma}), i=1,\dots,N$ are standard three-dimensional Brownian motions that are mutually independent. For sample paths produced by this model, we follow the parameter choices in line with \citet{reis2018},
$$
\begin{array}{ccccccc}
    V_0=0, & \sigma_{V_0} =0.4, & a=0.7, & b=0.8,& c=0.08, & I=0.5, & \sigma_{ext}=0.5, \\
    w_0=0.5, & \sigma_{w_0}=0.4, & V_{rev}=1, & a_r=1, & a_d=1, & T_{max}=1, &\lambda=0.2, \\
    y_0=0.3, & \sigma_{y_0}=0.05, & J=1, & \sigma_j=0.2, & V_T=2, & \Gamma=0.1,& \Lambda=0.5.
\end{array}
$$
The above choice produces the joint multimodal distribution of $V$ and $w$; see the figure below.
\begin{figure}[htpb]
    \centering
    \vspace{0.1in}
    \includegraphics[width=0.35\textwidth, trim = {6em 1em 6em 4em}, clip, keepaspectratio=True]{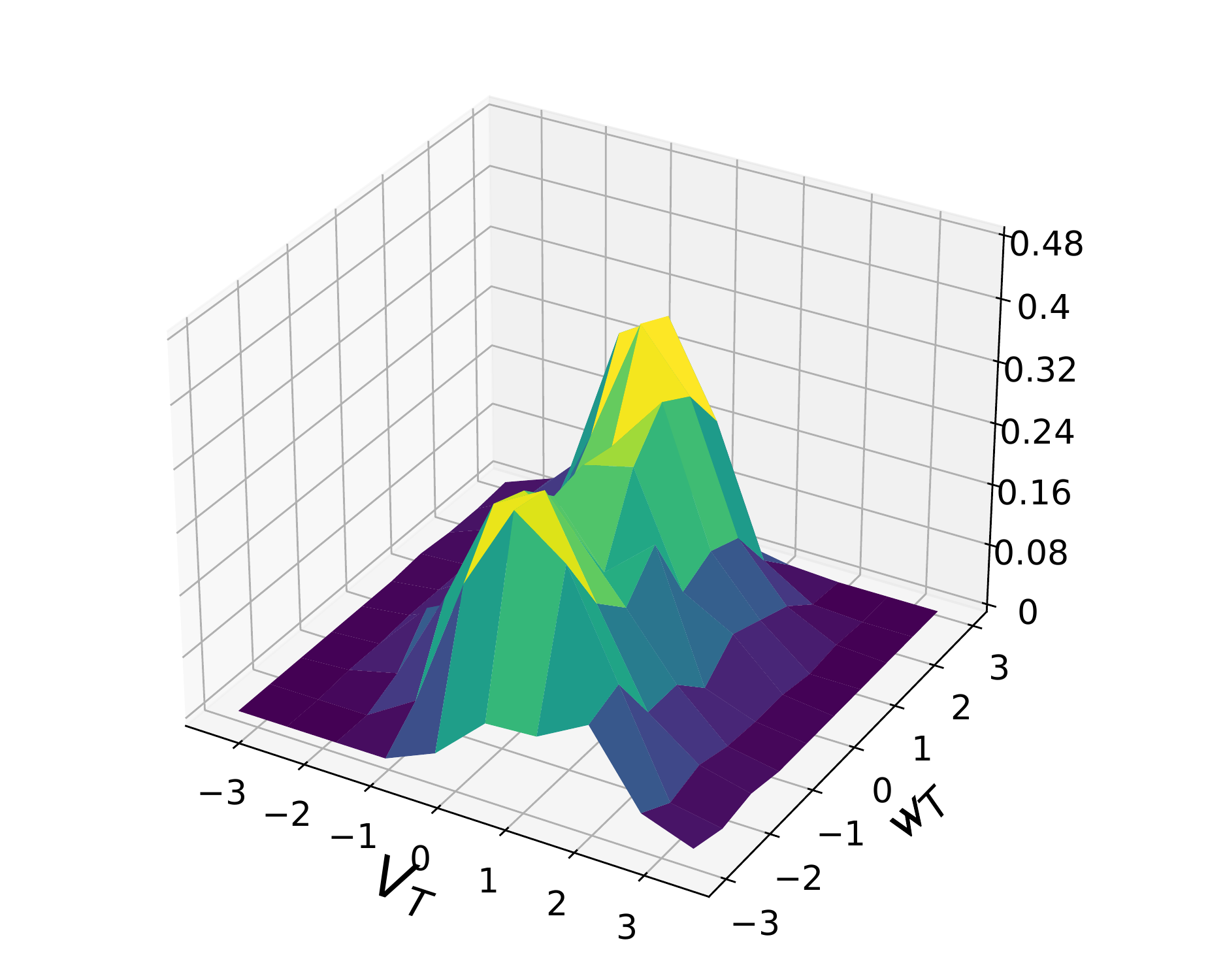}
    \hspace{30pt}
    \includegraphics[width=0.35\textwidth]{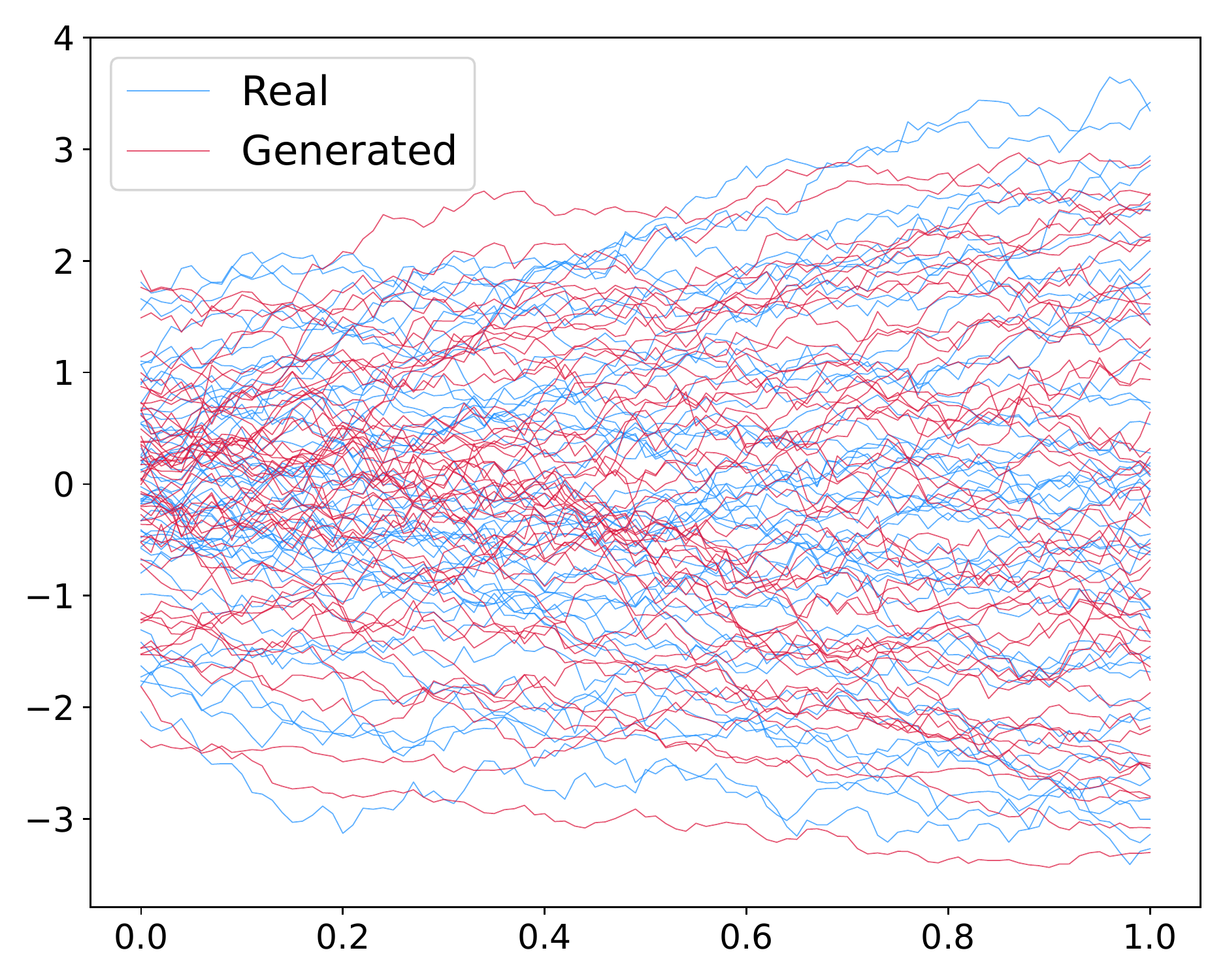}
    \vspace{0.2in}
    \caption{Stochastic FitzHugh-Nagumo Model (Example 2). Left subfigure shows the multimodal joint density of $V_T$ and $w_T$, and right subfigure shows the sample paths from the time-dependent model (blue) and from the DC-GANs (red).}
    \label{fig:fitzhugh-dist}
\end{figure}

All training and testing data are generated through the Euler scheme with the above parameters. In the training phase, we choose the Sig-Wasserstein GAN approach again for our discriminator and choose $m = 6$ as the truncation depth in \eqref{eq:sig_mmd}. We take $N = 3$ and $d = 5$ in our DC-GANs generator, and use a batch size of 1024 for training 2000 steps with a learning rate of 0.001 decaying to one-tenth every 500 steps. Training and testing data are generated by the Euler scheme, where the initial positions $\xi$ are drawn from a 3-dimensional Gaussian random variable with means $(0, 0.5, 0.3)$ and standard deviations $(0.4, 0.4, 0.05)$. 

\paragraph{Stock Price Time Series.}
In this real-world example, we use the six-dimensional stock price data of Google from 2004 to 2019. We segment them into sequences of length 24, which results in 3773 sequences as our time series data set. The combination of Neural CDEs and Signature MMD \eqref{eq:combined_loss} is used as the discriminator. For the purpose of a fair comparison, we use the same noise size $d$ and discriminator setup for both Neural SDEs and DC-GANs generators. In particular, for the Neural CDEs discriminator,  we set the dimension of the hidden process to be 16, and their coefficients are approximated by a feed-forward neural network with two hidden layers of size [128, 128]. For both DC-GANs and Neural SDEs generator, the Brownian motion's dimension is set at $ d = 10$; and for the Neural SDEs which embed stock prices data into a hidden space, we set its (the hidden space) dimension at 12. The batch size is chosen to be 128. Both generators and discriminators are trained using Adam optimizer. Both learning rates start at 0.0001 and decay to one-tenth after 2000 steps, the signature depth is chosen at $m =4$ in \eqref{eq:combined_loss} to alleviate the dimensional burden, and training steps are set as 4000. Our CTFP implementation follows the setup in \citet{deng2020modeling}, SigWGAN follows from \citet{ni2021sig} and TimeGAN implementation follows the setup in \citet{yoon2019time}.

{
\paragraph{Energy Consumption.}
In the real-world energy consumption example, we choose four electric and gas consumption time series from 02/2011-02/2013 and use daily data as a single time series, bringing 694 sequences with a length of 96. For both neural SDEs and DC-GANs, we use a ten-dimensional Brownian motion and neural nets with two hidden layers of size [128, 128] to estimate drift and diffusion coefficients. The batch size is 128, the training step is 4000, and the learning rate for the generator starts at 0.0001. In the case of using Neural CDEs as the discriminator, we use hidden size 16, [128, 128] as the hidden layers of neural nets estimating coefficients and 0.0001 as the starting learning rate of the discriminator. In the case of using SigWGAN, we consider signature depth 6. All learning rates decay to one-tenth after 2000 steps. Our CTFP implementation follows the setup in \citet{deng2020modeling}, SigWGAN follows from \citet{ni2021sig} and TimeGAN follows from \citet{yoon2019time}.
}


\end{document}